\theoremstyle{plain}
\newtheorem{theorem}{Theorem}[section]
\newtheorem{lemma}[theorem]{Lemma}
\theoremstyle{definition}
\theoremstyle{remark}
\icmltitlerunning{Best Possible Q-Learning}
\begin{document}

\twocolumn[
\icmltitle{Best Possible Q-Learning}



\icmlsetsymbol{equal}{*}

\begin{icmlauthorlist}
\icmlauthor{Jiechuan Jiang}{pku}
\icmlauthor{Zongqing Lu}{pku}
\end{icmlauthorlist}

\icmlaffiliation{pku}{School of Computer Science, Peking University}

\icmlcorrespondingauthor{Jiechuan Jiang}{jiechuan.jiang@pku.edu.cn}
\icmlcorrespondingauthor{Zongqing Lu}{zongqing.lu@pku.edu.cn}

\icmlkeywords{Multi-Agent Reinforcement Learning}

\vskip 0.3in
]



\printAffiliationsAndNotice{} 

\begin{abstract}
	
	Fully decentralized learning, where the global information, \textit{i.e.}, the actions of other agents, is inaccessible, is a fundamental challenge in cooperative multi-agent reinforcement learning. However, the convergence and optimality of most decentralized algorithms are not theoretically guaranteed, since the transition probabilities are non-stationary as all agents are updating policies simultaneously. To tackle this challenge, we propose \textit{best possible operator}, a novel decentralized operator, and prove that the policies of agents will converge to the optimal joint policy if each agent independently updates its individual state-action value by the operator. Further, to make the update more efficient and practical, we simplify the operator and prove that the convergence and optimality still hold with the simplified one. By instantiating the simplified operator, the derived fully decentralized algorithm, \textit{best possible Q-learning} (BQL), does not suffer from non-stationarity. Empirically, we show that BQL achieves remarkable improvement over baselines in a variety of cooperative multi-agent tasks. 
	
\end{abstract}

\section{Introduction}

Cooperative multi-agent reinforcement learning (MARL) trains a group of agents to maximize the cumulative shared reward, which has great significance for real-world applications, including logistics \citep{li2019cooperative}, traffic signal control \citep{xu2021hierarchically}, power dispatch \citep{wang2021multi}, and games \citep{vinyals2019grandmaster}. Although most existing methods follow the paradigm of centralized training and decentralized execution (CTDE), in many scenarios where the information of all agents is unavailable in the training period, each agent has to learn independently without centralized information. Thus, \textit{fully decentralized learning}, where the agents can only use local experiences without the actions of other agents, is highly desirable \cite{jiangi2q}. 

However, in fully decentralized learning, as other agents are treated as a part of the environment and are updating their policies simultaneously, the transition probabilities from the perspective of individual agents will be non-stationary. Thus, the convergence of most decentralized algorithms, \textit{e.g.}, independent Q-learning (IQL) \citep{tan1993multi}, is not theoretically guaranteed. Multi-agent alternate Q-learning (MA2QL) \citep{su2022ma2ql} guarantees the convergence to a Nash equilibrium, but the converged equilibrium may not be the optimal one when there are multiple equilibria \citep{zhang2021multi}. Distributed IQL \citep{lauer2000algorithm} and I2Q \cite{jiangi2q} can learn the optimal joint policy, yet are limited to deterministic environments. 
How to guarantee the convergence of the optimal joint policy in \textit{stochastic environments} remains open.

To tackle this challenge, we propose \textit{\textbf{best possible operator}}, a novel decentralized operator to update the individual state-action value of each agent, and prove that the policies of agents converge to the optimal joint policy under this operator. However, it is inefficient and thus impractical to perform best possible operator, because at each update it needs to compute the expected values of all possible transition probabilities and update the state-action value to be the maximal one. Therefore, we further propose \textit{simplified best possible operator}. At each update, the simplified operator only computes the expected value of one of the possible transition probabilities and monotonically updates the state-action value. We prove that the policies of agents also converge to the optimal joint policy under the simplified operator. We respectively instantiate the simplified operator with Q-table for tabular cases and with neural networks for complex environments. In the Q-table instantiation, non-stationarity is instinctively avoided, and in the neural network instantiation, non-stationarity in the replay buffer is no longer a drawback, but a necessary condition for convergence.

The proposed algorithm, \textit{\textbf{best possible Q-learning}} (\textbf{BQL}), is fully decentralized, without using the information of other agents. We evaluate BQL on a variety of multi-agent cooperative tasks, \textit{i.e.}, stochastic games, MPE-based differential games \citep{lowe2017multi}, Multi-Agent MuJoCo \citep{de2020deep}, SMAC \citep{samvelyan19smac}, and GRF \cite{kurach2020google}, covering fully and partially observable, deterministic and stochastic, discrete and continuous environments. Empirically, BQL substantially outperforms baselines. To the best of our knowledge, BQL is the first decentralized algorithm that guarantees the convergence to the global optimum in stochastic environments. More simplifications and instantiations of \textit{best possible operator} can be further explored. We believe BQL can be a new paradigm for fully decentralized learning.

\section{Method}

\subsection{Preliminaries}

Consider $N$-agent MDP $M_{\mathrm{env}}= < \mathcal{S},\mathcal{O}, \mathcal{A}, R, P_{\mathrm{env}}, \gamma>$ with the state space $\mathcal{S}$ and the joint action space $\mathcal{A}$. Each agent $i$ chooses an individual action $a_i$, and the environment transitions to the next state $s'$ by taking the joint action $\boldsymbol{a}$ with the transition probabilities $P_{\mathrm{env}}\left(s^{\prime} | s, \boldsymbol{a}\right)$. For simplicity of theoretical analysis, we assume all agents obtain the state $s$, though in practice each agent $i$ can make decisions using local observation $o_i \in \mathcal{O}$ or trajectory. All agents obtain a shared reward $r = R\left(s,s'\right) \in [r_{\min},r_{\max}]$ and learn to maximize the expected discounted return $\mathbb{E} \sum_{t=0}^{\infty} \gamma^{t} r_{t}$. In fully decentralized setting, $M_{\mathrm{env}}$ is partially observable, since each agent $i$ only observes its own action $a_i$ instead of the joint action $\boldsymbol{a}$. From the perspective of each agent $i$, there is an MDP $M_{i} = <\mathcal{S},\mathcal{A}_i, R, P_{i}, \gamma>$ with the individual action space $\mathcal{A}_i$ and the transition probabilities
\begin{equation}
\hspace{-3pt}
\label{eq:p}
 P_{i}\left(s^{\prime} | s, a_i\right)={\sum}_{\boldsymbol{a}_{-i}}P_{\mathrm{env}}\left(s^{\prime} | s, a_i,\boldsymbol{a}_{-i}\right)  \boldsymbol{\pi}_{-i}(\boldsymbol{a}_{-i} | s)
\end{equation}
where $\boldsymbol{\pi}_{-i}$ denotes the joint policy of all agents except agent $i$, similarly for $\boldsymbol{a}_{-i}$.
According to (\ref{eq:p}), the transition probabilities $P_{i}$ depend on the policies of other agents $\boldsymbol{\pi}_{-i}$. As other agents are updating their policies continuously, $P_i$ becomes \textit{non-stationary}. On the non-stationary transition probabilities, the convergence of independent Q-learning\footnote{For simplicity, we refer to the optimal value $Q^*$ as $Q$ in this paper, unless stated otherwise.}
\begin{equation}
\label{eq:q}
Q_i(s,a_i) = \mathbb{E}_{P_i(s'|s,a_i)}\left [r + \gamma \underset{a'_{i}}{\max}Q_i(s',a'_{i}) \right ]
\end{equation}
is not guaranteed, and how to learn the optimal joint policy in fully decentralized settings is quite a challenge. In the next section, we propose \textit{best possible operator}, a novel fully decentralized operator, which theoretically guarantees the convergence to the optimal joint policy in stochastic environments.

\subsection{Best Possible Operator}

First, let us consider the optimal joint Q-value
\begin{equation}
\label{eq:joint_q}
Q(s, \bm{a}) = \mathbb{E}_{P_{\mathrm{env}}(s'|s,\bm{a})}\left [r + \gamma \underset{\bm{a}'}{\max} Q(s', \bm{a}') \right ],
\end{equation}
which is the expected return of the optimal joint policy $\boldsymbol{\pi}^*(s) = \arg \max_{\boldsymbol{a}} Q(s,\boldsymbol{a}) $. Based on the optimal joint Q-value, for each agent $i$, we define $\max_{\boldsymbol{a}_{-i}}Q(s,a_i,\boldsymbol{a}_{-i})$, which follows the fixed point equation:
\begin{small}
\begin{align}
\notag
&\underset{\boldsymbol{a}_{-i}}{\max}Q(s,a_i,\boldsymbol{a}_{-i}) \\
= &\underset{\boldsymbol{a}_{-i}}{\max} \ \mathbb{E}_{P_{\mathrm{env}}\left(s^{\prime} | s, \boldsymbol{a}\right)  }\left[r + \gamma \underset{{a}'_{i}}{\max} \ \underset{\boldsymbol{a}'_{-i}}{\max}Q(s,{a}'_i,\boldsymbol{a}'_{-i}) \right ] \label{eq:them_eq2} \\
= & \mathbb{E}_{P_{\mathrm{env}}\left(s^{\prime} | s, a_i, \boldsymbol{\pi}^*_{-i}(s,a_i)\right)}\left[r + \gamma \underset{{a}'_{i}}{\max} \ \underset{\boldsymbol{a}'_{-i}}{\max}Q(s,{a}'_i,\boldsymbol{a}'_{-i}) \right ] \label{eq:them_eq3}
\end{align}
\end{small}where $\boldsymbol{\pi}_{-i}^*(s,a_i) = {\arg \max}_{\boldsymbol{a}_{-i}}Q(s,a_i,\boldsymbol{a}_{-i})$ is the optimal conditional joint policy of other agents given $a_i$. (\ref{eq:them_eq2}) is from taking $\max_{\bm{a}_{-i}}$ on both sides of (\ref{eq:joint_q}), and (\ref{eq:them_eq3}) is by folding  $\boldsymbol{\pi}_{-i}^*(s,a_i)$ into $P_{\mathrm{env}}$. Then we have the following lemma.
\begin{lemma}
\label{lemma:1}
If each agent $i$ learns the independent value function $Q_i(s,a_i) = \max_{\boldsymbol{a}_{-i}}Q(s,a_i,\boldsymbol{a}_{-i})$, and takes actions as $\arg \max_{a_i} Q_i(s,a_i)$, the agents will obtain the optimal joint policy when there is only one optimal joint policy\footnote{We can use the simple solution proposed in I2Q to deal with multiple optimal joint policies, which is included in Appendix~\ref{appendix:coordination}.}.
\end{lemma}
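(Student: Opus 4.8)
The plan is to show that, at every state $s$, the joint action obtained by letting each agent independently play $\arg\max_{a_i} Q_i(s,a_i)$ coincides with the unique optimal joint action $\boldsymbol{\pi}^*(s) = \arg\max_{\boldsymbol{a}} Q(s,\boldsymbol{a})$; since the argument runs state-by-state, this immediately yields that the induced joint policy is exactly the optimal joint policy. Everything rests on the elementary identity that for each agent $i$,
\[
\max_{a_i} Q_i(s,a_i) = \max_{a_i}\max_{\boldsymbol{a}_{-i}} Q(s,a_i,\boldsymbol{a}_{-i}) = \max_{\boldsymbol{a}} Q(s,\boldsymbol{a}),
\]
so every agent's individual optimum value equals the global joint optimum value. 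Note this uses only the definition $Q_i(s,a_i)=\max_{\boldsymbol{a}_{-i}}Q(s,a_i,\boldsymbol{a}_{-i})$ and not the fixed-point characterization, so the lemma is really a statement about nested maximization.

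First I would fix a state $s$ and write $\boldsymbol{a}^* = (a_1^*,\dots,a_N^*) = \arg\max_{\boldsymbol{a}} Q(s,\boldsymbol{a})$, which is well defined by the single-optimum hypothesis. I would then check that each component $a_i^*$ lies in $\arg\max_{a_i} Q_i(s,a_i)$: on one hand $Q_i(s,a_i^*) = \max_{\boldsymbol{a}_{-i}} Q(s,a_i^*,\boldsymbol{a}_{-i}) \ge Q(s,a_i^*,\boldsymbol{a}^*_{-i}) = \max_{\boldsymbol{a}} Q(s,\boldsymbol{a})$, while on the other hand $Q_i(s,a_i^*) \le \max_{\boldsymbol{a}} Q(s,\boldsymbol{a})$ by the displayed identity; hence $Q_i(s,a_i^*) = \max_{a_i} Q_i(s,a_i)$, so $a_i^*$ is an individual maximizer for agent $i$.

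Next I would use uniqueness to upgrade ``$a_i^*$ is a maximizer'' to ``$a_i^*$ is the only maximizer''. Suppose some $\tilde a_i \neq a_i^*$ also attained $Q_i(s,\tilde a_i) = \max_{\boldsymbol{a}} Q(s,\boldsymbol{a})$. Then $\max_{\boldsymbol{a}_{-i}} Q(s,\tilde a_i,\boldsymbol{a}_{-i})$ equals the global optimum, so some $\tilde{\boldsymbol{a}}_{-i}$ satisfies $Q(s,\tilde a_i,\tilde{\boldsymbol{a}}_{-i}) = \max_{\boldsymbol{a}} Q(s,\boldsymbol{a})$; this makes $(\tilde a_i,\tilde{\boldsymbol{a}}_{-i})$ a second optimal joint action differing from $\boldsymbol{a}^*$ in coordinate $i$, contradicting the single-optimum hypothesis. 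Therefore $\arg\max_{a_i} Q_i(s,a_i) = \{a_i^*\}$ for every $i$, the independent greedy choices assemble into $\boldsymbol{a}^*$, and, as $s$ was arbitrary, the agents realize $\boldsymbol{\pi}^*$.

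I expect the only delicate point to be the role of the uniqueness assumption. The inequalities of the first step guarantee that each $a_i^*$ sits among the individual maximizers, but without uniqueness the sets $\arg\max_{a_i} Q_i(s,\cdot)$ could contain spurious actions belonging to \emph{different} optimal joint actions, so the independently chosen components need not coordinate into any single optimal joint action (the classic miscoordination problem). The single-optimum hypothesis is precisely what eliminates these spurious maximizers, which is why the general multi-optimum case must be handled separately by the I2Q-style fix deferred to the appendix.
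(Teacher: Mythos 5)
Your proof is correct and takes essentially the same route as the paper's: the key identity $\max_{a_i}\max_{\boldsymbol{a}_{-i}}Q(s,a_i,\boldsymbol{a}_{-i})=\max_{\boldsymbol{a}}Q(s,\boldsymbol{a})$ combined with the single-optimum hypothesis. The paper compresses this into one sentence, while you explicitly unpack the step it leaves implicit --- that uniqueness rules out spurious individual maximizers, so the independent greedy choices must assemble into the unique optimal joint action.
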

\begin{proof}
\vspace{-0.30cm}
As $\max_{{a}_{i}} \max_{\boldsymbol{a}_{-i}}Q(s,{a}_i,\boldsymbol{a}_{-i}) = \max_{\bm{a}}Q(s, \bm{a})$ and there is only one optimal joint policy, $\arg \max_{a_i} Q_i(s,a_i)$ is the action of agent $i$ in the optimal joint action $\bm{a}$.
\end{proof}
According to Lemma \ref{lemma:1}, to obtain the optimal joint policy is to let each agent $i$ learn the value function $Q_i(s,a_i) = \max_{\boldsymbol{a}_{-i}}Q(s,a_i,\boldsymbol{a}_{-i})$. To this end, we propose \textit{a new operator} to update $Q_i$ in a fully decentralized way:
\begin{small}
\begin{equation}
\label{eq:bql}
Q_i(s,a_i) = \underset{P_i(\cdot|s,a_i)}{\max}\mathbb{E}_{P_i(s'|s,a_i)}\left [r + \gamma \underset{a'_{i}}{\max}Q_i(s',a'_{i}) \right ].
\end{equation}
\end{small}Given $s$ and $a_i$, there will be numerous $P_i(s'|s,a_i)$ due to different other agents' policies $\boldsymbol{\pi}_{-i}$. To reduce the complexity, we only consider the deterministic policies, because when there is only one optimal joint policy, the optimal joint policy must be deterministic \citep{puterman1994markov}. So the operator (\ref{eq:bql}) takes the maximum only over the transition probabilities $P_i(s'|s,a_i)$ under \textit{deterministic} $\boldsymbol{\pi}_{-i}$.
Intuitively, the operator continuously pursues the `best possible expected return', until $Q_i$ reaches the optimal expected return $\max_{\boldsymbol{a}_{-i}}Q(s,a_i,\boldsymbol{a}_{-i})$, so we name the operator (\ref{eq:bql}) \textbf{\textit{best possible operator}}. In the following, we theoretically prove that $Q_i(s,a_i)$ converges to $\max_{\boldsymbol{a}_{-i}}Q(s,a_i,\boldsymbol{a}_{-i})$ under best possible operator, thus the agents learn the optimal joint policy. Let $Q_i^k(s,a_i)$ denote the value function in the update $k$ and $Q_i(s,a_i) := Q_i^{\infty}(s,a_i)$. Then, we have the following lemma.
\begin{lemma}
\label{lemma:2}
If $Q_i^0$ is initialized to be the minimal return $\frac{r_{\min}}{1-\gamma}$, $\max_{\boldsymbol{a}_{-i}}Q(s,a_i,\boldsymbol{a}_{-i}) \geq Q^k_i(s,a_{i}), \forall s, a_i, \forall k, $ under best possible operator. 
\end{lemma}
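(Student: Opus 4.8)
The plan is to prove the bound $\max_{\boldsymbol{a}_{-i}}Q(s,a_i,\boldsymbol{a}_{-i}) \geq Q_i^k(s,a_i)$ by induction on the update index $k$. For the base case $k=0$, I would note that $Q(s,\boldsymbol{a})$ is the discounted return of the optimal joint policy and every reward satisfies $r \geq r_{\min}$, so $Q(s,\boldsymbol{a}) \geq \frac{r_{\min}}{1-\gamma}$ for every $\boldsymbol{a}$. Taking the maximum over $\boldsymbol{a}_{-i}$ then gives $\max_{\boldsymbol{a}_{-i}}Q(s,a_i,\boldsymbol{a}_{-i}) \geq \frac{r_{\min}}{1-\gamma} = Q_i^0(s,a_i)$, which establishes the inequality at $k=0$.

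For the inductive step, the first thing I would do is re-express the best possible operator. Since the maximum in (\ref{eq:bql}) ranges only over transition probabilities induced by deterministic policies $\boldsymbol{\pi}_{-i}$, and the transition out of the fixed pair $(s,a_i)$ depends only on the action choice $\boldsymbol{a}_{-i}=\boldsymbol{\pi}_{-i}(s)$ of the other agents at that state, maximizing over $P_i(\cdot|s,a_i)$ is equivalent to maximizing over the joint action $\boldsymbol{a}_{-i}$ with $P_i(s'|s,a_i)=P_{\mathrm{env}}(s'|s,a_i,\boldsymbol{a}_{-i})$. This lets me rewrite the update as
\begin{equation*}
Q_i^{k+1}(s,a_i) = \max_{\boldsymbol{a}_{-i}} \mathbb{E}_{P_{\mathrm{env}}(s'|s,a_i,\boldsymbol{a}_{-i})}\left[r + \gamma \max_{a'_i} Q_i^k(s',a'_i)\right].
\end{equation*}

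Next I would apply the induction hypothesis inside the expectation. By assumption $Q_i^k(s',a'_i) \leq \max_{\boldsymbol{a}'_{-i}}Q(s',a'_i,\boldsymbol{a}'_{-i})$ for all $s',a'_i$, and maximizing over $a'_i$ yields $\max_{a'_i}Q_i^k(s',a'_i) \leq \max_{a'_i}\max_{\boldsymbol{a}'_{-i}}Q(s',a'_i,\boldsymbol{a}'_{-i}) = \max_{\boldsymbol{a}'}Q(s',\boldsymbol{a}')$. Because both the expectation and the outer maximum over $\boldsymbol{a}_{-i}$ are monotone in the integrand, substituting this bound gives
\begin{equation*}
Q_i^{k+1}(s,a_i) \leq \max_{\boldsymbol{a}_{-i}} \mathbb{E}_{P_{\mathrm{env}}(s'|s,a_i,\boldsymbol{a}_{-i})}\left[r + \gamma \max_{\boldsymbol{a}'}Q(s',\boldsymbol{a}')\right].
\end{equation*}
Finally I would recognize the right-hand side as exactly $\max_{\boldsymbol{a}_{-i}}Q(s,a_i,\boldsymbol{a}_{-i})$ via the fixed-point equation (\ref{eq:them_eq2}), closing the induction.

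The step I expect to require the most care is the re-expression of the operator: I must justify that restricting to deterministic $\boldsymbol{\pi}_{-i}$ and examining only the single transition out of $(s,a_i)$ makes the maximum over induced $P_i$ coincide with a plain maximum over $\boldsymbol{a}_{-i}$, so that the operator lines up term-for-term with the right-hand side of (\ref{eq:them_eq2}). Everything after that is routine monotonicity of $\mathbb{E}[\cdot]$ and of $\max$.
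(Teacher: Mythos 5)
Your proof is correct and takes essentially the same route as the paper's: induction on $k$ via the fixed-point equation (\ref{eq:them_eq2}), together with the identification of the deterministic-policy-induced transitions $P_i(\cdot|s,a_i)$ with the family $\{P_{\mathrm{env}}(\cdot|s,a_i,\boldsymbol{a}_{-i})\}$ --- a fact the paper uses only implicitly when it lower-bounds the $\max_{\boldsymbol{a}_{-i}}$ term by the argmax transition $P^*_i(\cdot|s,a_i)$, and which you rightly flag and justify explicitly. Your substitution of the inductive hypothesis into the integrand by monotonicity of $\mathbb{E}$ and $\max$ is slightly cleaner bookkeeping than the paper's difference-of-maxima estimate evaluated at $a'^*_i=\arg\max_{a'_i}Q^{k-1}_i(s',a'_i)$, but the underlying argument is the same.
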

\begin{proof}
	\vspace{-0.30cm}
	We prove the lemma by induction. First, as $Q_i^0$ is initialized to be the minimal return, $\max_{\boldsymbol{a}_{-i}}Q(s,a_i,\boldsymbol{a}_{-i}) \geq Q^0_i(s,a_{i})$. Then, suppose $\max_{\boldsymbol{a}_{-i}}Q(s,a_i,\boldsymbol{a}_{-i}) \geq Q^{k-1}_i(s,a_{i}),\, \forall s, a_i$. By denoting $\arg \max_{P_i(s'|s,a_i)}\mathbb{E}_{P_i(s'|s,a_i)}\left [r + \gamma \max_{a'_{i}}Q^{k-1}_i(s',a'_{i})\right]$ as $P^*_i(s'|s,a_i)$, we have
	\begin{small}
	\begin{align*}
	&\underset{\boldsymbol{a}_{-i}}{\max}Q(s,a_i,\boldsymbol{a}_{-i}) - Q^k_i(s,a_i)\\
	=  & \underset{\boldsymbol{a}_{-i}}{\max}\sum_{s'}P_{\mathrm{env}}\left(s'  | s, a_i, \boldsymbol{a}_{-i}\right) \left[r + \gamma \max_{{a}'_i}\underset{\boldsymbol{a}'_{-i}}{\max}Q(s',{a}'_i,\boldsymbol{a}'_{-i})\right] \\
	& \qquad  \qquad  - \sum_{s'}P^*_i(s'|s,a_i)\left[r + \gamma \underset{a'_{i}}{\max}Q^{k-1}_i(s',a'_{i})\right]\\
	\geq &\sum_{s'}P^*_i(s'|s,a_i) \left[r + \gamma \max_{{a}'_i}\underset{\boldsymbol{a}'_{-i}}{\max}Q(s',{a}'_i,\boldsymbol{a}'_{-i})\right] \\
	& \qquad  \qquad  - \sum_{s'}P^*_i(s'|s,a_i)\left[r + \gamma \underset{a'_{i}}{\max}Q^{k-1}_i(s',a'_{i})\right]\\
	 = & \gamma \sum_{s'}P^*_i(s'|s,a_i)\max_{{a}'_i}\left(\underset{\boldsymbol{a}'_{-i}}{\max}Q(s',{a}'_i,\boldsymbol{a}'_{-i}) - Q^{k-1}_i(s',a'_{i})\right)\\
	\geq & \gamma \sum_{s'}P^*_i(s'|s,a_i)\left(\underset{\boldsymbol{a}'_{-i}}{\max}Q(s',{a}'^{*}_i,\boldsymbol{a}'_{-i}) -Q^{k-1}_i(s',{a}'^{*}_i)\right) \\
	\geq & 0, 
	\end{align*}
	\end{small}where ${a}'^{*}_i = \arg \max_{a'_i} Q^{k-1}_i(s',{a}'_i)$. Thus, it holds in the update $k$. By the principle of induction, the lemma holds for all updates.
\end{proof}
Intuitively, $ \max_{\boldsymbol{a}_{-i}}Q(s,a_i,\boldsymbol{a}_{-i}) $ is the optimal expected return after taking action $a_i$, so it is the upper bound of $Q_i(s,a_i)$. Further, based on Lemma \ref{lemma:2}, we have the following lemma.
\begin{lemma}
	\label{theorem:1}
	$Q_i(s,a_i)$ converges to $\max_{\boldsymbol{a}_{-i}}Q(s,a_i,\boldsymbol{a}_{-i})$ under best possible operator.
\end{lemma}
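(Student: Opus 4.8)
The plan is to combine the upper bound from Lemma~\ref{lemma:2} with a monotonicity argument to obtain convergence of the sequence $\{Q_i^k\}$, and then to identify its limit with the upper bound $\overline{Q}_i(s,a_i) := \max_{\boldsymbol{a}_{-i}}Q(s,a_i,\boldsymbol{a}_{-i})$ through a fixed-point/contraction argument. Write $T$ for the best possible operator of (\ref{eq:bql}), so that $Q_i^{k} = T Q_i^{k-1}$.

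First I would establish that $T$ is monotone: if $Q_i \le \widetilde{Q}_i$ pointwise, then $\max_{a_i'} Q_i(s',a_i') \le \max_{a_i'}\widetilde{Q}_i(s',a_i')$, so for every fixed $P_i$ the inner expectation can only increase, and taking $\max_{P_i}$ preserves the inequality, giving $TQ_i \le T\widetilde{Q}_i$. Starting from the minimal-return initialization, a one-line computation shows $Q_i^1 \ge Q_i^0$ (since $r \ge r_{\min}$ and $r_{\min} + \gamma\frac{r_{\min}}{1-\gamma} = \frac{r_{\min}}{1-\gamma}$); applying monotonicity inductively then yields $Q_i^{k} \ge Q_i^{k-1}$ for all $k$. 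Hence $\{Q_i^k(s,a_i)\}_k$ is non-decreasing, and by Lemma~\ref{lemma:2} it is bounded above by $\overline{Q}_i(s,a_i)$. Being monotone and bounded, it converges pointwise to a limit $Q_i^{\infty} \le \overline{Q}_i$, and by continuity of $T$ (a finite max of affine maps in the greedy value $\max_{a_i'}Q_i(s',a_i')$) the limit satisfies $Q_i^{\infty} = T Q_i^{\infty}$.

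It then remains to show $Q_i^{\infty} = \overline{Q}_i$. The key observation is that $\overline{Q}_i$ is itself a fixed point of $T$: rewriting $\max_{a_i'}\max_{\boldsymbol{a}_{-i}'}Q(s',a_i',\boldsymbol{a}_{-i}') = \max_{a_i'}\overline{Q}_i(s',a_i')$ in (\ref{eq:them_eq2}), and noting that maximizing over deterministic $\boldsymbol{\pi}_{-i}$ at the single state $s$ is exactly maximizing over $\boldsymbol{a}_{-i}$, we get $\max_{\boldsymbol{a}_{-i}}\mathbb{E}_{P_{\mathrm{env}}(s'|s,a_i,\boldsymbol{a}_{-i})}[\,\cdot\,] = \max_{P_i}\mathbb{E}_{P_i}[\,\cdot\,]$, so (\ref{eq:them_eq2}) reads precisely $\overline{Q}_i = T\overline{Q}_i$. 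Since both $Q_i^{\infty}$ and $\overline{Q}_i$ are fixed points, I would close with a $\gamma$-contraction estimate: letting $\Delta = \sup_{s,a_i}(\overline{Q}_i - Q_i^{\infty})$ and choosing the maximizing $P_i^*$ for $\overline{Q}_i$ at $(s,a_i)$, the telescoping used in Lemma~\ref{lemma:2} gives $\overline{Q}_i(s,a_i) - Q_i^{\infty}(s,a_i) \le \gamma\,\mathbb{E}_{P_i^*}[\max_{a_i'}(\overline{Q}_i - Q_i^{\infty})] \le \gamma\Delta$, whence $\Delta \le \gamma\Delta$ and therefore $\Delta = 0$.

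The main obstacle is identifying the limit rather than proving its existence: existence follows routinely from monotone convergence, but pinning down $Q_i^{\infty} = \overline{Q}_i$ requires (i) verifying that $\overline{Q}_i$ is a genuine fixed point of $T$, which hinges on the equivalence between maximizing over deterministic $\boldsymbol{\pi}_{-i}$ and maximizing over $\boldsymbol{a}_{-i}$ -- valid because the continuation value depends only on $s'$ and the operator optimizes each $(s,a_i)$ independently -- and (ii) the uniqueness/contraction step, where care is needed to compare the two outer maxima over $P_i$ via the inequality $\max_P f(P) - \max_P g(P) \le \max_P\,(f-g)(P)$.
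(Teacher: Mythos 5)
Your proof is correct, but it is organized differently from the paper's. The paper proves Lemma~\ref{theorem:1} in a single stroke: invoking Lemma~\ref{lemma:2} so that the sup-norm error $\big\|\max_{\boldsymbol{a}_{-i}}Q(s,a_i,\boldsymbol{a}_{-i}) - Q^k_i(s,a_i)\big\|_{\infty}$ is a signed maximum, it lower-bounds $Q^k_i(s,a_i)=\mathbb{E}_{P^*_i}[\,\cdot\,]$ by the value of the (admissible but possibly suboptimal) candidate $P_{\mathrm{env}}\left(s'\,|\,s,a_i,\boldsymbol{\pi}^*_{-i}\right)$, cancels the reward terms, and applies $\max f - \max g \le \max(f-g)$ to obtain the one-step contraction $\|\overline{Q}_i - Q^k_i\|_\infty \le \gamma \|\overline{Q}_i - Q^{k-1}_i\|_\infty$, hence geometric convergence at rate $\gamma^k$ with no need for monotonicity of the iterates, continuity of the operator, or any fixed-point identification. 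You instead split the argument into three parts: existence of a limit via operator monotonicity and the minimal initialization (your check that $r_{\min}+\gamma\frac{r_{\min}}{1-\gamma}=\frac{r_{\min}}{1-\gamma}$ is exactly right), verification that $\overline{Q}_i$ is a fixed point of $T$ via (\ref{eq:them_eq2}) and the equivalence of maximizing over deterministic $\boldsymbol{\pi}_{-i}$ with maximizing over $\boldsymbol{a}_{-i}$ at each $(s,a_i)$ separately (the same point the paper exploits when folding $\boldsymbol{\pi}^*_{-i}(s,a_i)$ into $P_{\mathrm{env}}$ in (\ref{eq:them_eq3})), and uniqueness via the estimate $\Delta \le \gamma\Delta$. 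The substantive ingredients coincide -- Lemma~\ref{lemma:2}, the fixed-point equation for $\overline{Q}_i$, and the max-difference inequality -- so the difference is in the decomposition: the paper's route is shorter and quantitative (an explicit $\gamma^k$ rate), whereas yours proves convergence without a rate but with a skeleton (monotone convergence plus limit identification) that is essentially the one the paper itself deploys later for the simplified operator in Lemma~\ref{theorem:2}, where monotonicity is enforced by construction through (\ref{eq:bql-2}) rather than derived; in that sense your argument transfers more directly to the simplified setting, at the cost of the extra continuity and fixed-point bookkeeping that the direct contraction avoids.
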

\begin{proof} 
	\vspace{-0.30cm}
	For clear presentation, we use $P_{\mathrm{env}}\left(s'  | s, a_i, \boldsymbol{\pi}^*_{-i}\right)$ to denote $P_{\mathrm{env}}\left(s'  | s, a_i, \boldsymbol{\pi}^*_{-i}(s,a_i)\right)$. From (\ref{eq:them_eq3}) and (\ref{eq:bql}), we have
	\begin{small}
	\begin{align*}
	&\Big\|\underset{\boldsymbol{a}_{-i}}{\max}Q(s,a_i,\boldsymbol{a}_{-i}) - Q^k_i(s,a_i) \Big\|_{\infty}\\
	= & \underset{s, a_i }{ \max } \left( \sum_{s'}P_{\mathrm{env}}\left(s'  | s, a_i, \boldsymbol{\pi}^*_{-i}\right) \left[r + \gamma \max_{{a}'_i}\underset{\boldsymbol{a}'_{-i}}{\max}Q(s',{a}'_i,\boldsymbol{a}'_{-i})\right] \right.\\
	&   \left. - \sum_{s'}P^*_i(s'|s,a_i)\left[r + \gamma \underset{a'_{i}}{\max}Q^{k-1}_i(s',a'_{i})\right] \right)  \leftarrow (\text{Lemma \ref{lemma:2}})\\
	\leq & \underset{s, a_i }{ \max } \left( \sum_{s'}P_{\mathrm{env}}\left(s'  | s, a_i, \boldsymbol{\pi}^*_{-i}\right) \left[r + \gamma \max_{{a}'_i}\underset{\boldsymbol{a}'_{-i}}{\max}Q(s',{a}'_i,\boldsymbol{a}'_{-i})\right] \right.\\
	& \quad  \left. - \sum_{s'}P_{\mathrm{env}}\left(s'  | s, a_i, \boldsymbol{\pi}^*_{-i}\right)\left[r + \gamma \underset{a'_{i}}{\max}Q^{k-1}_i(s',a'_{i})\right] \right)\\
	\leq & \gamma \underset{s', a'_i }{ \max } \left(\underset{\boldsymbol{a'}_{-i}}{\max}Q(s',a'_i,\boldsymbol{a'}_{-i}) - Q^{k-1}_i(s',a'_i)\right)\\
	 = & \gamma \Big\|\underset{\boldsymbol{a}_{-i}}{\max}Q(s,a_i,\boldsymbol{a}_{-i}) - Q^{k-1}_i(s,a_i) \Big\|_{\infty}.
	\end{align*}
	\end{small}We have $\left\|\max_{\boldsymbol{a}_{-i}}Q(s,a_i,\boldsymbol{a}_{-i}) - Q^k_i(s,a_i) \right\|_{\infty} \leq \gamma^k\left\|\max_{\boldsymbol{a}_{-i}}Q(s,a_i,\boldsymbol{a}_{-i}) - Q^0_i(s,a_i) \right\|_{\infty} $. Let $k \rightarrow \infty$, then $Q_i(s,a_i) \rightarrow \max_{\boldsymbol{a}_{-i}}Q(s,a_i,\boldsymbol{a}_{-i}) $, thus the lemma holds.
\end{proof}

According to Lemma~\ref{lemma:1} and \ref{theorem:1}, we immediately have:
\begin{theorem}
	\label{theorem:3}
	The agents learn the optimal joint policy under best possible operator.
\end{theorem}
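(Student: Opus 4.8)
The plan is to derive the theorem as a direct composition of the two preceding lemmas, since together they supply both the target value each agent converges to and the guarantee that acting greedily on that target recovers the global optimum. First I would appeal to Lemma~\ref{theorem:1}, which establishes that, starting from the pessimistic initialization $Q_i^0 = \frac{r_{\min}}{1-\gamma}$, the best possible operator drives each agent's value function to the fixed point $Q_i(s,a_i) = \max_{\boldsymbol{a}_{-i}} Q(s,a_i,\boldsymbol{a}_{-i})$ as $k \to \infty$. The feature I would emphasize here is that the operator~(\ref{eq:bql}) maximizes over all transition probabilities $P_i$ induced by deterministic $\boldsymbol{\pi}_{-i}$, so each agent's update never references the other agents' current policies; the $N$ convergence statements are therefore genuinely decoupled and hold simultaneously even though all agents are learning at once.

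Next I would invoke Lemma~\ref{lemma:1}: once every $Q_i$ equals $\max_{\boldsymbol{a}_{-i}} Q(s,a_i,\boldsymbol{a}_{-i})$, the identity $\max_{a_i}\max_{\boldsymbol{a}_{-i}} Q(s,a_i,\boldsymbol{a}_{-i}) = \max_{\boldsymbol{a}} Q(s,\boldsymbol{a})$ forces $\arg\max_{a_i} Q_i(s,a_i)$ to be exactly the $i$-th component of the optimal joint action $\boldsymbol{\pi}^*(s) = \arg\max_{\boldsymbol{a}} Q(s,\boldsymbol{a})$. Collecting these components over all agents yields the joint greedy policy, which coincides with $\boldsymbol{\pi}^*$, completing the argument.

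The step I would watch most carefully is the passage from ``each agent individually recovers its marginal value'' to ``the concatenated greedy actions form a coherent joint action.'' This is precisely where the single-optimal-policy hypothesis does the work: if there were two distinct maximizers $\boldsymbol{a}$ and $\tilde{\boldsymbol{a}}$ of $Q(s,\cdot)$, different agents could pick components from different optima, and the concatenation $(\arg\max_{a_1} Q_1, \dots, \arg\max_{a_N} Q_N)$ need not be optimal or even well-coordinated. Under uniqueness this mismatch cannot arise, so I would simply cite the assumption (deferring the general case to the coordination mechanism referenced in the footnote) rather than attempt a constructive tie-breaking argument here. With that hypothesis in place the theorem is an immediate corollary, requiring no new estimates beyond what the two lemmas already provide.
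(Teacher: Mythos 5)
Your proposal is correct and follows essentially the same route as the paper, which also obtains the theorem immediately by composing Lemma~\ref{theorem:1} (convergence of each $Q_i$ to $\max_{\boldsymbol{a}_{-i}}Q(s,a_i,\boldsymbol{a}_{-i})$) with Lemma~\ref{lemma:1} (greedy decentralized action selection recovers the unique optimal joint policy). Your added remarks on the decoupling of the agents' updates and on where the uniqueness hypothesis is needed are accurate elaborations, not deviations.
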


\subsection{Simplified Best Possible Operator}

Best possible operator guarantees the convergence to the optimal joint policy. However, to perform (\ref{eq:bql}), every update, each agent $i$ has to compute the expected values of all possible transition probabilities and update $Q_i$ to be the maximal expected value, which is too costly. Therefore, we introduce an auxiliary value function $Q^{\mathrm{e}}_i(s,a_i)$, and simplify (\ref{eq:bql}) into two operators. First, at each update, we randomly select one of possible transition probabilities $\tilde{P}_i$ for each $(s,a_i)$ and update $Q^{\mathrm{e}}_i(s,a_i)$ by
\begin{equation}
\label{eq:bql-1}
Q^{\mathrm{e}}_i(s,a_i) = \mathbb{E}_{\tilde{P}_i(s'|s,a_i)}\left [r + \gamma \underset{a'_{i}}{\max}Q_i(s',a'_{i}) \right ].
\end{equation}
$Q^{\mathrm{e}}_i(s,a_i)$ represents the expected value of the selected transition probabilities. Then we monotonically update $Q_i(s,a_i)$ by
\begin{equation}
\label{eq:bql-2}
Q_i(s,a_i) = \max\left(Q_i(s,a_i),Q^{\mathrm{e}}_i(s,a_i)\right).
\end{equation}
We define (\ref{eq:bql-1}) and (\ref{eq:bql-2}) together as \textit{\textbf{simplified best possible operator}}. By performing simplified best possible operator, $Q_i(s,a_i)$ is efficiently updated towards the maximal expected value. And we have the following lemma.
\begin{lemma}
	\label{theorem:2}
	$Q_i(s,a_i)$ converges to $\max_{\boldsymbol{a}_{-i}}Q(s,a_i,\boldsymbol{a}_{-i})$ under simplified best possible operator.
\end{lemma}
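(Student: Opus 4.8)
The plan is to combine monotone convergence with the contraction property of the best possible operator established in the proof of Lemma~\ref{theorem:1}. Write $\mathcal{B}$ for the best possible operator in (\ref{eq:bql}), so that $(\mathcal{B}Q_i)(s,a_i) = \max_{P_i(\cdot|s,a_i)}\mathbb{E}_{P_i(s'|s,a_i)}[r+\gamma\max_{a'_i}Q_i(s',a'_i)]$, and recall from Lemmas~\ref{lemma:2} and \ref{theorem:1} that $\mathcal{B}$ is a $\gamma$-contraction in $\|\cdot\|_{\infty}$ whose fixed point is $Q^*(s,a_i) := \max_{\boldsymbol{a}_{-i}}Q(s,a_i,\boldsymbol{a}_{-i})$. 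I will also use that $\mathcal{B}$ is monotone: if $Q \le \hat Q$ pointwise then $\mathcal{B}Q \le \mathcal{B}\hat Q$, which is immediate since the inner $\max_{a'_i}$ and the expectation both preserve pointwise order. Throughout, $\tilde P_i$ denotes the transition probability selected at a given update, which by construction is one of the finitely many transitions induced by a deterministic $\boldsymbol{\pi}_{-i}$; I assume the selection rule picks every such transition infinitely often for each $(s,a_i)$.

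First I would establish the upper bound $Q_i^k \le Q^*$ for all $k$, mirroring Lemma~\ref{lemma:2}. Initializing $Q_i^0 = r_{\min}/(1-\gamma) \le Q^*$ and assuming $Q_i^{k-1} \le Q^*$, the auxiliary target satisfies $Q_i^{\mathrm{e}}(s,a_i) = \mathbb{E}_{\tilde P_i}[r+\gamma\max_{a'_i}Q_i^{k-1}(s',a'_i)] \le (\mathcal{B}Q_i^{k-1})(s,a_i)$, because an expectation over a single admissible transition is dominated by the maximum over all of them; by monotonicity and the fixed-point property, $\mathcal{B}Q_i^{k-1} \le \mathcal{B}Q^* = Q^*$. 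Since (\ref{eq:bql-2}) sets $Q_i^k = \max(Q_i^{k-1}, Q_i^{\mathrm{e}})$ and both arguments are $\le Q^*$, the bound propagates.

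Next, (\ref{eq:bql-2}) makes $\{Q_i^k(s,a_i)\}_k$ nondecreasing for every $(s,a_i)$; being bounded above by $Q^*$, it converges to a limit $Q_i^{\infty} \le Q^*$. It remains to identify this limit. Fixing $(s,a_i)$ and passing to the limit along the subsequence of updates at which a particular admissible transition $P$ is selected, continuity of the expectation in its $Q$-argument together with the nondecrease give $Q_i^{\infty}(s,a_i) \ge \mathbb{E}_{P}[r+\gamma\max_{a'_i}Q_i^{\infty}(s',a'_i)]$; since this holds for every admissible $P$ (each chosen infinitely often) and the set of such $P$ is finite, taking the maximum yields $Q_i^{\infty} \ge \mathcal{B}Q_i^{\infty}$ pointwise. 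Iterating the monotone operator then gives $Q_i^{\infty} \ge \mathcal{B}Q_i^{\infty} \ge \mathcal{B}^2 Q_i^{\infty} \ge \cdots$, and the contraction drives $\mathcal{B}^n Q_i^{\infty} \to Q^*$, so $Q_i^{\infty} \ge Q^*$. Together with the upper bound this forces $Q_i^{\infty} = Q^*$, which is the claim.

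The main obstacle is this last step: monotone convergence alone only delivers some limit, so the real work is in pinning it down. This hinges on the selection rule visiting every admissible transition infinitely often — only then does the limit satisfy $Q_i^{\infty} \ge \mathcal{B}Q_i^{\infty}$ — after which the contraction and monotonicity of $\mathcal{B}$ close the gap, rather than any direct contraction argument on the simplified operator, which is \emph{not} itself a contraction. I would therefore state this infinitely-often condition explicitly as the hypothesis under which the lemma holds.
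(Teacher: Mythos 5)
Your proof is correct, and its first half coincides with the paper's: the induction giving the upper bound $Q_i^k \le \max_{\boldsymbol{a}_{-i}}Q(s,a_i,\boldsymbol{a}_{-i})$ (as in Lemma~\ref{lemma:2}), monotonicity from (\ref{eq:bql-2}), and the monotone convergence theorem. Where you genuinely diverge is in identifying the limit. The paper stays at finite $k$: once the increments satisfy $Q_i^k - Q_i^{k-1} \le \epsilon$ for $k>K$, it considers an update at which the selected $\tilde P_i$ is the maximizing $P^*_i$, deduces $Q_i^{k-1} + \epsilon \ge Q^{\mathrm{e}}_i$, and plugs this into the one-step inequality from the proof of Lemma~\ref{theorem:1} to get the quantitative bound $\|\max_{\boldsymbol{a}_{-i}}Q - Q_i^{k-1}\|_\infty \le \epsilon/(1-\gamma)$, with $\epsilon$ arbitrary. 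You instead pass fully to the limit: along the subsequence where each admissible $P$ is selected you obtain the super-solution property $Q_i^\infty \ge \mathcal{B}Q_i^\infty$, then iterate the monotone $\gamma$-contraction $\mathcal{B}$ so that $Q_i^\infty \ge \mathcal{B}^n Q_i^\infty \to Q^*$, closing the sandwich. The paper's route buys an explicit finite-time residual bound linking the per-update increment to the value error; yours buys cleanliness and rigor in two respects. First, you state the needed hypothesis explicitly --- every admissible $\tilde P_i$ is selected infinitely often per $(s,a_i)$ --- where the paper only says ``each $\tilde P_i$ is possible to be selected,'' which is precisely your condition in disguise (and the paper's argument moreover implicitly needs the maximizing $\tilde P_i$, which depends on $Q_i^{k-1}$, to be chosen simultaneously across pairs at a single update $k>K$; your per-pair subsequence argument avoids that delicacy). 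Second, your observation that the simplified operator is not itself a contraction, so the work must go into pinning down the monotone limit, is exactly the right diagnosis. One small repair: you attribute to Lemmas~\ref{lemma:2} and~\ref{theorem:1} that $\mathcal{B}$ is a $\gamma$-contraction with fixed point $Q^*$; strictly, Lemma~\ref{theorem:1} only proves convergence of the iterates from the given initialization, but the two-sided contraction follows by the standard $|\max_P f - \max_P g| \le \max_P|f-g|$ estimate, and $\mathcal{B}Q^* = Q^*$ is exactly (\ref{eq:them_eq2}) rewritten, since under deterministic $\boldsymbol{\pi}_{-i}$ the admissible $P_i(\cdot|s,a_i)$ are exactly $\{P_{\mathrm{env}}(\cdot|s,a_i,\boldsymbol{a}_{-i}):\boldsymbol{a}_{-i}\in\mathcal{A}_{-i}\}$ --- worth a line in your write-up, but not a gap.
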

\begin{proof}
	\vspace{-0.30cm}
	According to~(\ref{eq:bql-2}), as $Q_i(s,a_i)$ is monotonically increased, $Q^k_i(s,a_i) \geq Q^{k-1}_i(s,a_i)$ in the update $k$. Similar to the proof of Lemma~\ref{lemma:2}, we can easily prove $\max_{\boldsymbol{a}_{-i}}Q(s,a_i,\boldsymbol{a}_{-i}) \geq Q^k_i(s,a_{i})$ under (\ref{eq:bql-1}) and (\ref{eq:bql-2}). Thus, $\{Q^k_i(s,a_i)\}$ is an increasing sequence and bounded above. According to the monotone convergence theorem,  $\{Q^k_i(s,a_i)\}$ converges when $k \rightarrow \infty$, and let $Q_i(s,a_i) := Q^\infty_i(s,a_i)$. 
	
	Then we prove that the converged value $Q_i(s,a_i)$ is equal to $\max_{\boldsymbol{a}_{-i}}Q(s,a_i,\boldsymbol{a}_{-i})$. Due to monotonicity and convergence, $\forall \epsilon, s, a_i, \exists K, \text{when } k>K,\, Q^k_i(s,a_i) - Q^{k-1}_i(s,a_i) \leq \epsilon$, no matter which $\tilde{P}_i$ is selected in the update $k$. Since each $\tilde{P}_i$ is possible to be selected, when selecting $\tilde{P}_i(s'|s,a_i) =  \arg \max_{P_i(s'|s,a_i)}\mathbb{E}_{P_i(s'|s,a_i)}\left [r + \gamma \max_{a'_{i}}Q^{k-1}_i(s',a'_{i})\right] = P^*_i(s'|s,a_i),$
	by performing (\ref{eq:bql-1}) and (\ref{eq:bql-2}), we have
	\begin{align*}
	&Q^{k-1}_i(s,a_i) + \epsilon \geq Q^k_i(s,a_i) \geq Q^{\mathrm{e}}_i(s,a_i) \\
	&= \sum_{s'}P^*_i(s'|s,a_i)\left[r(s,s') + \gamma \underset{a'_{i}}{\max}Q^{k-1}_i(s',a'_{i})\right] .
	\end{align*}
	According to the proof of Lemma~\ref{theorem:1}, we have
	\begin{align*}
	&\underset{s, a_i }{ \max }\left(\underset{\boldsymbol{a}_{-i}}{\max}Q(s,a_i,\boldsymbol{a}_{-i}) - Q^{\mathrm{e}}_i(s,a_i) \right) \\
	&\leq \gamma\underset{s, a_i }{ \max }\left(\underset{\boldsymbol{a}_{-i}}{\max}Q(s,a_i,\boldsymbol{a}_{-i}) - Q^{k-1}_i(s,a_i) \right).
	\end{align*}
	Use $s^*, a^*_i$ to denote $$\arg \max_{s, a_i } \left(\max_{\boldsymbol{a}_{-i}}Q(s,a_i,\boldsymbol{a}_{-i}) - Q^{k-1}_i(s,a_i) \right).$$ Since $Q^{k-1}_i(s,a_i) + \epsilon \geq Q^{\mathrm{e}}_i(s,a_i),$
	\begin{align*}
	&\underset{\boldsymbol{a}_{-i}}{\max}Q(s^*,a^*_i,\boldsymbol{a}_{-i}) - Q^{k-1}_i(s^*,a^*_i) - \epsilon  \\
	&\leq \gamma \underset{\boldsymbol{a}_{-i}}{\max}Q(s^*,a^*_i,\boldsymbol{a}_{-i}) - \gamma Q^{k-1}_i(s^*,a^*_i) .
	\end{align*}
	Then, we have
	$$\Big\|\max_{\boldsymbol{a}_{-i}}Q(s,a_i,\boldsymbol{a}_{-i}) - Q^{k-1}_i(s,a_i) \Big\|_{\infty} \leq \frac{\epsilon}{1 - \gamma}.$$
	Thus, $Q_i(s,a_i)$ converges to $\max_{\boldsymbol{a}_{-i}}Q(s,a_i,\boldsymbol{a}_{-i})$.
\end{proof}
According to Lemma \ref{lemma:1} and \ref{theorem:2}, we also have:
\begin{theorem}
	\label{theorem:4}
	The agents learn the optimal joint policy under simplified best possible operator.
\end{theorem}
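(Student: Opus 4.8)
The plan is to obtain Theorem~\ref{theorem:4} as a direct composition of the two preceding results, exactly mirroring how Theorem~\ref{theorem:3} followed from Lemma~\ref{lemma:1} and Lemma~\ref{theorem:1}. First I would invoke Lemma~\ref{theorem:2}, which guarantees that for every agent $i$ the iterates produced by the simplified best possible operator, namely the alternating updates~(\ref{eq:bql-1}) and~(\ref{eq:bql-2}), converge to the target value $\max_{\boldsymbol{a}_{-i}}Q(s,a_i,\boldsymbol{a}_{-i})$ at every pair $(s,a_i)$. The crucial point to stress is that this is an exact identity in the limit, $Q_i(s,a_i) = \max_{\boldsymbol{a}_{-i}}Q(s,a_i,\boldsymbol{a}_{-i})$, not merely an upper or lower bound, so the converged value is precisely the quantity that the next step requires as a hypothesis.

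Second, I would feed this limiting value into Lemma~\ref{lemma:1}. That lemma shows that if each agent holds the value function $Q_i(s,a_i) = \max_{\boldsymbol{a}_{-i}}Q(s,a_i,\boldsymbol{a}_{-i})$ and acts greedily by $\arg\max_{a_i}Q_i(s,a_i)$, then, using $\max_{a_i}\max_{\boldsymbol{a}_{-i}}Q(s,a_i,\boldsymbol{a}_{-i}) = \max_{\boldsymbol{a}}Q(s,\boldsymbol{a})$ together with the assumption of a unique optimal joint policy, each agent's greedy action coincides with its component of the optimal joint action $\arg\max_{\boldsymbol{a}}Q(s,\boldsymbol{a})$. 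Composing the two lemmas, the decentralized iterates converge to value functions whose joint greedy policy is exactly $\boldsymbol{\pi}^*$, which is the claim.

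The argument is almost entirely bookkeeping once the lemmas are in hand, so the \emph{obstacle} is really a matter of being careful about two hypotheses rather than any new estimate. The first is the single-optimal-joint-policy condition inherited from Lemma~\ref{lemma:1}: without it the component-wise $\arg\max$ operations of different agents could select actions from different optimal joint actions and fail to coordinate, which is exactly why the footnote points to the I2Q coordination mechanism in the appendix. The second is that greedy action selection is taken on the exact limit $Q_i$, so I would make explicit that each $\arg\max_{a_i}Q_i(s,a_i)$ is evaluated at the fixed point rather than at a transient iterate; since Lemma~\ref{theorem:2} delivers exact convergence, this is immediate. No contraction argument is needed here, because all the heavy lifting---monotonicity, the upper bound analogous to Lemma~\ref{lemma:2}, and the $\epsilon/(1-\gamma)$ squeeze---already lives inside Lemma~\ref{theorem:2}.
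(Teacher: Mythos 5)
Your proposal is correct and matches the paper exactly: the paper derives Theorem~\ref{theorem:4} as an immediate composition of Lemma~\ref{theorem:2} (exact convergence of the simplified operator to $\max_{\boldsymbol{a}_{-i}}Q(s,a_i,\boldsymbol{a}_{-i})$) with Lemma~\ref{lemma:1} (greedy decentralized actions on that value yield the unique optimal joint policy), just as Theorem~\ref{theorem:3} followed from Lemmas~\ref{lemma:1} and~\ref{theorem:1}. Your added care about the uniqueness hypothesis and evaluating the $\arg\max$ at the fixed point is consistent with, and slightly more explicit than, the paper's one-line justification.
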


\subsection{Best Possible Q-Learning}
\label{sec:bql}

\textbf{\textit{Best possible Q-learning}} (BQL) is instantiated on simplified best possible operator. We first consider learning Q-table for tabular cases. 
The key challenge is how to obtain all possible transition probabilities under deterministic $\boldsymbol{\pi}_{-i}$ during learning. To solve this issue, the whole training process is divided into $M$ epochs. At the epoch $m$, each agent $i$ randomly and independently initializes a deterministic policy $\hat{\pi}_i^m$ and selects a subset of states $S_i^m$. Then each agent $i$ interacts with the environment using the deterministic policy
$$\left\{\begin{matrix}
\arg \max_{a_i} Q_i(s,a_i) & \text{if } s \notin S_i^m,\\ 
\hat{\pi}_i^m(s) & \text{else}.
\end{matrix}\right.$$
Each agent $i$ stores independent experiences $(s,a_i,s',r)$ in the replay buffer $\mathcal{D}_i^m$. As $P_i$ depends on $\boldsymbol{\pi}_{-i}$ and agents act deterministic policies, $\mathcal{D}_i^m$ contains one $P_i$ under a deterministic $\boldsymbol{\pi}_{-i}$. Since $P_i$ will change if other agents modify their policies $\boldsymbol{\pi}_{-i}$, acting the randomly initialized policy $\hat{\pi}_i^m$ on $S_i^m$ in the epoch $m$ not only helps each agent $i$ to explore state-action pairs, but also helps other agents to explore possible transition probabilities. When $M$ is sufficiently large, given any $(s,a_i)$ pair, any $P_i(s,a_i)$ can be found in a replay buffer.

After interaction of the epoch $m$, each agent $i$ has a buffer series $\{\mathcal{D}_i^1,\cdots,\mathcal{D}_i^m\}$, each of which has different transition probabilities. At training period of the epoch $m$, each agent $i$ randomly selects one replay buffer $\mathcal{D}_i^j$ from $\{\mathcal{D}_i^1,\cdots,\mathcal{D}_i^m\}$ and samples mini-batches $\{s,a_i,s',r\}$ from $\mathcal{D}_i^j$ to update Q-table $Q^{\mathrm{e}}_i(s,a_i)$ by (\ref{eq:bql-1}), and then samples mini-batches from $\mathcal{D}_i^j$ to update $Q_i(s,a_i)$ by (\ref{eq:bql-2}). The Q-table implementation is summarized in Algorithm~\ref{alg:1}.

\begin{algorithm}[!t]
	\caption{BQL with Q-table for each agent $i$}
	\label{alg:1}
	\begin{algorithmic}[1]	
		\STATE Initialize tables $Q_i$ and $Q^{\mathrm{e}}_i$.			
		\FOR{$m = 1, \ldots, M$}
		\STATE Initialize the replay buffer $\mathcal{D}^m_i$ and the exploration policy $\hat{\pi}_i^m$.
		\STATE  All agents interact with the environment and store experiences $(s,a_i,s',r)$ in $\mathcal{D}^m_i$.
		\FOR{$t = 1, \ldots, n\_update$}
		\STATE Randomly select a buffer $\mathcal{D}^j_i$ from $\mathcal{D}_i^1,\cdots,\mathcal{D}_i^m$.
		\STATE Update $Q^{\mathrm{e}}_i$ according to (\ref{eq:bql-1}) by sampling from $\mathcal{D}^j_i$.
		\STATE Update $Q_i$ according to (\ref{eq:bql-2}) by sampling from $\mathcal{D}^j_i$.
		\ENDFOR		
		\ENDFOR		
	\end{algorithmic}
\end{algorithm}

\begin{algorithm}[t]
	\caption{BQL with neural network for each agent $i$}
	\label{alg:2}
	\begin{algorithmic}[1]	
		\STATE Initialize neural networks $Q_i$ and $Q^{\mathrm{e}}_i$, and the target network $\bar{Q}^{\mathrm{e}}_i$.				
		\STATE Initialize the replay buffer $\mathcal{D}_i$.
		\FOR{$t = 1, \ldots, n\_iteration$}
		\STATE  All agents interact with the environment and store experiences $(s,a_i,s',r)$ in $\mathcal{D}_i$.
		\STATE Sample a mini-batch from $\mathcal{D}_i$.
		\STATE Update $Q^{\mathrm{e}}_i$ by minimizing (\ref{eq:1}).
		\STATE Update $Q_i$ by minimizing (\ref{eq:2}).
		\STATE Update the target networks $\bar{Q}^{\mathrm{e}}_i$.
		\ENDFOR		
	\end{algorithmic}
\end{algorithm}

Then we analyze the sample efficiency of collecting the buffer series. Simplified best possible operator requires that any possible $P_i(s,a_i)$ of $(s,a_i)$ pair can be found in one buffer, but does not care about the relationship between transition probabilities of different state-action pairs in the same buffer. So BQL ideally needs only $|\mathcal{A}_i| \times |\mathcal{A}_{-i}| = |\mathcal{A}|$ small buffers to cover all possible $P_i$ for any $(s,a_i)$ pair, which is very efficient for experience collection. We give an intuitive illustration for this and analyze that BQL has similar sample complexity to the joint Q-learning~(\ref{eq:joint_q}) in Appendix~\ref{appendix:buffer}.

In complex environments with large or continuous state-action space, it is inefficient and costly to follow the experience collection in tabular cases, where the agents cannot update their policies during the interaction of each epoch and each epoch requires adequate samples to accurately estimate the expectation (\ref{eq:bql-1}). Thus, in complex environments, same as IQL, each agent $i$ only maintains one replay buffer $\mathcal{D}_i$, which contains all historical experiences, and uses the same $\epsilon$-greedy policy as IQL (without the randomly initialized deterministic policy $\hat{\pi}_i$). Then we instantiate simplified best possible operator with neural networks $Q_i$ and $Q^{\mathrm{e}}_i$. $Q^{\mathrm{e}}_i$ is updated by minimizing:
\begin{align}
\label{eq:1}
\mathbb{E}_{s,a_i,s',r \sim \mathcal{D}_i}  \left[ \left(Q^{\mathrm{e}}_i\left(s, a_i\right)-r-\gamma Q_i(s^{\prime}, a_i^{\prime*})\right)^2\right], \\
\notag
 a_i^{\prime*}= \arg \max_{{a}'_i} {Q}_i(s',{a}'_i).
\end{align}
And $Q_i$ is updated by minimizing:
\begin{align}
\label{eq:2}
\mathbb{E}_{s,a_i \sim \mathcal{D}_i} \left[ w(s, a_i)\left(Q_i\left(s, a_i\right)- \bar{Q}^{\mathrm{e}}_i(s, a_i) \right)^2\right],\\
\notag
w(s, a_i) = \left\{\begin{matrix}
1 & \text{if }  \bar{Q}^{\mathrm{e}}_i(s, a_i) > Q_i\left(s, a_i\right)\\ 
\lambda & \text{else}.
\end{matrix}\right.
\end{align}
$\bar{Q}^{\mathrm{e}}_i$ is the softly updated target network of $Q^{\mathrm{e}}_i$. When $\lambda = 0$, (\ref{eq:2}) is equivalent to (\ref{eq:bql-2}). However, when $\lambda = 0$, the positive random noise of $Q_i$ in the update can be continuously accumulated, which may cause value overestimation. So we adopt the weighted max in (\ref{eq:2}) by setting $0<\lambda<1$ to offset the positive random noise. In continuous action space, following DDPG \citep{lillicrap2016continuous}, we train a policy network $\pi_i(s)$ by maximizing $Q_i(s,\pi_i(s))$ as a substitute of $\arg \max_{a_i} {Q}_i(s,a_i)$. The neural network implementation is summarized in Algorithm~\ref{alg:2}.

Simplified best possible operator is meaningful for neural network implementation. As there is only one buffer $\mathcal{D}_i$, we cannot perform (\ref{eq:bql}) but can still perform (\ref{eq:bql-1}) and (\ref{eq:bql-2}) on $\mathcal{D}_i$. As other agents are updating their policies, the transition probabilities in $\mathcal{D}_i$ will continuously change. If $\mathcal{D}_i$ sufficiently goes through all possible transition probabilities, $Q_i(s,a_i)$ converges to $\max_{\boldsymbol{a}_{-i}}Q(s,a_i,\boldsymbol{a}_{-i})$ and the agents learn the optimal joint policy. That is to say, \textit{non-stationarity in the replay buffer is no longer a drawback, but a necessary condition for BQL}.

\section{Related Work}

Most existing MARL methods \citep{lowe2017multi,iqbal2019actor,wang2020dop,zhang2021fop, su2021divergence, peng2021facmac,li2022difference,sunehag2018value,rashid2018qmix,son2019qtran,wang2020qplex,rashid2020weighted} follow the paradigm of centralized training and decentralized execution (CTDE), where the information of all agents can be accessed in a centralized way during training. Unlike these methods, we focus on fully decentralized learning where global information is not available. The most straightforward decentralized methods, \textit{i.e.}, independent Q-learning \citep{tan1993multi} and independent PPO (IPPO) \citep{de2020independent}, cannot guarantee the convergence of the learned policy, because the transition probabilities are non-stationary from the perspective of each agent as all agents are learning policies simultaneously. Multi-agent alternate Q-learning (MA2QL) \citep{su2022ma2ql} guarantees the convergence to a Nash equilibrium, but the converged equilibrium may not be the optimal one when there are multiple Nash equilibria. Moreover, to obtain the theoretical guarantee, it has to be trained in an on-policy manner and cannot use replay buffers, which leads to poor sample efficiency. Following the principle of optimistic estimation, Hysteretic IQL \citep{matignon2007hysteretic} sets a slow learning rate to the value punishment. Distributed IQL \citep{lauer2000algorithm}, a special case of Hysteretic IQL with the slow learning rate being zero, guarantees the convergence to the optimum but only in deterministic environments.  
I2Q \citep{jiangi2q} lets each agent perform independent Q-learning on ideal transition probabilities and could learn the optimal policy only in deterministic environments. 
Our BQL is the first fully decentralized algorithm that converges to the optimal joint policy in stochastic environments.

In the next section, we compare BQL against these Q-learning variants (Distributed IQL is included in Hysteretic IQL). 
Comparing with on-policy algorithms, \textit{e.g.}, IPPO, that are not sample-efficient especially in fully decentralized settings, is out of focus and thus deferred to Appendix. 
Decentralized methods with communication \citep{zhang2018fully,konan2021iterated,li2020multi} allow information sharing with neighboring agents according to a time-varying communication channel. However, they do not follow the fully decentralized setting and thus are beyond the scope of this paper.

\section{Experiments}

\begin{figure*}[!t]
	\centering
	\setlength{\abovecaptionskip}{3pt}
	\begin{subfigure}[t]{0.25\linewidth}
		\setlength{\abovecaptionskip}{0pt}
		\includegraphics[width=1\linewidth]{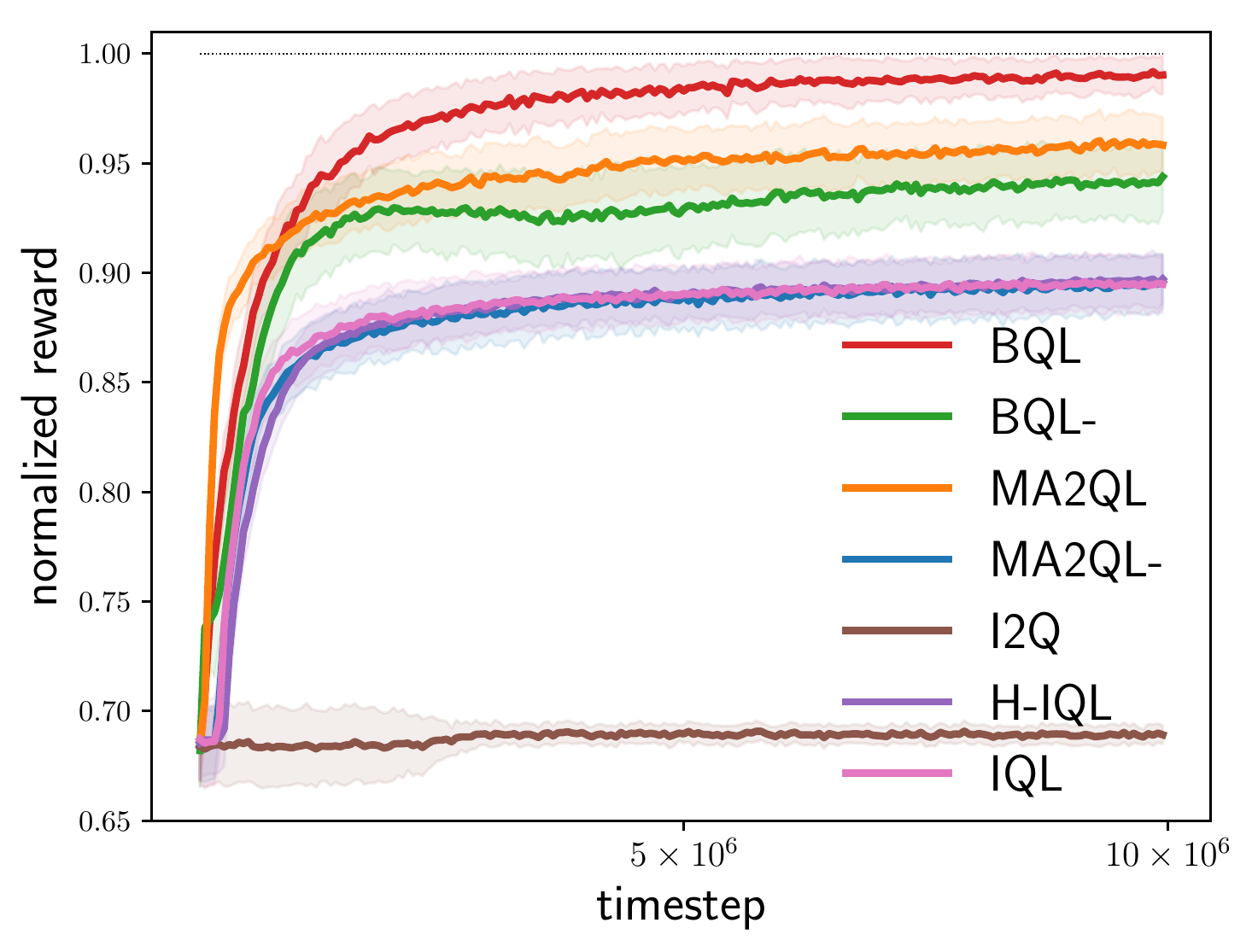}
		\caption{Stochastic games}	
		\label{fig:game}	
	\end{subfigure}
	\hspace{-0.2cm}
	\begin{subfigure}[t]{0.25\linewidth}
		\setlength{\abovecaptionskip}{0pt}
		\includegraphics[width=1\linewidth]{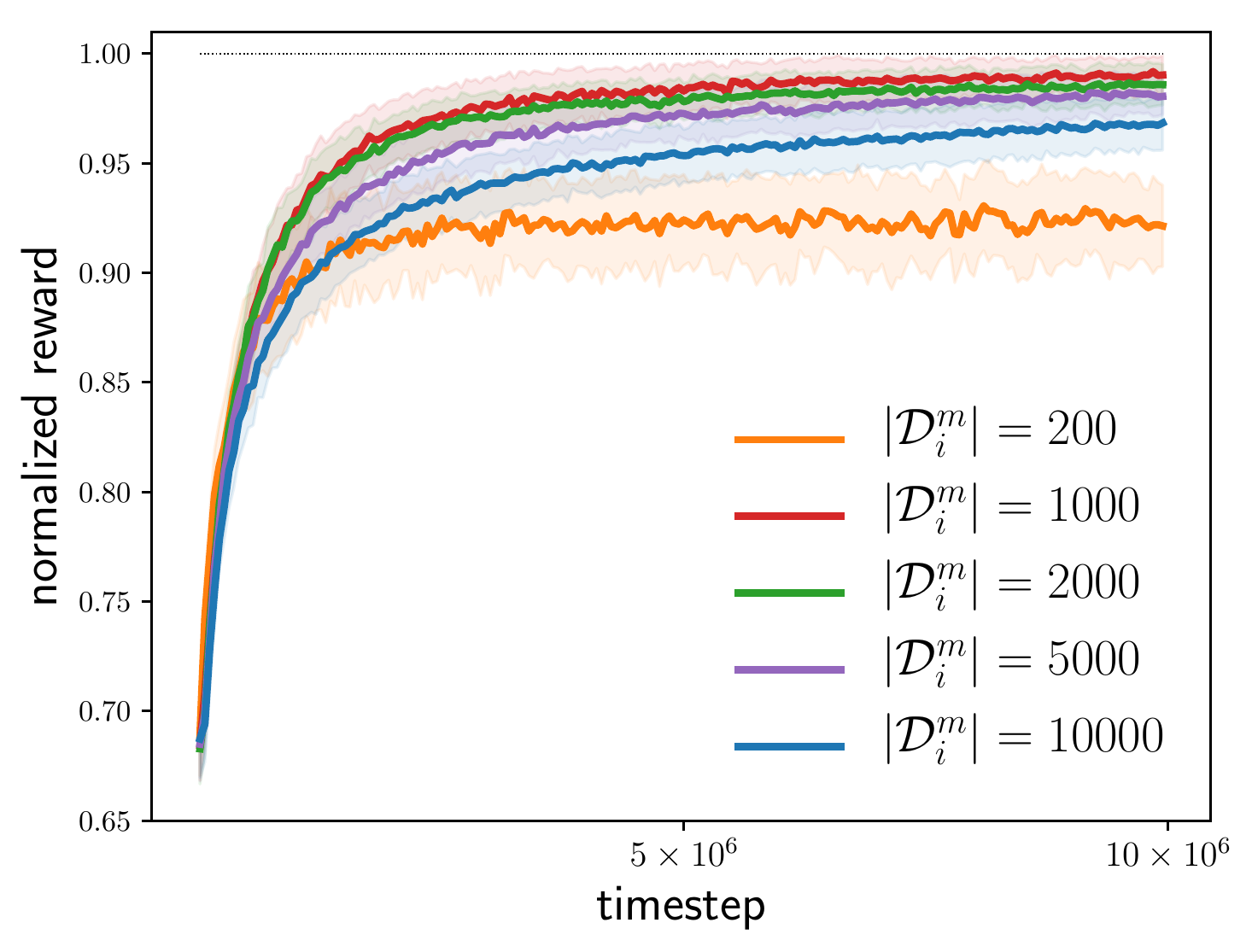}
		\caption{$|\mathcal{D}_i^m|$}	
		\label{fig:size}	
	\end{subfigure}
	\hspace{-0.2cm}
	\begin{subfigure}[t]{0.25\linewidth}
		\setlength{\abovecaptionskip}{0pt}
		\includegraphics[width=1\linewidth]{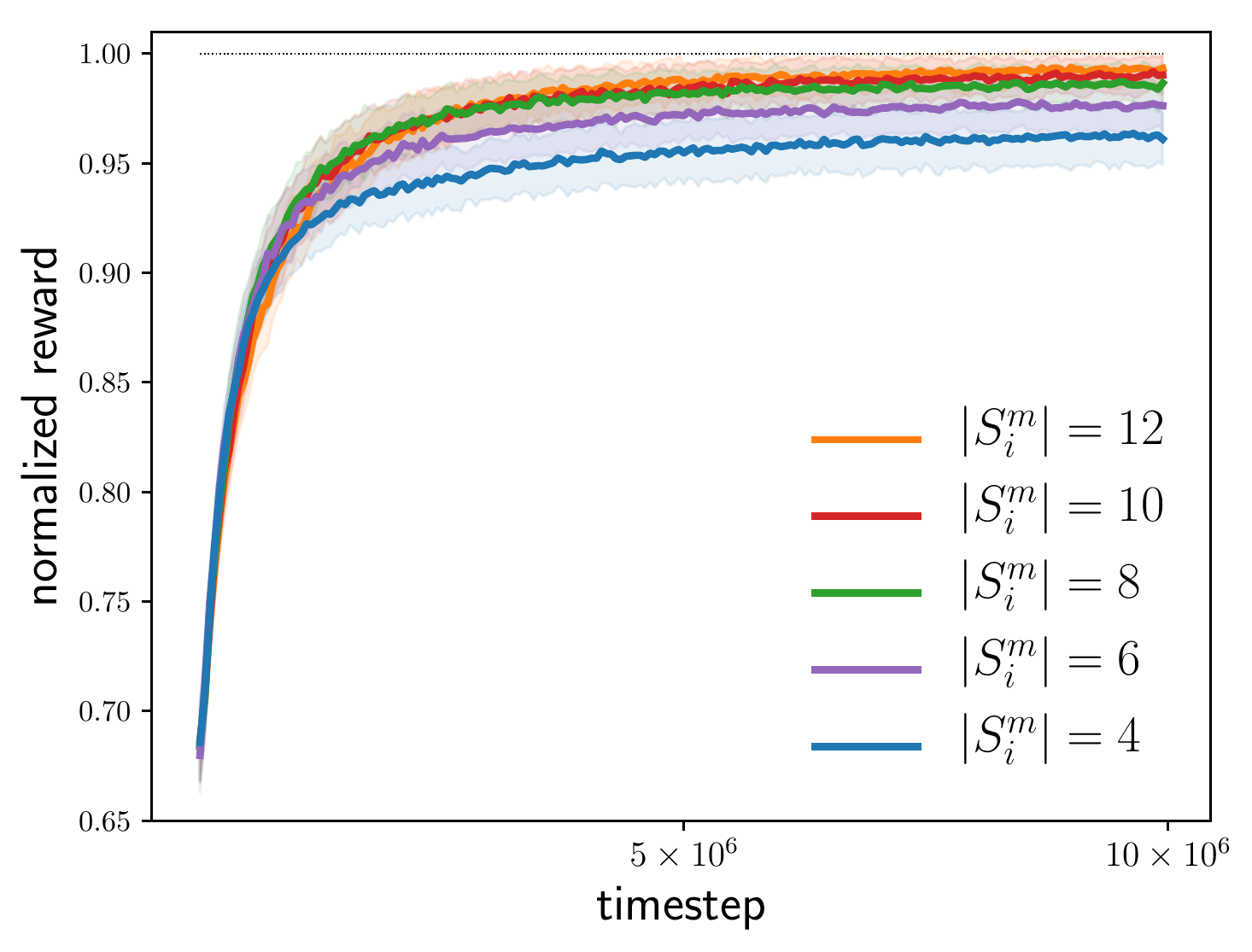}
		\caption{$|S_i^m|$}	
		\label{fig:s}	
	\end{subfigure}
	\hspace{-0.2cm}
	\begin{subfigure}[t]{0.25\linewidth}
		\setlength{\abovecaptionskip}{0pt}
		\includegraphics[width=1\linewidth]{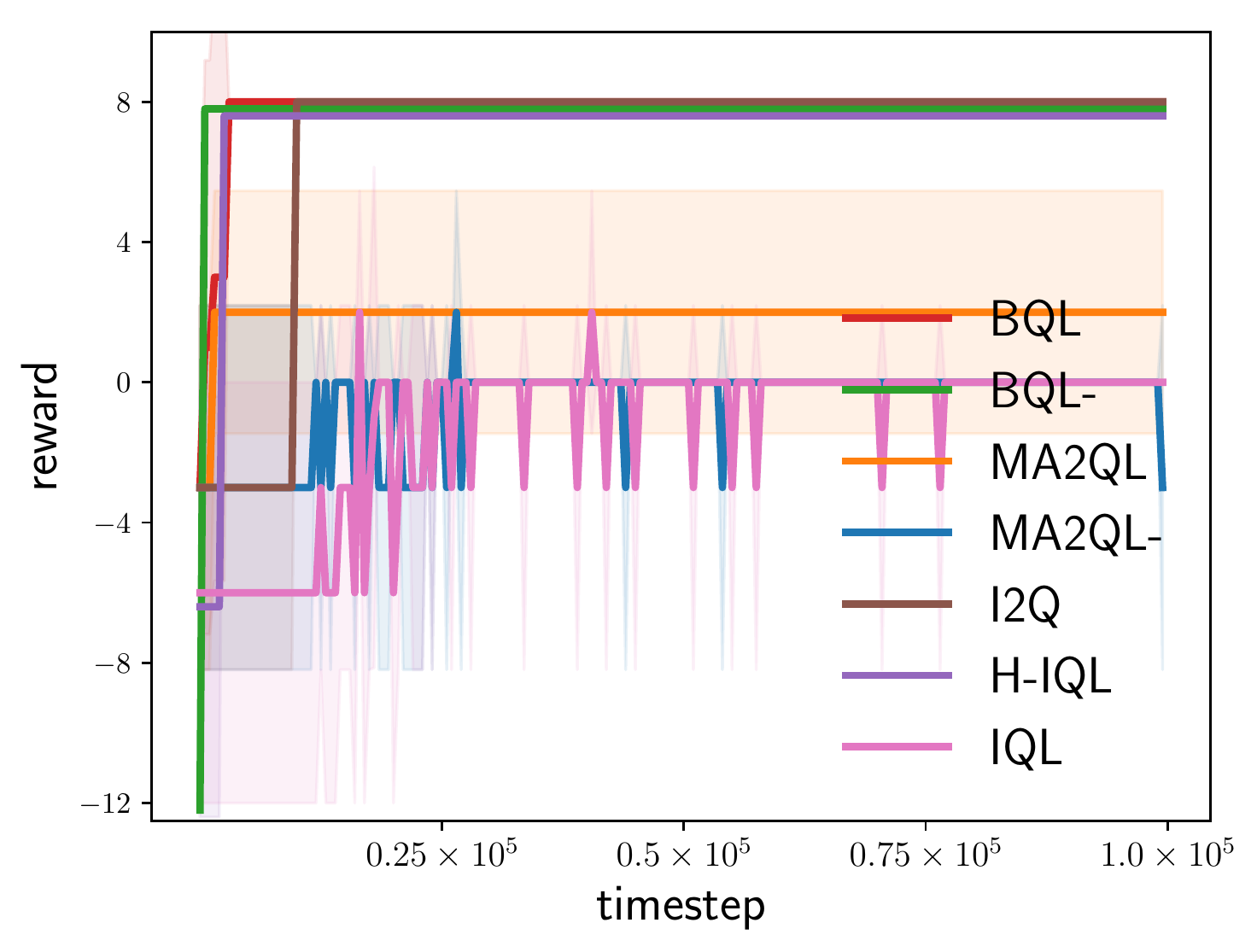}
		\caption{One-stage game}
		\label{fig:matrix}		
	\end{subfigure}
        \vspace{-0.20cm}
	\caption{Learning curves on cooperative stochastic games.}
	\vspace{-0.20cm}
\end{figure*}

In experiments, we first test BQL with Q-table on randomly generated cooperative stochastic games to verify its convergence and optimality. Then, to illustrate its performance on complex tasks, we compare BQL with neural networks against Q-learning variants on MPE-version differential games \citep{jiangi2q}, Multi-Agent MuJoCo \citep{peng2021facmac}, SMAC \citep{samvelyan19smac}, and GRF \cite{kurach2020google}. The experiments cover both fully and partially observable, deterministic and stochastic, discrete and continuous environments. Since we consider the fully decentralized setting, BQL and the baselines \textit{do not use parameter sharing}. The results are presented using mean and standard deviation with different random seeds. The experiments are carried out on Intel i7-8700 CPU and NVIDIA GTX 1080Ti GPU. The training of each MPE, MuJoCo, and GRF task could be finished in 5 hours, and the training of each SMAC task could be finished in 20 hours. More details about hyperparameters are available in Appendix~\ref{appendix:hyperparameters}.

\subsection{Stochastic Games}

To support the theoretical analysis of BQL, we test the Q-table instantiation on stochastic games with $4$ agents, $30$ states, and infinite horizon. The action space of each agent is $4$, so the joint action space $|\mathcal{A}| = 256$. The distribution of initial states is uniform. Each state will transition to any state given a joint action according to transition probabilities. The transition probabilities and reward function are randomly generated and fixed in each game. We randomly generate $20$ games and train the agents for four different seeds in each game.

The mean normalized return (normalized by the optimal return) and std over the $20$ games are shown in Figure~\ref{fig:game}. IQL cannot learn the optimal policies due to non-stationarity. Although using the optimistic update to remedy the non-stationarity, Hysteretic IQL (H-IQL) still cannot solve this problem in stochastic environments and shows similar performance to IQL. In Appendix~\ref{appendix:hysteretic}, we thoroughly analyze the difference between H-IQL and BQL, and show H-IQL is a special case of BQL in deterministic environments. I2Q performs Q-learning on the ideal transition function where the next state is deterministically the one with the highest value, which however is impossible in stochastic tasks. So I2Q cannot guarantee the optimal joint policy in stochastic environments. MA2QL guarantees the convergence to a Nash equilibrium, but the converged one may not be the optimal one, thus there is a performance gap between MA2QL and optimal policies. BQL could converge to the optimum, and the tiny gap is caused by the fitting error of the Q-table update. This verifies our theoretical analysis. Note that, in Q-table instantiations, MA2QL and BQL use different experience collection from IQL, \textit{i.e.}, exploration strategy and replay buffer. MA2QL only uses on-policy experiences and BQL collects a series of small buffers. However, \textit{for sample efficiency, the two methods have to use the same experience collection as IQL in complex tasks with neural networks.} MA2QL- and BQL- respectively denote the two methods with the same experience collection as IQL. Trained on off-policy experiences, MA2QL- suffers from non-stationarity and achieves similar performance to IQL. Even if using only one buffer, as we have analyzed in Section~\ref{sec:bql}, if the non-stationary buffer sufficiently goes through all possible transition probabilities, BQL agents can also converge to the optimum. Although going through all possible transition probabilities by one buffer is inefficient, BQL- significantly outperforms IQL, which implies the potential of BQL with one buffer in complex tasks.

Figure~\ref{fig:size} shows the effect of the size of buffer $\mathcal{D}_i^m$ at the epoch $m$. If $|\mathcal{D}_i^m|$ is too small, \textit{i.e.}, $200$, the experiences in $|\mathcal{D}_i^m|$ are insufficient to accurately estimate the expected value~(\ref{eq:bql-1}). If $|\mathcal{D}_i^m|$ is too large, \textit{i.e.}, $10000$, the experiences in $|\mathcal{D}_i^m|$ are redundant, and the buffer series is difficult to cover all possible transition probabilities given fixed total training timesteps. Figure~\ref{fig:s} shows the effect of the number of states on which the agents perform the randomly initialized deterministic policy $\hat{\pi}_i^m$ for exploration. The larger $|S_i^m|$ means a stronger exploration for both state-action pairs and possible transition probabilities, which leads to better performance.

We then consider a one-stage game that is wildly adopted in MARL \citep{son2019qtran}. There are $2$ agents, and the action space of each agent is $3$. The reward matrix is
\begin{equation}
\notag
\setlength{\abovedisplayskip}{3pt}
\begin{footnotesize}
\begin{vmatrix}
a_1/a_2 & \mathcal{A}^{(1)} & \mathcal{A}^{(2)} &\mathcal{A}^{(3)} \\ 
\mathcal{A}^{(1)} & \mathbf{8}  & -12 &-12 \\ 
\mathcal{A}^{(2)} & -12 & 0 & 0\\ 
\mathcal{A}^{(3)} & -12 & 0 & 0
\end{vmatrix}
\end{footnotesize}
\setlength{\belowdisplayskip}{3pt}
\end{equation}
where the reward $8$ is the global optimum and the reward $0$ is the sub-optimal Nash equilibrium. As shown in Figure~\ref{fig:matrix}, MA2QL converges to the sub-optimal Nash equilibrium when the initial policy of the second agent selects $\mathcal{A}^{(2)}$ or $\mathcal{A}^{(3)}$. But BQL converges to the global optimum easily.

\subsection{MPE}

\begin{figure*}[!t]
	\centering
	\setlength{\abovecaptionskip}{3pt}
	\begin{subfigure}[t]{0.25\linewidth}
		\setlength{\abovecaptionskip}{0pt}
		\includegraphics[width=1\linewidth]{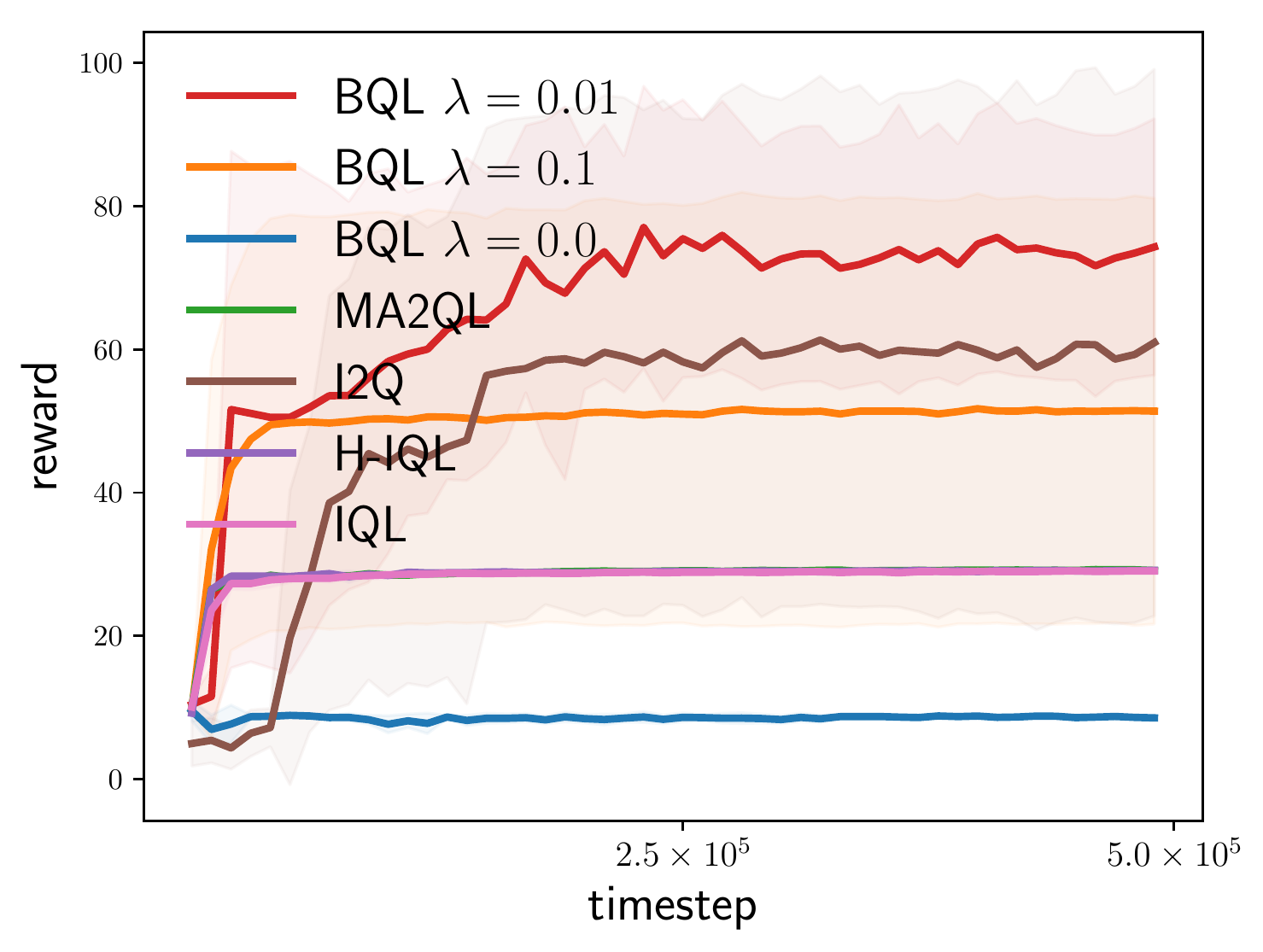}
		\caption{$\beta = 0.2$}		
	\end{subfigure}
	\hspace{-0.2cm}
	\begin{subfigure}[t]{0.25\linewidth}
		\setlength{\abovecaptionskip}{0pt}
		\includegraphics[width=1\linewidth]{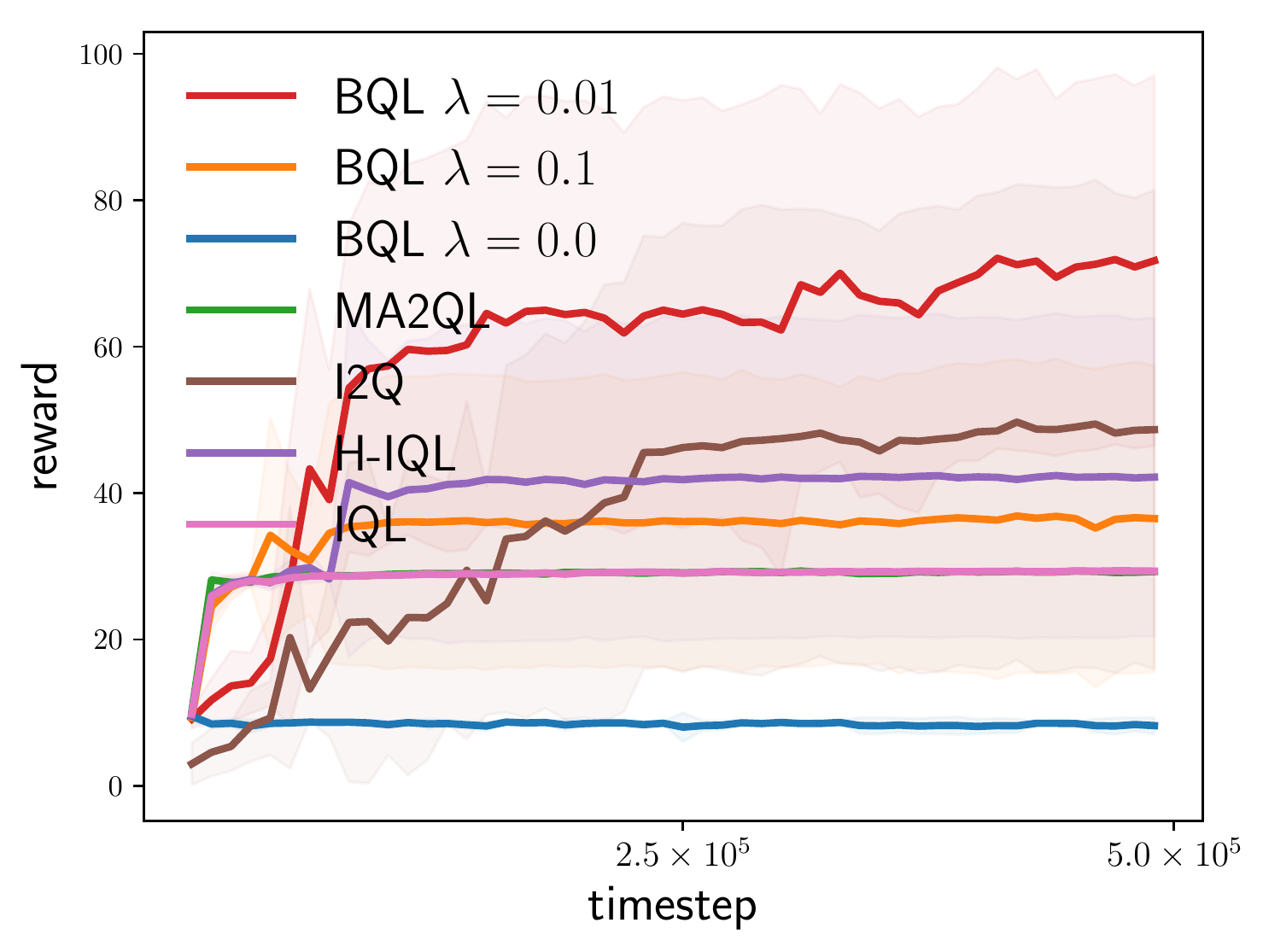}
		\caption{$\beta = 0.3$}	
	\end{subfigure}
	\hspace{-0.2cm}
	\begin{subfigure}[t]{0.25\linewidth}
		\setlength{\abovecaptionskip}{0pt}
		\includegraphics[width=1\linewidth]{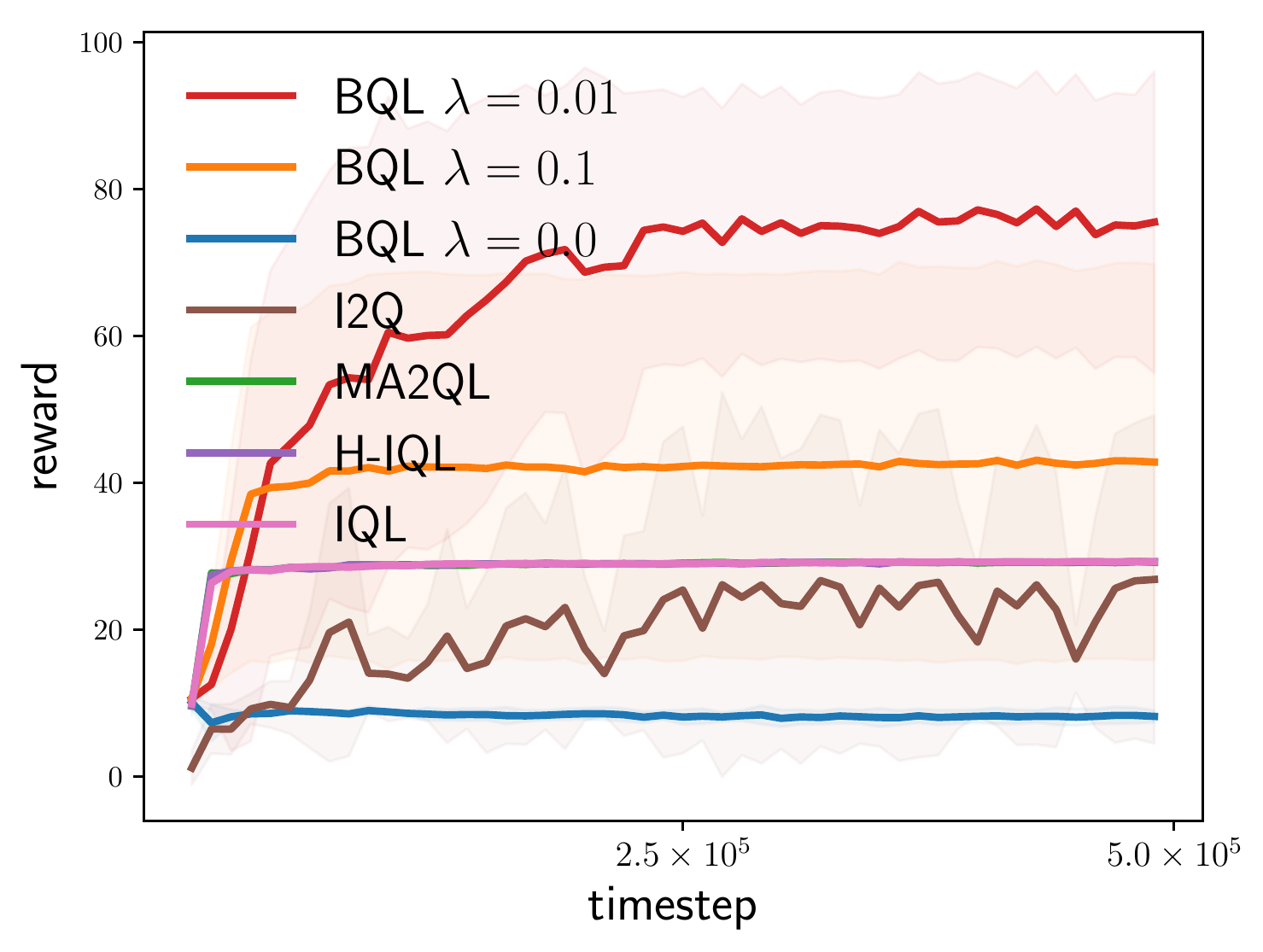}
		\caption{$\beta = 0.4$}	
	\end{subfigure}
	\hspace{-0.2cm}
	\begin{subfigure}[t]{0.25\linewidth}
		\setlength{\abovecaptionskip}{0pt}
		\includegraphics[width=1\linewidth]{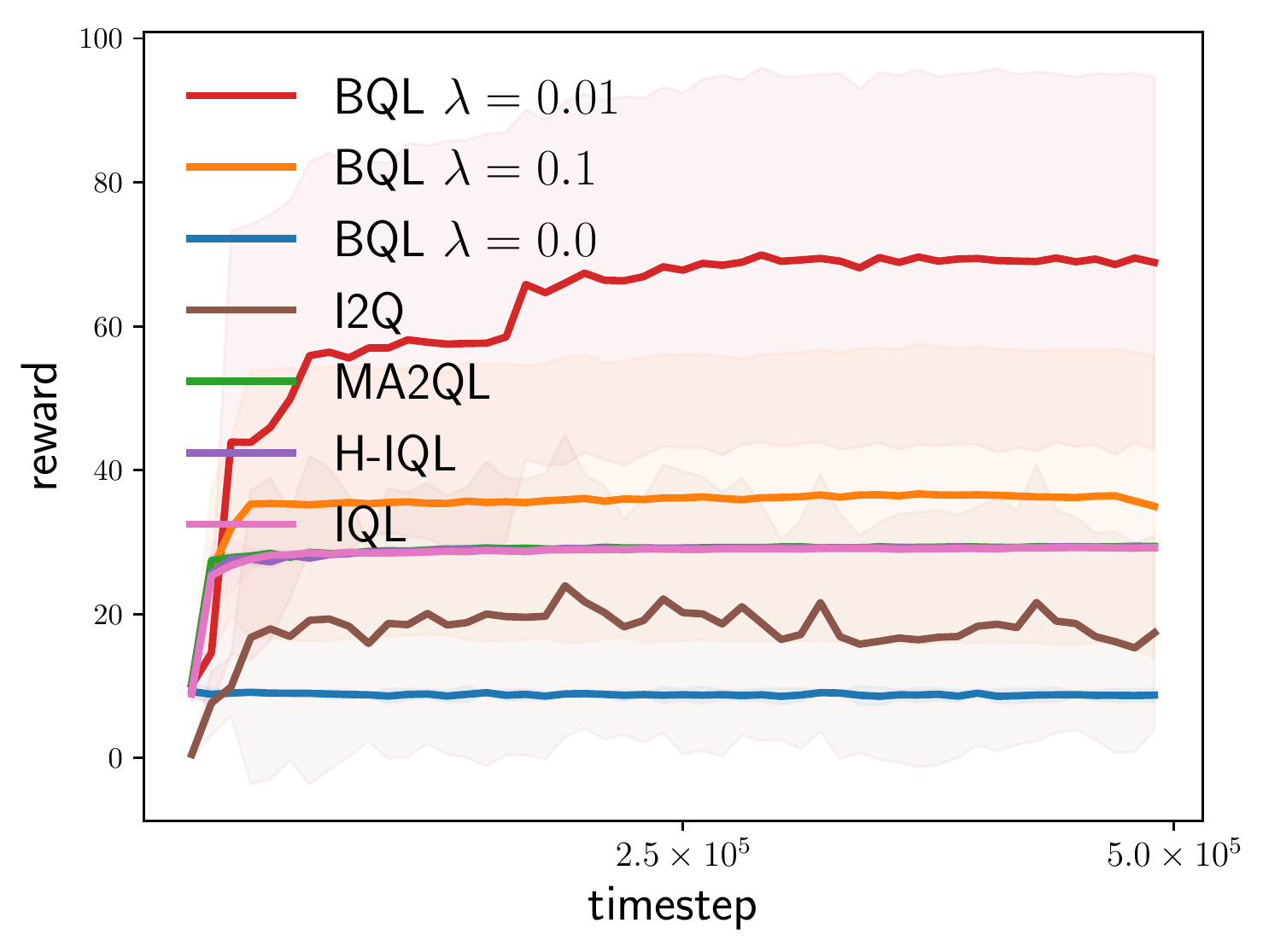}
		\caption{$\beta = 0.5$}	
	\end{subfigure}
        \vspace{-0.20cm}
	\caption{Learning curves on MPE-based differential games with different $\beta$.}
	\label{fig:mpe}		
	\vspace{-0.20cm}
\end{figure*}

To evaluate the effectiveness of BQL with neural network implementation, we adopt the $3$-agent MPE-based differential game used in I2Q \cite{jiangi2q}, where $3$ agents can move in the range $[-1,1]$. Different from the original deterministic version, we add stochasticity to it. In each timestep, agent $i$ acts the action $a_i \in [-1,1]$, and the position of agent $i$ will be updated as $x_i = \text{clip}(x_i + 0.1\times a_i,-1,1)$ (\textit{i.e.}, the updated position is clipped to $[-1,1]$) with the probability $1- \beta $, or will be updated as $-x_i$ with the probability $\beta$. $\beta$ controls the stochasticity. The state is the vector of positions $\{x_1,x_2,x_3\}$. The reward function of each timestep is
\begin{align*}
\setlength{\abovedisplayskip}{3pt}
\small
r = \begin{cases}
0.5 \cos(4l\pi)+0.5 & \text{ if } l\leq 0.25 \\ 
0 & \text{ if } 0.25< l\leq 0.6 \\ 
0.15 \cos(5\pi(l-0.8))+0.15 & \text{ if } 0.6< l\leq 1.0 \\
0 &\text{ if } l>1.0
\end{cases}
\setlength{\belowdisplayskip}{2pt}
\end{align*}
where $l = \sqrt{\frac{2}{3}(x_1^2+x_2^2+x_3^2)}$. We visualize the relation between $r$ and $l$ in Figure~\ref{fig:density} in Appendix. There is only one global optimum ($l = 0$ and $r = 1$) but infinite sub-optima ($l = 0.8$ and $r=0.3$), and the narrow region with $r > 0.3$ is surrounded by the region with $r = 0$. So it is quite a challenge to learn the optimal policies in a fully decentralized way. Each episode contains $100$ timesteps, and the initial positions follow the uniform distribution. We perform experiments with different stochasticities $\beta$, and train the agents for eight seeds with each $\beta$. In continuous environments, BQL and baselines are built on DDPG. 

As shown in Figure~\ref{fig:mpe}, IQL always falls into the local optimum (total reward $\approx 30$) because of the non-stationary transition probabilities. H-IQL only escapes the local optimum in one seed in the setting with $\beta = 0.3$. According to the theoretical analysis in I2Q paper, the value estimation error of I2Q will become larger when stochasticity grows, which is the reason why I2Q shows poor performance with $\beta = 0.4$ and $0.5$. In neural network implementations, MA2QL and BQL use the same experience collection as IQL, so there is no MA2QL- and BQL-. MA2QL converges to the local optimum because it cannot guarantee that the converged equilibrium is the global optimum, especially trained using off-policy data. BQL ($\lambda = 0.01$) can escape from local optimum in more than $4$ seeds in all settings, which demonstrates the effectiveness of our optimization objectives~(\ref{eq:1}) and~(\ref{eq:2}). The difference between global optimum (total reward $\approx 100$) and local optimum is large, which results in the large variance of BQL. In the objective~(\ref{eq:2}), $\lambda$ controls the balance between performing best possible operator and offsetting the overestimation caused by the operator. As shown in Figure~\ref{fig:mpe}, the large $\lambda$, \textit{i.e.}, $0.1$, will weaken the strength of BQL, while too small $\lambda$, \textit{i.e.}, $0$, will cause severe overestimation and destroy the performance.

\subsection{Multi-Agent MuJoCo}

\begin{figure*}[!t]
	\centering
	\setlength{\abovecaptionskip}{3pt}
	\begin{subfigure}[t]{0.25\linewidth}
		\setlength{\abovecaptionskip}{0pt}
		\includegraphics[width=1\linewidth]{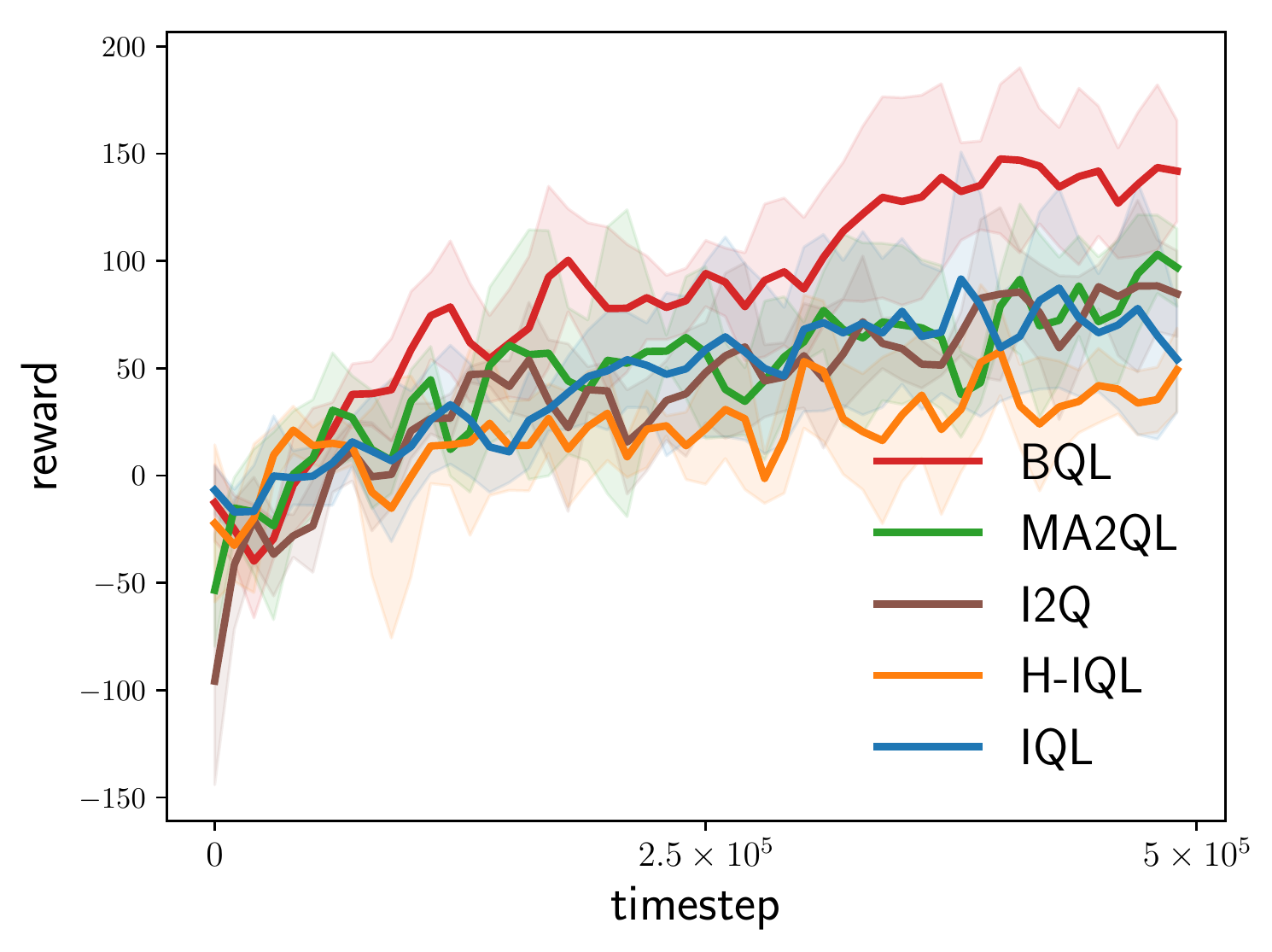}
		\caption{$2 \times 3$ Swimmer}		
	\end{subfigure}
	\hspace{-0.2cm}
	\begin{subfigure}[t]{0.25\linewidth}
		\setlength{\abovecaptionskip}{0pt}
		\includegraphics[width=1\linewidth]{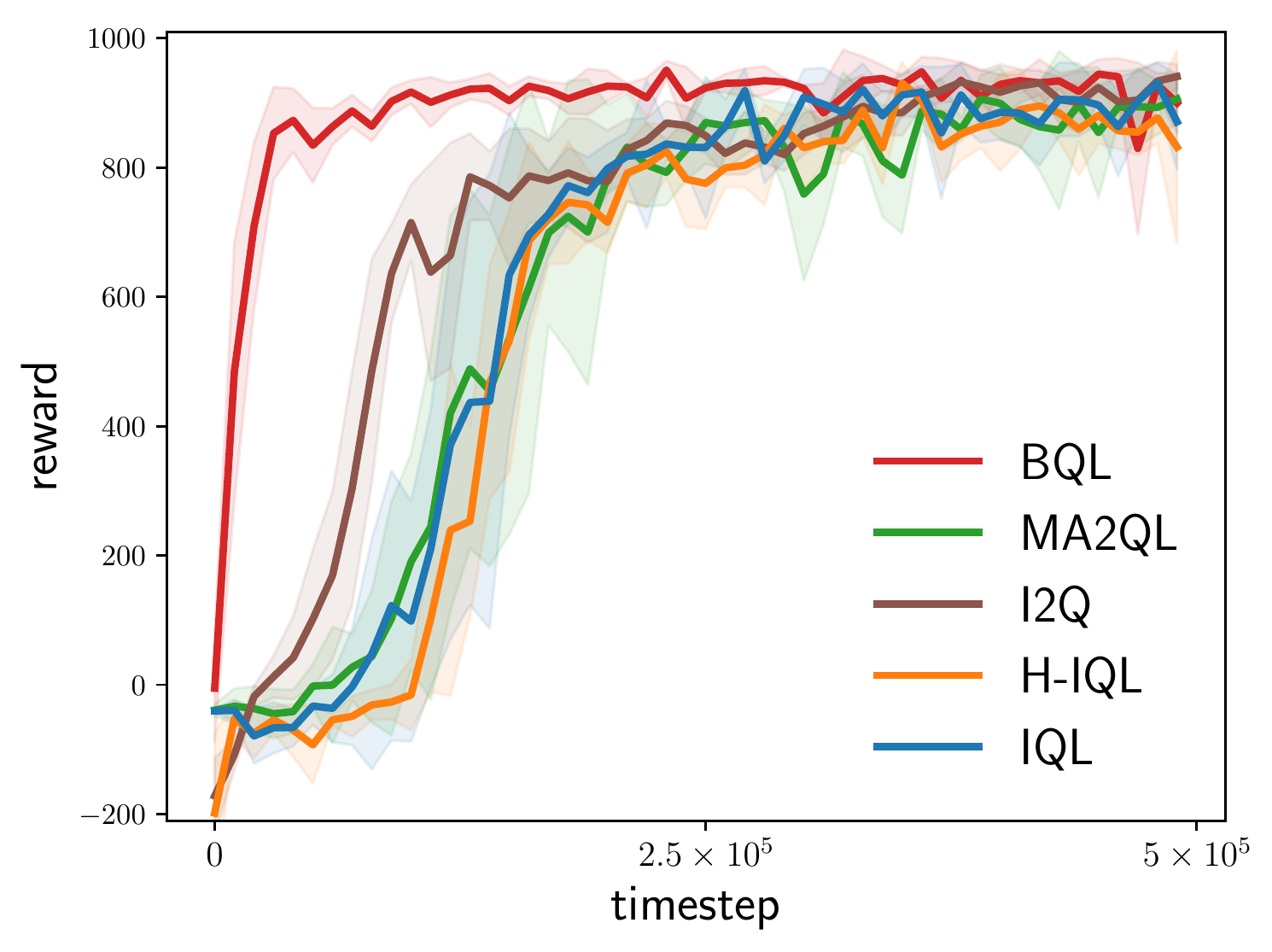}
		\caption{$2 \times 4$d Ant}	
	\end{subfigure}
	\hspace{-0.2cm}
	\begin{subfigure}[t]{0.25\linewidth}
		\setlength{\abovecaptionskip}{0pt}
		\includegraphics[width=1\linewidth]{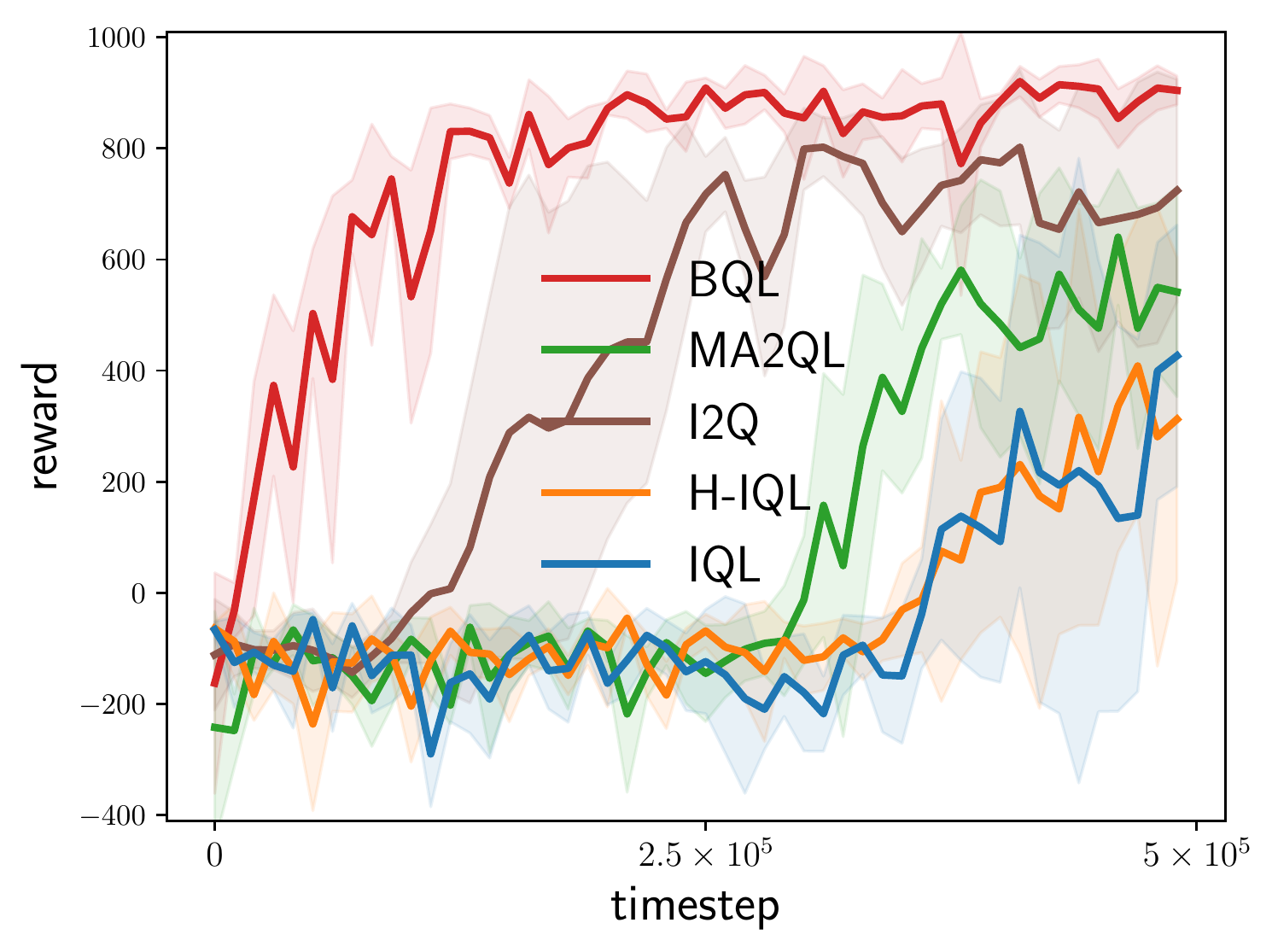}
		\caption{$6|2$ Ant}	
	\end{subfigure}
	\hspace{-0.2cm}
	\begin{subfigure}[t]{0.25\linewidth}
		\setlength{\abovecaptionskip}{0pt}
		\includegraphics[width=1\linewidth]{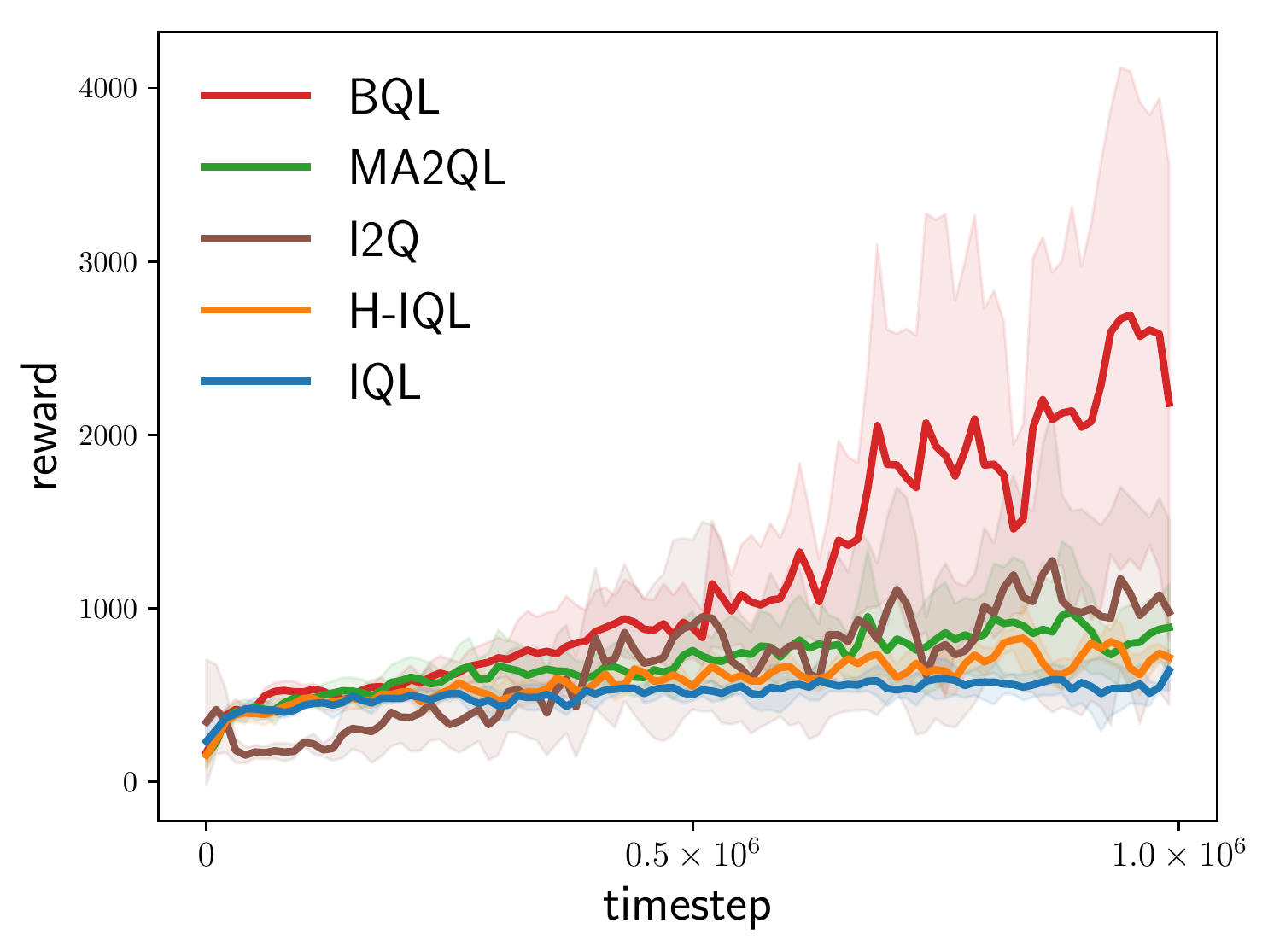}
		\caption{$17 \times 1$ Humanoid}	
            \label{fig:human}
	\end{subfigure}
        \vspace{-0.20cm}
	\caption{Learning curves on Multi-Agent MoJoCo.}
	\label{fig:mujoco}		
	\vspace{-0.10cm}
\end{figure*}

To evaluate BQL in \textit{partially observable} environments, we adopt Multi-Agent MuJoCo \citep{peng2021facmac}, where each agent independently controls one or some joints of the robot. In each task, we test four random seeds and plot the learning curves in Figure~\ref{fig:mujoco}. Here, we set $\lambda=0.5$. In the first three tasks, each agent can only observe the state of its own joints and bodies (with the parameter agent\_obsk = 0). BQL achieves higher reward or learns faster than the baselines, which verifies that BQL could be applied to partially observable environments. In partially observable environments, BQL is performed on transition probabilities of observation $P_i(o_i'|o_i,a_i)$, which also depends on $\boldsymbol{\pi}_{-i}$. The convergence and optimality of BQL can only be guaranteed when one observation $o_i$ uniquely corresponds to one state $s$. It has been proven that the optimality is undecidable in partially observable Markov decision processes (POMDPs) \citep{madani1999undecidability}, so it is not the limitation of BQL. 

In the first three tasks, we only consider two-agent cases in the partially observable setting, because the too limited observation range cannot support strong policies when there are more agents. We also test BQL on $17$-agent Humanoid with full observation in Figure~\ref{fig:human}. BQL obtains significant performance gain in this many-agent task, which can be evidence of the \textbf{good scalability} of BQL. 

\subsection{SMAC}

\begin{figure*}[!t]
	\centering
	\setlength{\abovecaptionskip}{3pt}
	\begin{subfigure}[t]{0.25\linewidth}
		\setlength{\abovecaptionskip}{0pt}
		\includegraphics[width=1\linewidth]{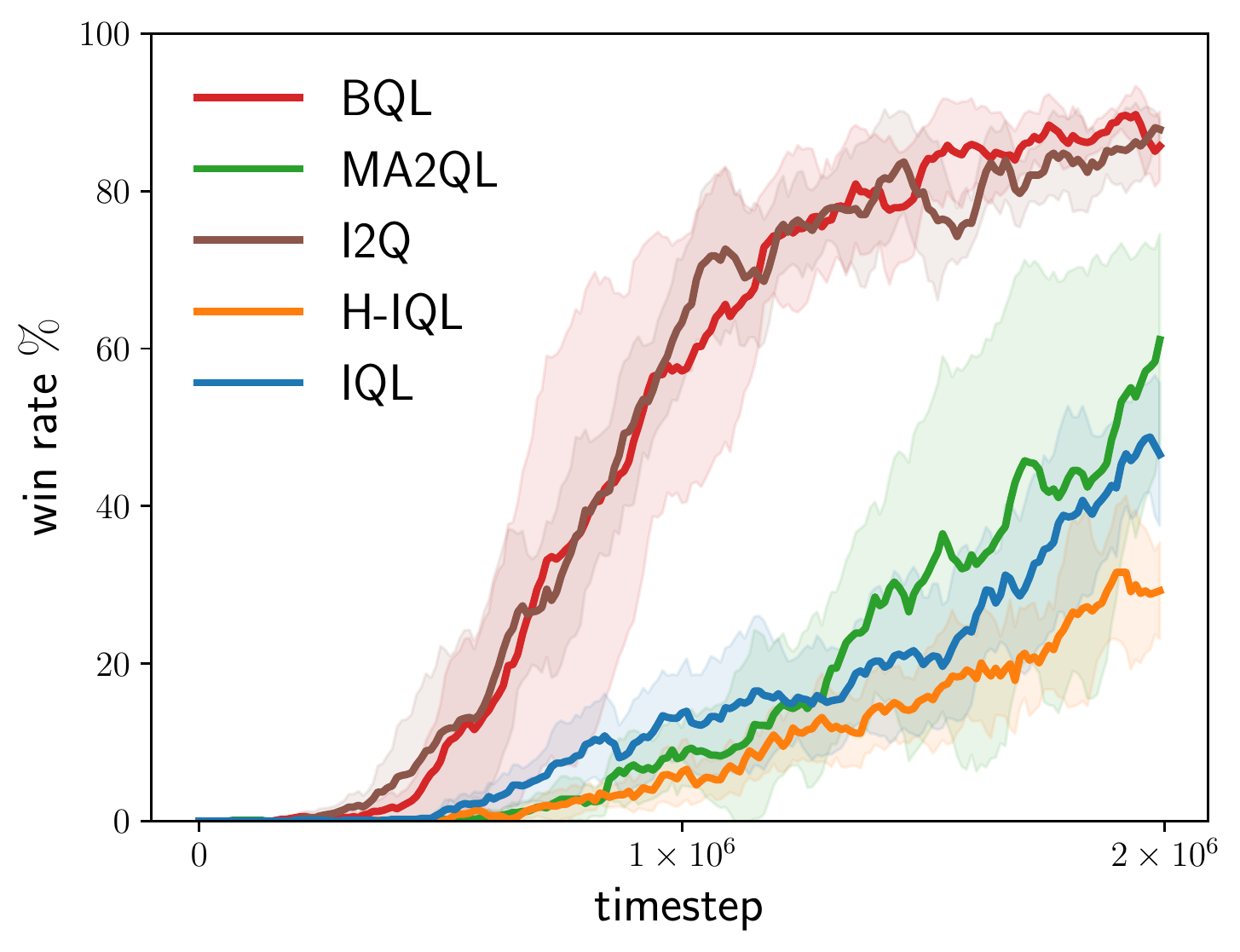}
		\caption{2c\_vs\_64zg}		
	\end{subfigure}
	\hspace{-0.2cm}
	\begin{subfigure}[t]{0.25\linewidth}
		\setlength{\abovecaptionskip}{0pt}
		\includegraphics[width=1\linewidth]{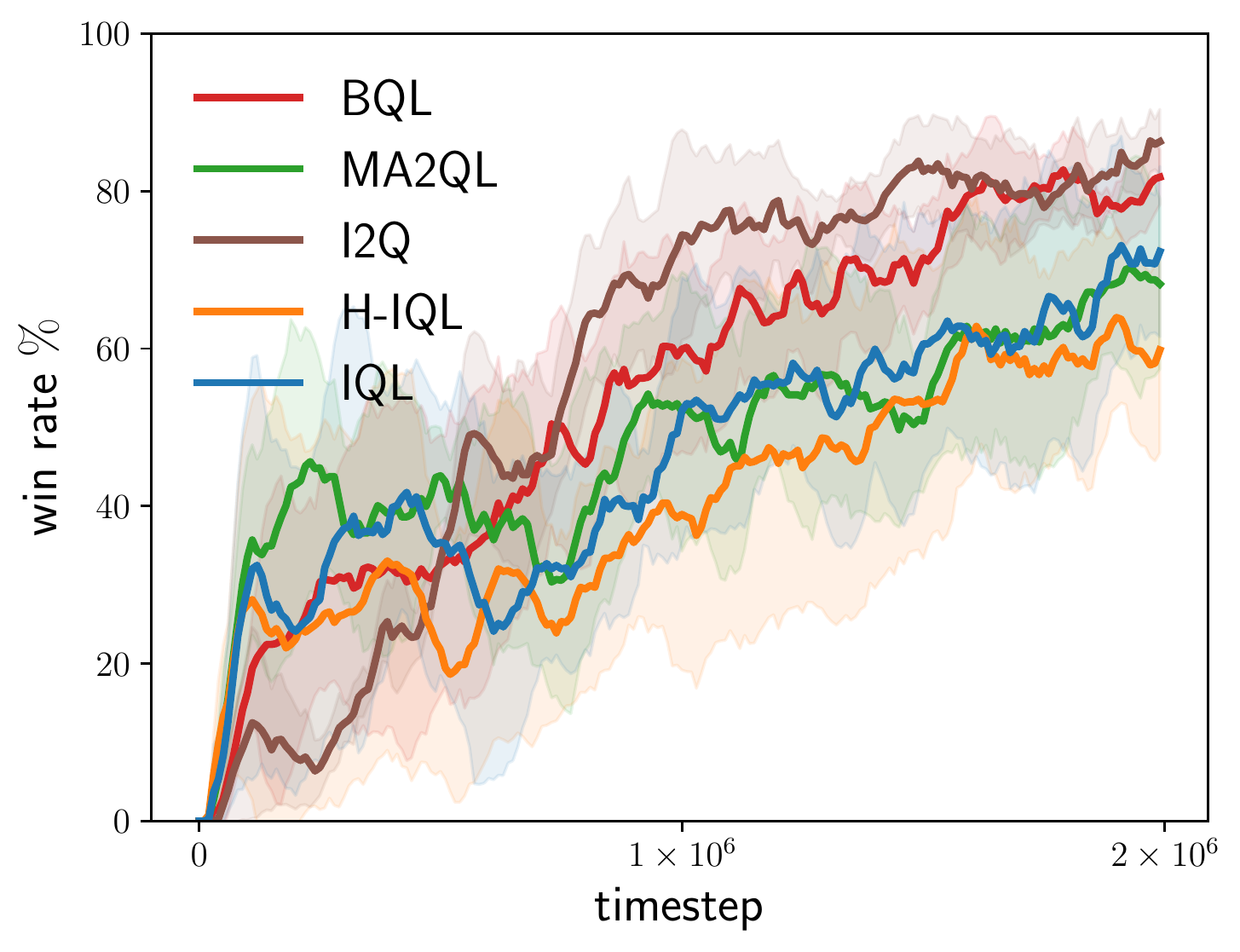}
		\caption{2s3z}	
	\end{subfigure}
	\hspace{-0.2cm}
	\begin{subfigure}[t]{0.25\linewidth}
		\setlength{\abovecaptionskip}{0pt}
		\includegraphics[width=1\linewidth]{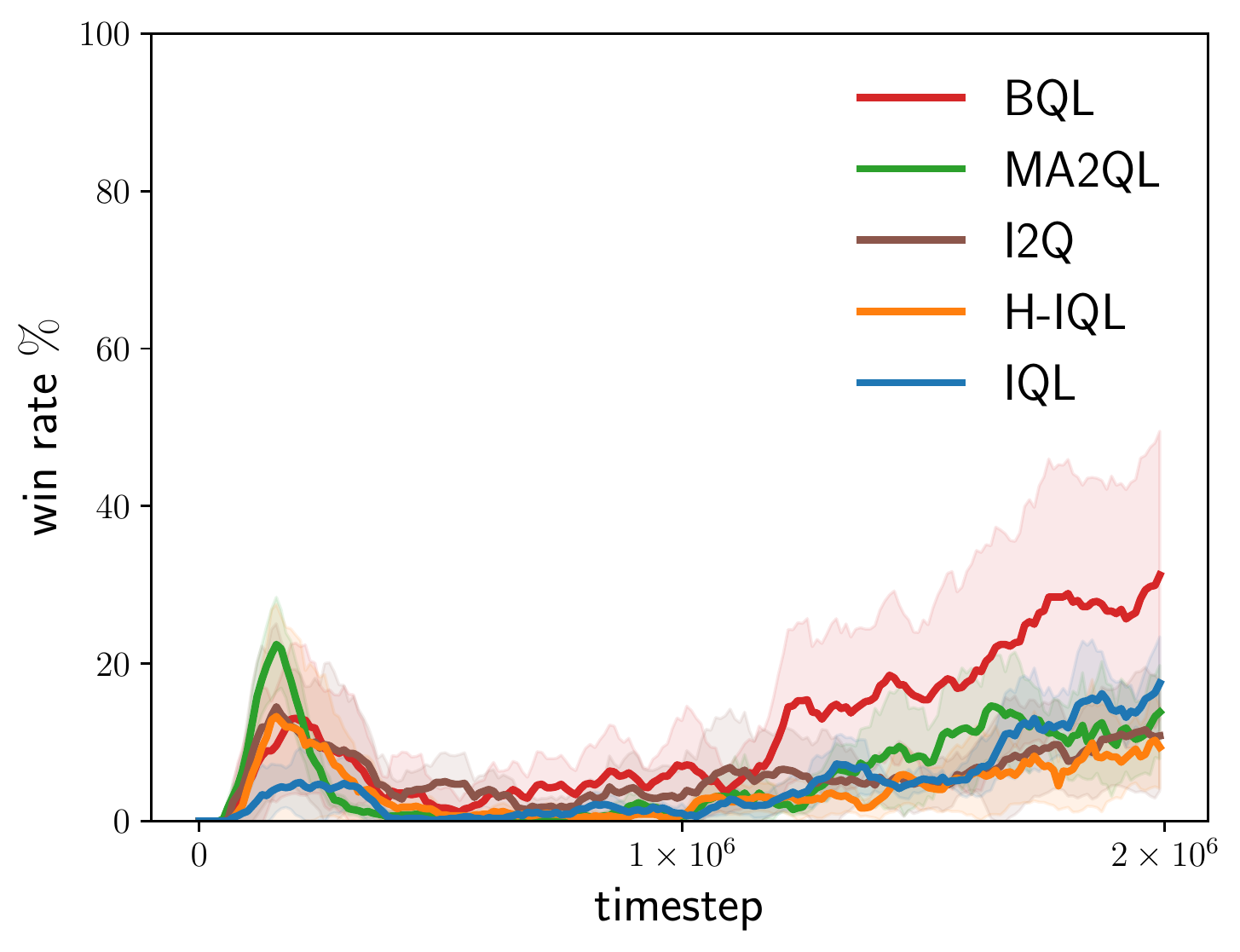}
		\caption{3s5z}	
	\end{subfigure}
	\hspace{-0.2cm}
	\begin{subfigure}[t]{0.25\linewidth}
		\setlength{\abovecaptionskip}{0pt}
		\includegraphics[width=1\linewidth]{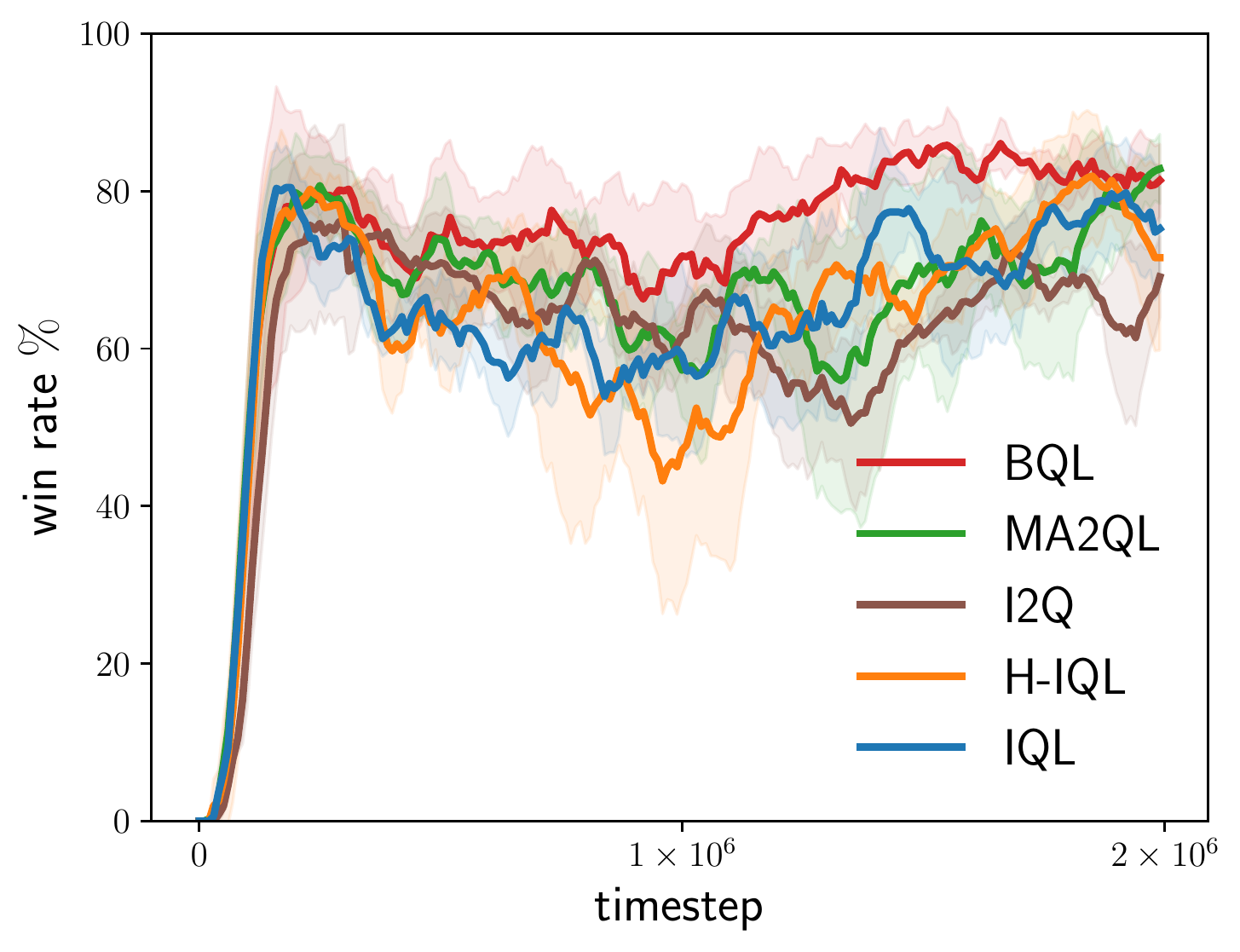}
		\caption{1c3s5z}	
	\end{subfigure}
        \vspace{-0.20cm}
	\caption{Learning curves on SMAC.}
	\label{fig:smac}		
	\vspace{-0.10cm}
\end{figure*}

\begin{figure*}[!t]
	\centering
	\setlength{\abovecaptionskip}{3pt}
	\begin{subfigure}[t]{0.25\linewidth}
		\setlength{\abovecaptionskip}{0pt}
		\includegraphics[width=1\linewidth]{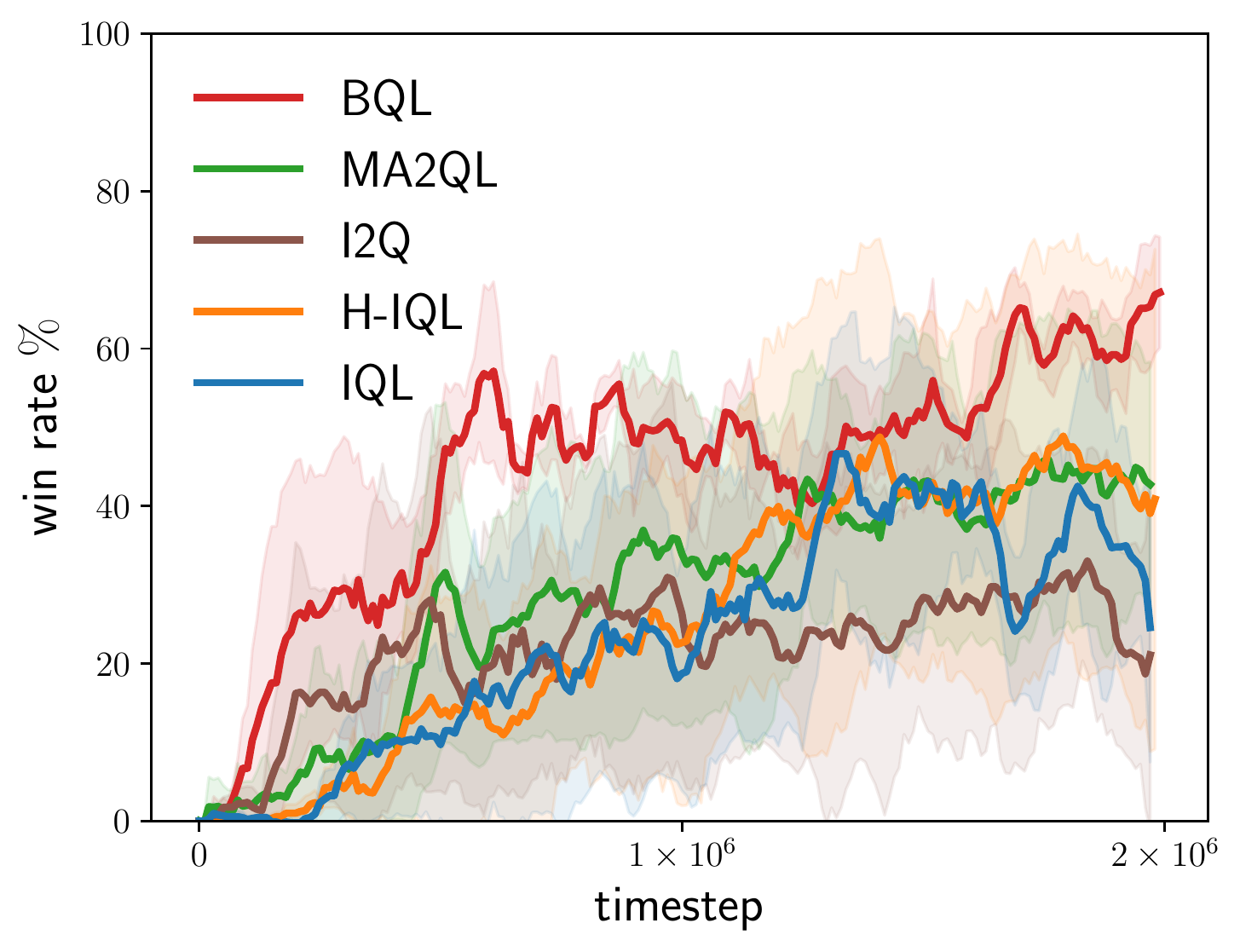}
		\caption{3\_vs\_1 with keeper}		
		\label{fig:grf1}	
	\end{subfigure}
	\hspace{-0.2cm}
	\begin{subfigure}[t]{0.25\linewidth}
		\setlength{\abovecaptionskip}{0pt}
		\includegraphics[width=1\linewidth]{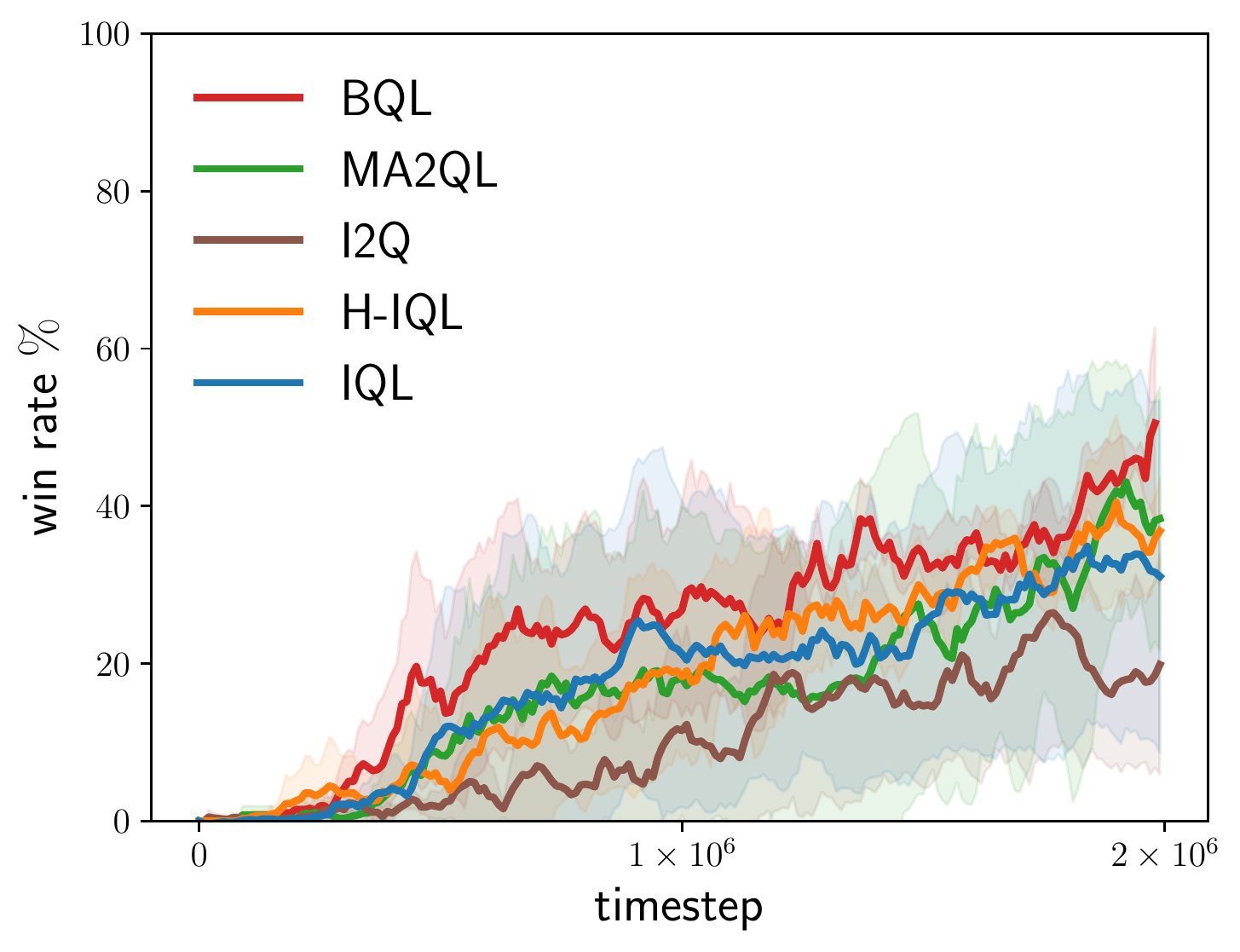}
		\caption{counterattack easy}
		\label{fig:grf2}	
	\end{subfigure}
	\hspace{-0.2cm}
	\begin{subfigure}[t]{0.25\linewidth}
		\setlength{\abovecaptionskip}{0pt}
		\includegraphics[width=1\linewidth]{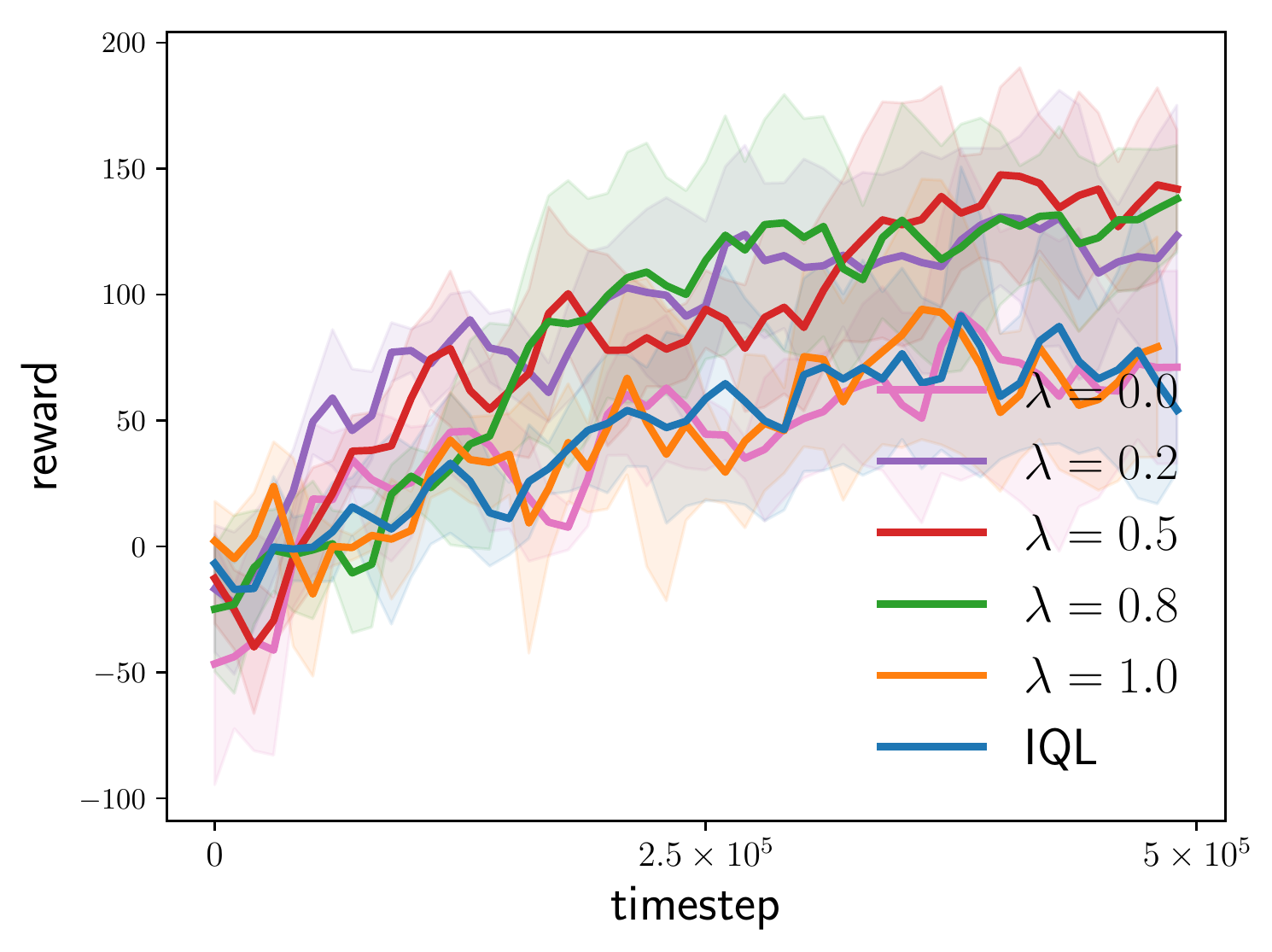}
		\caption{$2 \times 3$ Swimmer}	
		\label{fig:lamb1}	
	\end{subfigure}
	\hspace{-0.20cm}
	\begin{subfigure}[t]{0.25\linewidth}
		\setlength{\abovecaptionskip}{0pt}
		\includegraphics[width=1\linewidth]{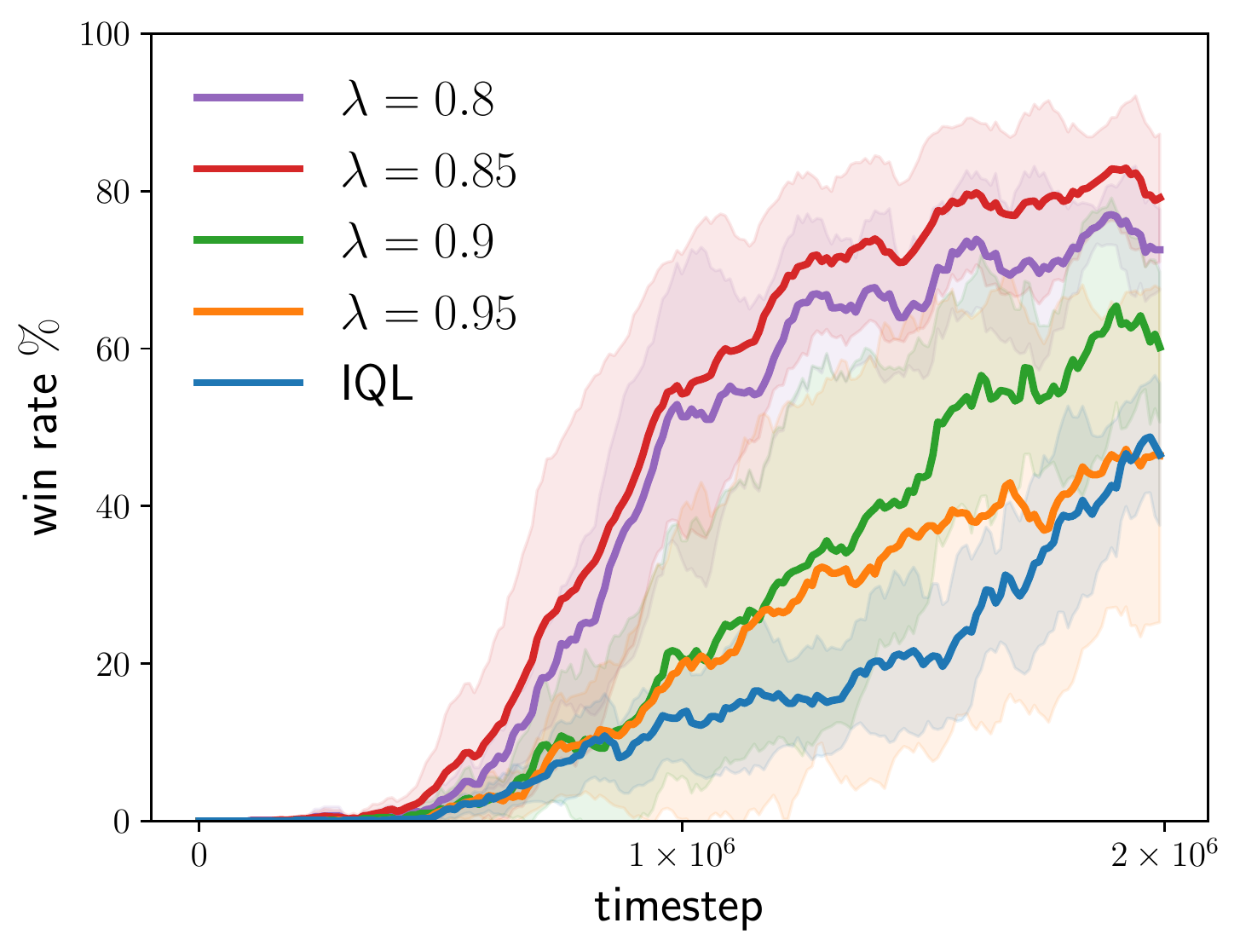}
		\caption{2c\_vs\_64zg}
		\label{fig:lamb2}	
	\end{subfigure}
        \vspace{-0.20cm}
	\caption{(a) and (b): Learning curves on GRF. (c) and (d): Learning curves with different $\lambda$.}		
	\vspace{-0.20cm}
\end{figure*}

We also perform experiments on \textit{partially observable and stochastic} SMAC tasks \citep{samvelyan19smac} with the version SC2.4.10, including both easy and hard maps \citep{yu2021surprising}. Agent numbers vary between $2$ and $9$. We build BQL on the implementation of PyMARL \citep{samvelyan19smac} and train the agents for four random seeds. The learning curves are shown in Figure~\ref{fig:smac}. In general, BQL outperforms the baselines, which verifies that BQL can also obtain performance gain in high-dimensional complex tasks. In 2c\_vs\_64zg, by considering the non-stationary transition probabilities, BQL and I2Q achieve significant improvement over other methods. We conjecture that the interplay between agents is strong in this task.

\subsection{Google Research Football}

Google Research Football (GRF) \cite{kurach2020google} is a physics-based 3D simulator where agents aim to master playing football. We select two academy tasks with sparse rewards: 3\_vs\_1 with keeper (3 agents) and counterattack easy (4 agents). We build BQL on the implementation of PyMARL2 \citep{hu2021rethinking} and train the agents for four random seeds. Although I2Q shows similar results with BQL in some SMAC tasks, BQL can outperform I2Q in GRF as shown in Figure~\ref{fig:grf1} and \ref{fig:grf2}, because GRF is more stochastic than SMAC and the value gap of I2Q will enlarge along with the increase of stochasticity.

\subsection{Hyperparameter $\lambda$}

We further investigate the effectiveness of $\lambda$ in Multi-Agent MuJoCo and SMAC. In the objective~(\ref{eq:2}), $\lambda$ controls the balance between performing best possible operator and offsetting the overestimation caused by the operator. As shown in Figure~\ref{fig:lamb1} and~\ref{fig:lamb2}, too large $\lambda$ will weaken the strength of BQL. When $\lambda = 1.0$, BQL degenerates into IQL. Too small $\lambda$, \textit{i.e.}, $0$, will cause overestimation. If the environment is more complex, \textit{e.g.}, SMAC, overestimation is more likely to occur, so we should set a large $\lambda$. In $2 \times 3$ Swimmer, when $\lambda$ falls within the interval $[0.2,0.8]$, BQL can obtain performance gain, showing the robustness to $\lambda$.

\section{Conclusion}

We propose \textit{best possible operator} and theoretically prove that the policies of agents will converge to the optimal joint policy if each agent independently updates its individual state-action value by the operator. We then simplify the operator and derive BQL, the first decentralized MARL algorithm that guarantees the convergence to the global optimum in stochastic environments. Empirically, BQL outperforms baselines in a variety of multi-agent tasks. We believe BQL can be a new paradigm for fully decentralized learning.

\bibliography{ref}

\begin{thebibliography}{35}
\providecommand{\natexlab}[1]{#1}
\providecommand{\url}[1]{\texttt{#1}}
\expandafter\ifx\csname urlstyle\endcsname\relax
  \providecommand{\doi}[1]{doi: #1}\else
  \providecommand{\doi}{doi: \begingroup \urlstyle{rm}\Url}\fi

\bibitem[Achiam(2018)]{SpinningUp2018}
Achiam, J.
\newblock {Spinning Up in Deep Reinforcement Learning}.
\newblock 2018.

\bibitem[de~Witt et~al.(2020{\natexlab{a}})de~Witt, Gupta, Makoviichuk,
  Makoviychuk, Torr, Sun, and Whiteson]{de2020independent}
de~Witt, C.~S., Gupta, T., Makoviichuk, D., Makoviychuk, V., Torr, P.~H., Sun,
  M., and Whiteson, S.
\newblock {Is} {Independent} {Learning} {All} {You} {Need} in {The} {StarCraft}
  {Multi}-{Agent} {Challenge}?
\newblock \emph{arXiv preprint arXiv:2011.09533}, 2020{\natexlab{a}}.

\bibitem[de~Witt et~al.(2020{\natexlab{b}})de~Witt, Peng, Kamienny, Torr,
  B{\"o}hmer, and Whiteson]{de2020deep}
de~Witt, C.~S., Peng, B., Kamienny, P.-A., Torr, P., B{\"o}hmer, W., and
  Whiteson, S.
\newblock {Deep} {Multi}-{Agent} {Reinforcement} {Learning} for {Decentralized}
  {Continuous} {Cooperative} {Control}.
\newblock \emph{arXiv preprint arXiv:2003.06709}, 2020{\natexlab{b}}.

\bibitem[Hu et~al.(2021)Hu, Jiang, Harding, Wu, and Liao]{hu2021rethinking}
Hu, J., Jiang, S., Harding, S.~A., Wu, H., and Liao, S.-w.
\newblock Rethinking the implementation tricks and monotonicity constraint in
  cooperative multi-agent reinforcement learning.
\newblock \emph{arXiv e-prints}, pp.\  arXiv--2102, 2021.

\bibitem[Iqbal \& Sha(2019)Iqbal and Sha]{iqbal2019actor}
Iqbal, S. and Sha, F.
\newblock {Actor}-{Attention}-{Critic} for {Multi}-{Agent} {Reinforcement}
  {Learning}.
\newblock In \emph{International Conference on Machine Learning (ICML)}, 2019.

\bibitem[Jiang \& Lu(2022)Jiang and Lu]{jiangi2q}
Jiang, J. and Lu, Z.
\newblock I2q: A fully decentralized q-learning algorithm.
\newblock In \emph{Advances in Neural Information Processing Systems
  (NeurIPS)}, 2022.

\bibitem[Konan et~al.(2021)Konan, Seraj, and Gombolay]{konan2021iterated}
Konan, S.~G., Seraj, E., and Gombolay, M.
\newblock Iterated reasoning with mutual information in cooperative and
  byzantine decentralized teaming.
\newblock In \emph{International Conference on Learning Representations
  (ICLR)}, 2021.

\bibitem[Kurach et~al.(2020)Kurach, Raichuk, Stanczyk, Zajkac, Bachem,
  Espeholt, Riquelme, Vincent, Michalski, Bousquet, et~al.]{kurach2020google}
Kurach, K., Raichuk, A., Stanczyk, P., Zajkac, M., Bachem, O., Espeholt, L.,
  Riquelme, C., Vincent, D., Michalski, M., Bousquet, O., et~al.
\newblock Google research football: A novel reinforcement learning environment.
\newblock In \emph{Proceedings of the AAAI Conference on Artificial
  Intelligence (AAAI)}, 2020.

\bibitem[Lauer \& Riedmiller(2000)Lauer and Riedmiller]{lauer2000algorithm}
Lauer, M. and Riedmiller, M.
\newblock An algorithm for distributed reinforcement learning in cooperative
  multi-agent systems.
\newblock In \emph{International Conference on Machine Learning (ICML)}, 2000.

\bibitem[Li \& He(2020)Li and He]{li2020multi}
Li, H. and He, H.
\newblock Multi-agent trust region policy optimization.
\newblock \emph{arXiv preprint arXiv:2010.07916}, 2020.

\bibitem[Li et~al.(2019)Li, Zhang, Bian, Tong, and Liu]{li2019cooperative}
Li, X., Zhang, J., Bian, J., Tong, Y., and Liu, T.
\newblock A cooperative multi-agent reinforcement learning framework for
  resource balancing in complex logistics network.
\newblock In \emph{International Conference on Autonomous Agents and Multiagent
  Systems (AAMAS)}, 2019.

\bibitem[Li et~al.(2022)Li, Xie, and Lu]{li2022difference}
Li, Y., Xie, G., and Lu, Z.
\newblock Difference advantage estimation for multi-agent policy gradients.
\newblock In \emph{International Conference on Machine Learning (ICML)}, 2022.

\bibitem[Lillicrap et~al.(2016)Lillicrap, Hunt, Pritzel, Heess, Erez, Tassa,
  Silver, and Wierstra]{lillicrap2016continuous}
Lillicrap, T.~P., Hunt, J.~J., Pritzel, A., Heess, N., Erez, T., Tassa, Y.,
  Silver, D., and Wierstra, D.
\newblock Continuous control with deep reinforcement learning.
\newblock In \emph{International Conference on Learning Representations
  (ICLR)}, 2016.

\bibitem[Lowe et~al.(2017)Lowe, Wu, Tamar, Harb, Abbeel, and
  Mordatch]{lowe2017multi}
Lowe, R., Wu, Y., Tamar, A., Harb, J., Abbeel, P., and Mordatch, I.
\newblock {Multi}-{Agent} {Actor}-{Critic} for {Mixed}
  {Cooperative}-{Competitive} {Environments}.
\newblock \emph{Neural Information Processing Systems (NeurIPS)}, 2017.

\bibitem[Madani et~al.(1999)Madani, Hanks, and
  Condon]{madani1999undecidability}
Madani, O., Hanks, S., and Condon, A.
\newblock On the undecidability of probabilistic planning and infinite-horizon
  partially observable markov decision problems.
\newblock In \emph{AAAI/IAAI}, 1999.

\bibitem[Matignon et~al.(2007)Matignon, Laurent, and
  Le~Fort-Piat]{matignon2007hysteretic}
Matignon, L., Laurent, G.~J., and Le~Fort-Piat, N.
\newblock Hysteretic q-learning: an algorithm for decentralized reinforcement
  learning in cooperative multi-agent teams.
\newblock In \emph{IEEE/RSJ International Conference on Intelligent Robots and
  Systems (IROS)}, 2007.

\bibitem[Peng et~al.(2021)Peng, Rashid, Schroeder~de Witt, Kamienny, Torr,
  B{\"o}hmer, and Whiteson]{peng2021facmac}
Peng, B., Rashid, T., Schroeder~de Witt, C., Kamienny, P.-A., Torr, P.,
  B{\"o}hmer, W., and Whiteson, S.
\newblock Facmac: Factored multi-agent centralised policy gradients.
\newblock \emph{Advances in Neural Information Processing Systems (NeurIPS)},
  2021.

\bibitem[Puterman(1994)]{puterman1994markov}
Puterman, M.~L.
\newblock Markov decision processes: Discrete stochastic dynamic programming,
  1994.

\bibitem[Rashid et~al.(2018)Rashid, Samvelyan, De~Witt, Farquhar, Foerster, and
  Whiteson]{rashid2018qmix}
Rashid, T., Samvelyan, M., De~Witt, C.~S., Farquhar, G., Foerster, J., and
  Whiteson, S.
\newblock {QMIX}: {Monotonic} {Value} {Function} {Factorisation} for {Deep}
  {Multi}-{Agent} {Reinforcement} {Learning}.
\newblock In \emph{International Conference on Machine Learning (ICML)}, 2018.

\bibitem[Rashid et~al.(2020)Rashid, Farquhar, Peng, and
  Whiteson]{rashid2020weighted}
Rashid, T., Farquhar, G., Peng, B., and Whiteson, S.
\newblock Weighted qmix: Expanding monotonic value function factorisation for
  deep multi-agent reinforcement learning.
\newblock \emph{Advances in Neural Information Processing Systems (NeurIPS)},
  2020.

\bibitem[Samvelyan et~al.(2019)Samvelyan, Rashid, de~Witt, Farquhar, Nardelli,
  Rudner, Hung, Torr, Foerster, and Whiteson]{samvelyan19smac}
Samvelyan, M., Rashid, T., de~Witt, C.~S., Farquhar, G., Nardelli, N., Rudner,
  T. G.~J., Hung, C.-M., Torr, P. H.~S., Foerster, J., and Whiteson, S.
\newblock {The} {StarCraft} {Multi}-{Agent} {Challenge}.
\newblock \emph{arXiv preprint arXiv:1902.04043}, 2019.

\bibitem[Son et~al.(2019)Son, Kim, Kang, Hostallero, and Yi]{son2019qtran}
Son, K., Kim, D., Kang, W.~J., Hostallero, D.~E., and Yi, Y.
\newblock {QTRAN}: {Learning} {To} {Factorize} with {Transformation} for
  {Cooperative} {Multi}-{Agent} {Reinforcement} {Learning}.
\newblock In \emph{International Conference on Machine Learning (ICML)}, 2019.

\bibitem[Su \& Lu(2022)Su and Lu]{su2021divergence}
Su, K. and Lu, Z.
\newblock {{Divergence}}-{{Regularized}} {{Multi}}-{{Agent}}
  {{Actor}}-{{Critic}}.
\newblock In \emph{International Conference on Machine Learning (ICML)}, 2022.

\bibitem[Su et~al.(2022)Su, Zhou, Gan, Wang, and Lu]{su2022ma2ql}
Su, K., Zhou, S., Gan, C., Wang, X., and Lu, Z.
\newblock {MA2QL}: A minimalist approach to fully decentralized multi-agent
  reinforcement learning.
\newblock \emph{arXiv preprint arXiv:2209.08244}, 2022.

\bibitem[Sunehag et~al.(2018)Sunehag, Lever, Gruslys, Czarnecki, Zambaldi,
  Jaderberg, Lanctot, Sonnerat, Leibo, Tuyls, et~al.]{sunehag2018value}
Sunehag, P., Lever, G., Gruslys, A., Czarnecki, W.~M., Zambaldi, V., Jaderberg,
  M., Lanctot, M., Sonnerat, N., Leibo, J.~Z., Tuyls, K., et~al.
\newblock {Value}-{Decomposition} {Networks} for {Cooperative} {Multi}-{Agent}
  {Learning} {Based} on {Team} {Reward}.
\newblock In \emph{International Conference on Autonomous Agents and Multiagent
  Systems (AAMAS)}, 2018.

\bibitem[Tan(1993)]{tan1993multi}
Tan, M.
\newblock Multi-agent reinforcement learning: Independent vs. cooperative
  agents.
\newblock In \emph{International Conference on Machine Learning (ICML)}, 1993.

\bibitem[Vinyals et~al.(2019)Vinyals, Babuschkin, Czarnecki, Mathieu, Dudzik,
  Chung, Choi, Powell, Ewalds, Georgiev, et~al.]{vinyals2019grandmaster}
Vinyals, O., Babuschkin, I., Czarnecki, W.~M., Mathieu, M., Dudzik, A., Chung,
  J., Choi, D.~H., Powell, R., Ewalds, T., Georgiev, P., et~al.
\newblock {Grandmaster} {Level} in {StarCraft} {II} {Using} {Multi}-{Agent}
  {Reinforcement} {Learning}.
\newblock \emph{Nature}, 575\penalty0 (7782):\penalty0 350--354, 2019.

\bibitem[Wang et~al.(2021{\natexlab{a}})Wang, Ren, Liu, Yu, and
  Zhang]{wang2020qplex}
Wang, J., Ren, Z., Liu, T., Yu, Y., and Zhang, C.
\newblock {{Qplex}}: {{Duplex}} {{Dueling}} {{Multi}}-{{Agent}}
  {{Q}-{Learning}}.
\newblock In \emph{International Conference on Learning Representations
  (ICLR)}, 2021{\natexlab{a}}.

\bibitem[Wang et~al.(2021{\natexlab{b}})Wang, Xu, Gu, Song, and
  Green]{wang2021multi}
Wang, J., Xu, W., Gu, Y., Song, W., and Green, T.~C.
\newblock Multi-agent reinforcement learning for active voltage control on
  power distribution networks.
\newblock \emph{Advances in Neural Information Processing Systems (NeurIPS)},
  2021{\natexlab{b}}.

\bibitem[Wang et~al.(2020)Wang, Han, Wang, Dong, and Zhang]{wang2020dop}
Wang, Y., Han, B., Wang, T., Dong, H., and Zhang, C.
\newblock Dop: Off-policy multi-agent decomposed policy gradients.
\newblock In \emph{International Conference on Learning Representations
  (ICLR)}, 2020.

\bibitem[Xu et~al.(2021)Xu, Wang, Wang, Jia, and Lu]{xu2021hierarchically}
Xu, B., Wang, Y., Wang, Z., Jia, H., and Lu, Z.
\newblock Hierarchically and cooperatively learning traffic signal control.
\newblock In \emph{AAAI Conference on Artificial Intelligence (AAAI)}, 2021.

\bibitem[Yu et~al.(2021)Yu, Velu, Vinitsky, Wang, Bayen, and
  Wu]{yu2021surprising}
Yu, C., Velu, A., Vinitsky, E., Wang, Y., Bayen, A., and Wu, Y.
\newblock The surprising effectiveness of ppo in cooperative, multi-agent
  games.
\newblock \emph{arXiv preprint arXiv:2103.01955}, 2021.

\bibitem[Zhang et~al.(2018)Zhang, Yang, Liu, Zhang, and Basar]{zhang2018fully}
Zhang, K., Yang, Z., Liu, H., Zhang, T., and Basar, T.
\newblock Fully decentralized multi-agent reinforcement learning with networked
  agents.
\newblock In \emph{International Conference on Machine Learning (ICML)}, 2018.

\bibitem[Zhang et~al.(2021{\natexlab{a}})Zhang, Yang, and
  Ba{\c{s}}ar]{zhang2021multi}
Zhang, K., Yang, Z., and Ba{\c{s}}ar, T.
\newblock Multi-agent reinforcement learning: A selective overview of theories
  and algorithms.
\newblock \emph{Handbook of Reinforcement Learning and Control}, pp.\
  321--384, 2021{\natexlab{a}}.

\bibitem[Zhang et~al.(2021{\natexlab{b}})Zhang, Li, Wang, Xie, and
  Lu]{zhang2021fop}
Zhang, T., Li, Y., Wang, C., Xie, G., and Lu, Z.
\newblock {FOP}: {Factorizing} {Optimal} {Joint} {Policy} of
  {Maximum}-{Entropy} {Multi}-{Agent} {Reinforcement} {Learning}.
\newblock In \emph{International Conference on Machine Learning (ICML)},
  2021{\natexlab{b}}.

\end{thebibliography}
\bibliographystyle{icml2023}

\newpage
\appendix
\onecolumn

\section{Comparison with Hysteretic IQL}
\label{appendix:hysteretic}
Hysteretic IQL is a special case of BQL when the environment is deterministic. To thoroughly illustrate that, we rewrite the loss function of BQL
\begin{align*}
w(s, a_i)\left(Q_i\left(s, a_i\right)- \mathbb{E}_{\tilde{P}_i(s'|s,a_i)}\left [r + \gamma \underset{a'_{i}}{\max}Q_i(s',a'_{i}) \right ]  \right)^2, \\
w(s, a_i) = \left\{\begin{matrix}
1 & \text{if }  \mathbb{E}_{\tilde{P}_i(s'|s,a_i)}\left [r + \gamma \underset{a'_{i}}{\max}Q_i(s',a'_{i}) \right ]   > Q_i\left(s, a_i\right)\\ 
\lambda & \text{else}.
\end{matrix}\right.
\setlength{\belowdisplayskip}{3pt}
\end{align*}
If $\lambda = 0$, the update of BQL is
$$Q_i(s,a_i) = \max\left(Q_i(s,a_i),\mathbb{E}_{\tilde{P}_i(s'|s,a_i)}\left [r + \gamma \underset{a'_{i}}{\max}Q_i(s',a'_{i}) \right ] \right).$$
Hysteretic IQL follows the loss function
\begin{align*}
w(s, a_i)\left(Q_i\left(s, a_i\right)- r - \gamma \underset{a'_{i}}{\max}Q_i(s',a'_{i})  \right)^2, \\
w(s, a_i) = \left\{\begin{matrix}
1 & \text{if }  r + \gamma \underset{a'_{i}}{\max}Q_i(s',a'_{i})  > Q_i\left(s, a_i\right)\\ 
\lambda & \text{else}.
\end{matrix}\right.
\setlength{\belowdisplayskip}{3pt}
\end{align*}
If $\lambda = 0$, Hysteretic IQL degenerates into Distributed IQL \citep{lauer2000algorithm}
$$Q_i(s,a_i) = \max\left(Q_i(s,a_i), r + \gamma \underset{a'_{i}}{\max}Q_i(s',a'_{i}) \right).$$
BQL takes the max of the expected target on transition probability $\tilde{P}_i(s'|s,a_i)$, while Hysteretic IQL takes the max of the target on the next state $s'$. When the environment is deterministic, they are equivalent. However, in stochastic environments, Hysteretic IQL cannot guarantee to converge to the global optimum since the environment will not always transition to the same $s'$. BQL can guarantee the global optimum in both deterministic and stochastic environments.

\section{Efficiency of BQL}
\label{appendix:buffer}
\begin{figure*}[h]
	\centering
	\setlength{\abovecaptionskip}{3pt}
	\includegraphics[width=0.3\linewidth]{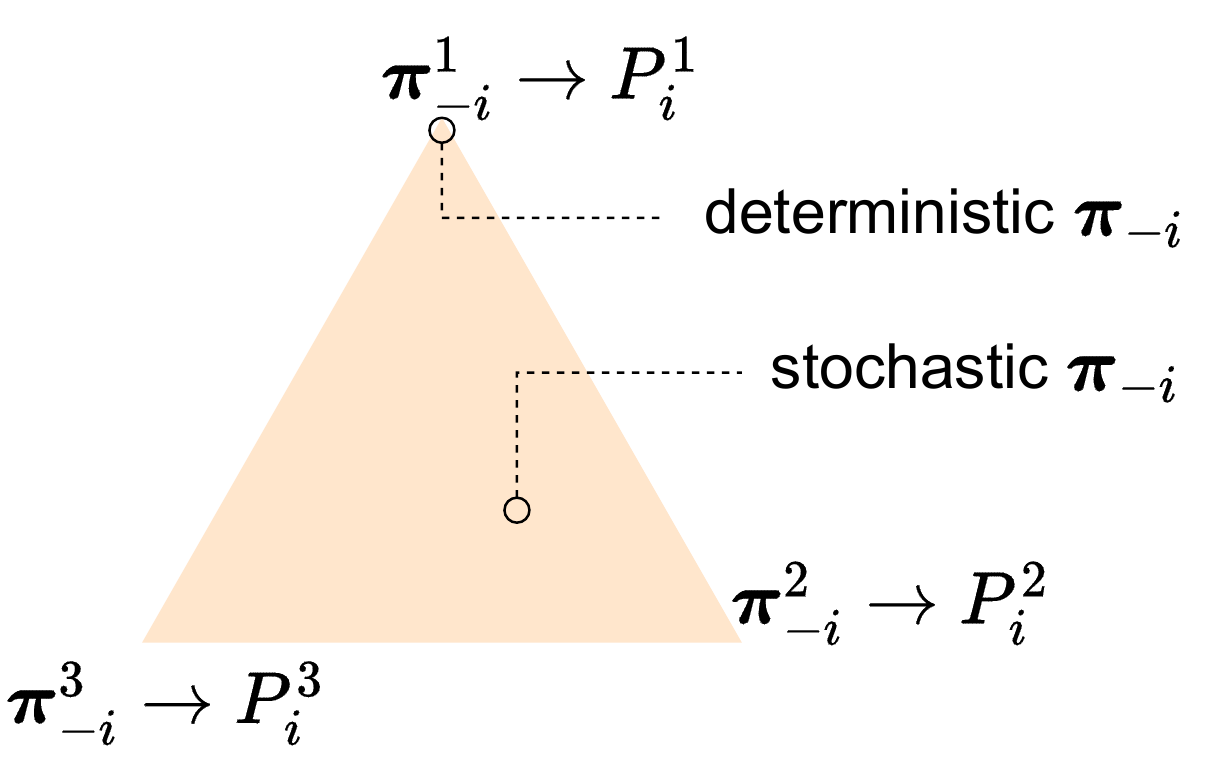}
	\caption{Space of other agents' policies $\boldsymbol{\pi}_{-i}$ given an $(s,a_i)$.}	
	\label{fig:space}	
	\vspace{-0.20cm}
\end{figure*}

\begin{figure*}[t]
	\centering
	\setlength{\abovecaptionskip}{3pt}
	\includegraphics[width=0.6\linewidth]{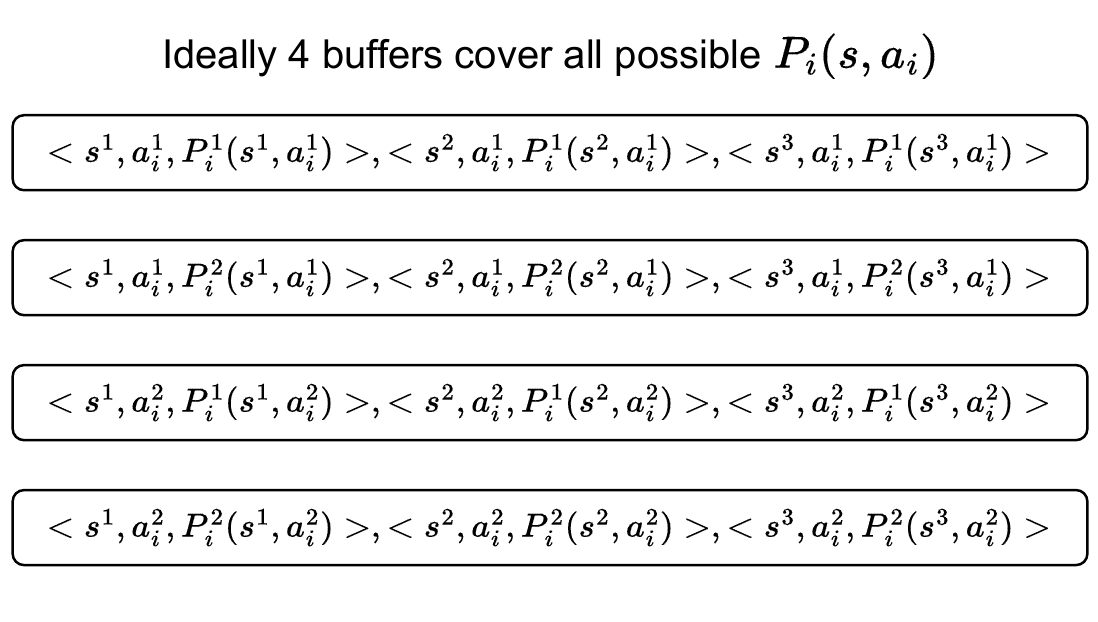}
	\caption{Toy case for illustrating the ideal buffer number. $|\mathcal{S}| = 3$, $|\mathcal{A}_{i}|=2$, and $|\mathcal{A}_{-i}| =2$ corresponding to $P_i^1$ and $P_i^2$. We can see that any $P_i(s,a_i)$ can be found in the 4 buffers.}	
	\label{fig:buff_num}
	\vspace{-0.40cm}	
\end{figure*}
We will discuss the efficiency of collecting the replay buffer for BQL. The space of other agents' policies $\boldsymbol{\pi}_{-i}$ given $(s,a_i)$ pair is a convex polytope. For clarity, Figure~\ref{fig:space} shows a triangle space. Each $\boldsymbol{\pi}_{-i}$ corresponds to a $P_i(s'|s,a_i)$. Deterministic policies $\boldsymbol{\pi}_{-i}$ locate at the vertexes, while the edges and the inside of the polytope are stochastic $\boldsymbol{\pi}_{-i}$, the mix of deterministic ones. Since BQL only considers deterministic policies, the buffer series only needs to cover all the vertexes by acting deterministic policies in the collection of each buffer $\mathcal{D}_i^m$, which is efficient. BQL needs only $|\mathcal{A}_{i}| \times |\mathcal{A}_{-i}| = |\mathcal{A}|$ small buffers, which is irrelevant to state space $|\mathcal{S}|$, to meet the requirement of simplified best possible operator that any one of possible $P_i(s'|s,a_i)$ can be found in one (ideally only one) buffer given $(s,a_i)$ pair. More specifically, $|\mathcal{A}_{i}|$ buffers are needed to cover action space, and $|\mathcal{A}_{-i}|$ buffers are needed to cover transition space for each action. We intuitively illustrate this in Figure~\ref{fig:buff_num}. Each state in $\mathcal{D}_i^m$ requires $\#$ samples to estimate the expectation in (\ref{eq:bql-1}), so the sample complexity is $O(|\mathcal{A}||S|\#)$. For the joint Q-learning~(\ref{eq:joint_q}), the most efficient known method to guarantee the convergence and optimality in stochastic environments, each state-joint action pair $(s,\boldsymbol{a})$ requires $\#$ samples to estimate the expectation, so the sample complexity is also $O(|\mathcal{A}||S|\#)$. Thus, BQL is close to the joint Q-learning in terms of sample complexity, which is empirically verified in Figure~\ref{fig:jql}.

\begin{figure*}[!h]
	\centering
	\setlength{\abovecaptionskip}{3pt}
	\includegraphics[width=0.3\linewidth]{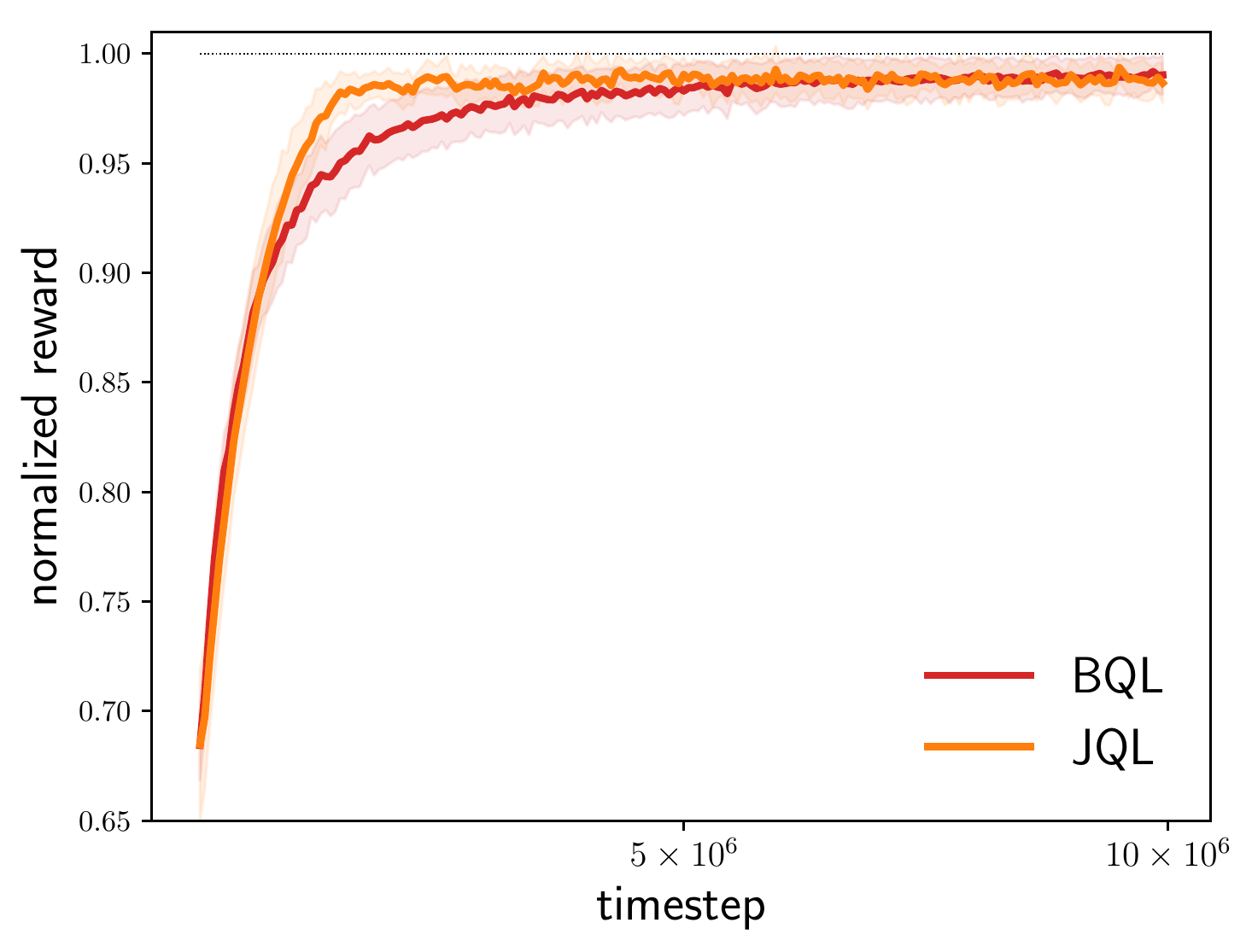}
	\caption{Learning curves of BQL and joint Q-learning (JQL). BQL shows similar sample efficiency to JQL.}	
	\label{fig:jql}	
	\vspace{-0.30cm}
\end{figure*}

One may ask ``since you obtain all possible transition probabilities, why not perform IQL on each transition probability and choose the highest value?'' Actually, this naive algorithm can also learn the optimal policy, but the buffer collection of the naive algorithm is much more costly than that of BQL. The naive algorithm requires that any one of possible \textit{transition probability functions of the whole state-action space} could be found in one buffer, which needs $|\mathcal{A}_{-i}|^{|\mathcal{S}|}$ buffers. And training IQL $|\mathcal{A}_{-i}|^{|\mathcal{S}|}$ times is also formidable. BQL only requires that any one of possible \textit{transition probability of any state-action pair} could be found in one buffer, which is much more efficient.

However, considering sample efficiency, BQL with neural networks only maintains one replay buffer $\mathcal{D}_i$ containing all historical experiences, which is the same as IQL. $P_i$ in $\mathcal{D}_i$ corresponds to the average of other agents' historical policies, which is stochastic. Therefore, to guarantee the optimality, in theory, BQL with one buffer has to go through almost the whole $\boldsymbol{\pi}_{-i}$ space, which is costly. As shown in Figure~\ref{fig:matrix}, BQL- (with one buffer) outperforms IQL but cannot achieve similar results as BQL (with buffer series), showing that maintaining one buffer is costly but still effective. In neural network instantiation, we show the results of BQL with the buffer series in Figure~\ref{fig:buffer}. Due to sample efficiency, the buffer series cannot achieve strong performance, and maintaining one buffer like IQL is a better choice in complex environments.

\begin{figure*}[t]
	\centering
	\setlength{\abovecaptionskip}{3pt}
	\begin{subfigure}[t]{0.3\linewidth}
		\setlength{\abovecaptionskip}{0pt}
		\includegraphics[width=1\linewidth]{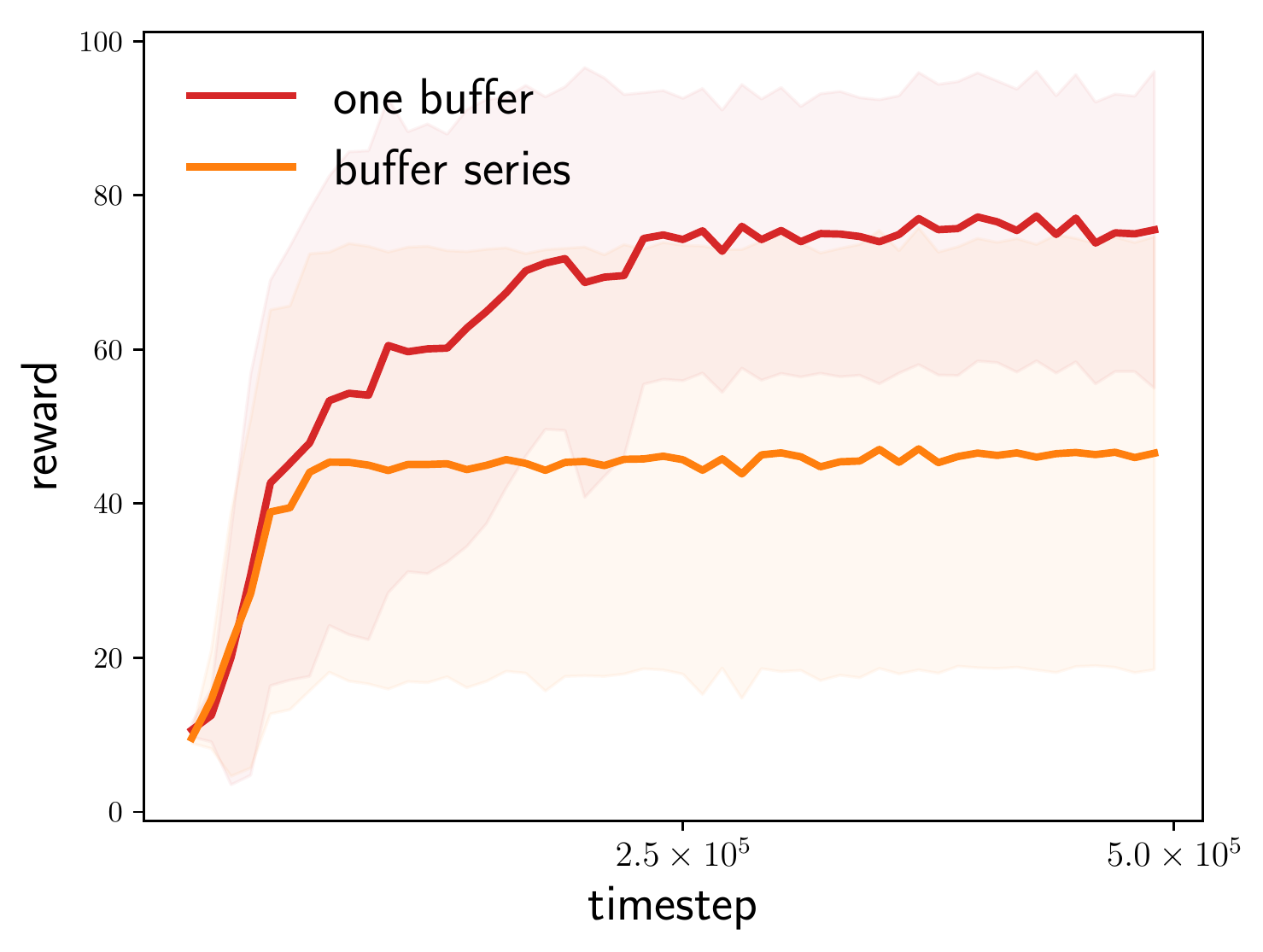}
		\caption{MPE, $\beta = 0.4$}		
	\end{subfigure}
	\begin{subfigure}[t]{0.3\linewidth}
		\setlength{\abovecaptionskip}{0pt}
		\includegraphics[width=1\linewidth]{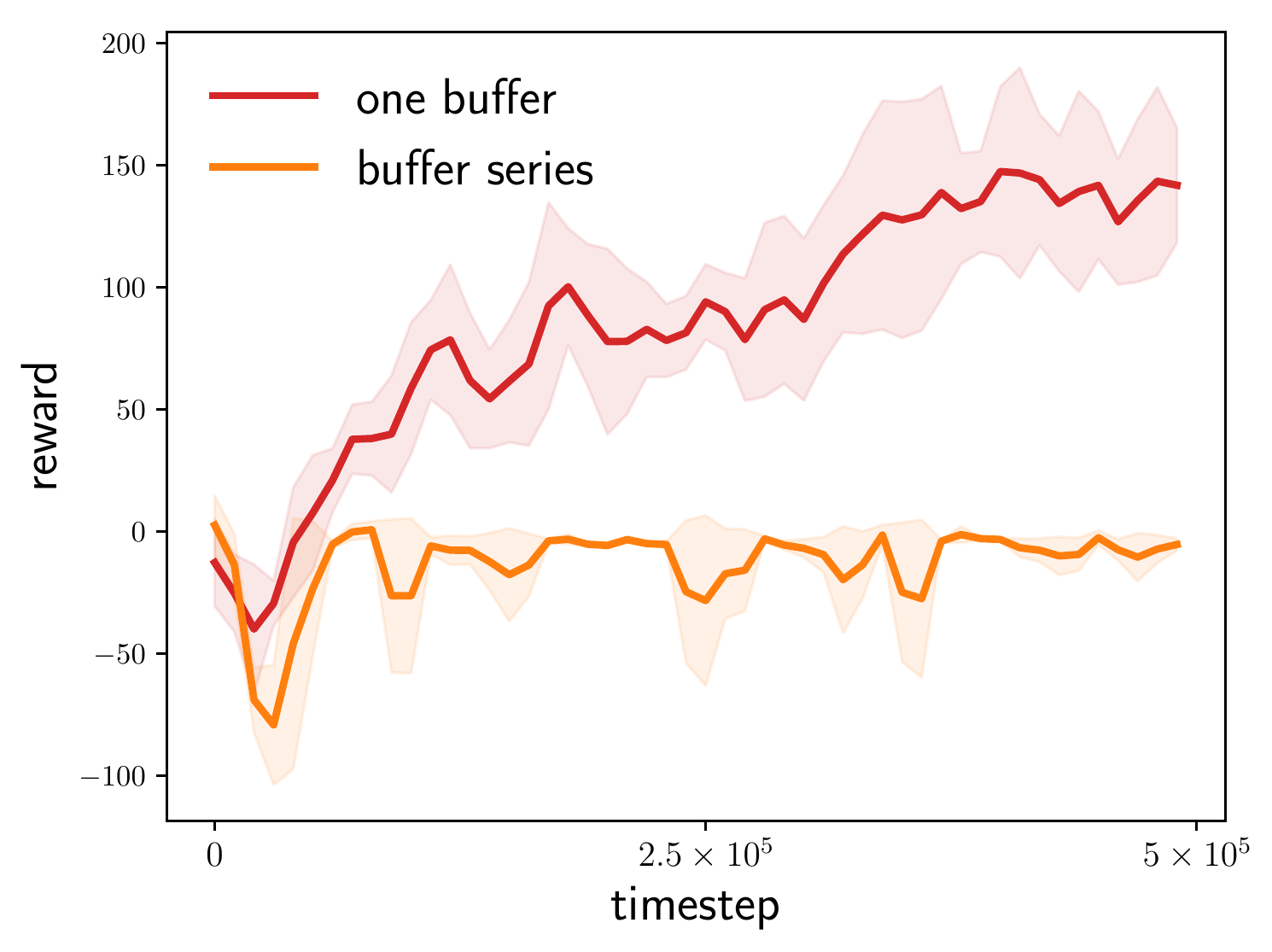}
		\caption{$2 \times 3$ Swimmer}	
	\end{subfigure}
	\caption{BQL with one buffer and buffer series.}
	\label{fig:buffer}		
	\vspace{-0.40cm}
\end{figure*}

\section{Other Base Algorithms}

Besides DDPG, BQL could also be built on other variants of Q-learning, \textit{e.g.}, SAC. Figure~\ref{fig:baseline} shows that BQL could also obtain performance gain on independent SAC. Independent PPO (IPPO) \citep{de2020independent} is an on-policy decentralized MARL baseline. IPPO is not a Q-learning method so it cannot be the base algorithm of BQL. On-policy algorithms do not use old experiences, which makes them weak on sample efficiency \citep{SpinningUp2018} especially in fully decentralized settings as shown in Figure~\ref{fig:baseline}. Thus, it is unfair to compare off-policy algorithms with on-policy algorithms.

\begin{figure*}[t]
	\centering
	\setlength{\abovecaptionskip}{3pt}
	\begin{subfigure}[t]{0.3\linewidth}
		\setlength{\abovecaptionskip}{0pt}
		\includegraphics[width=1\linewidth]{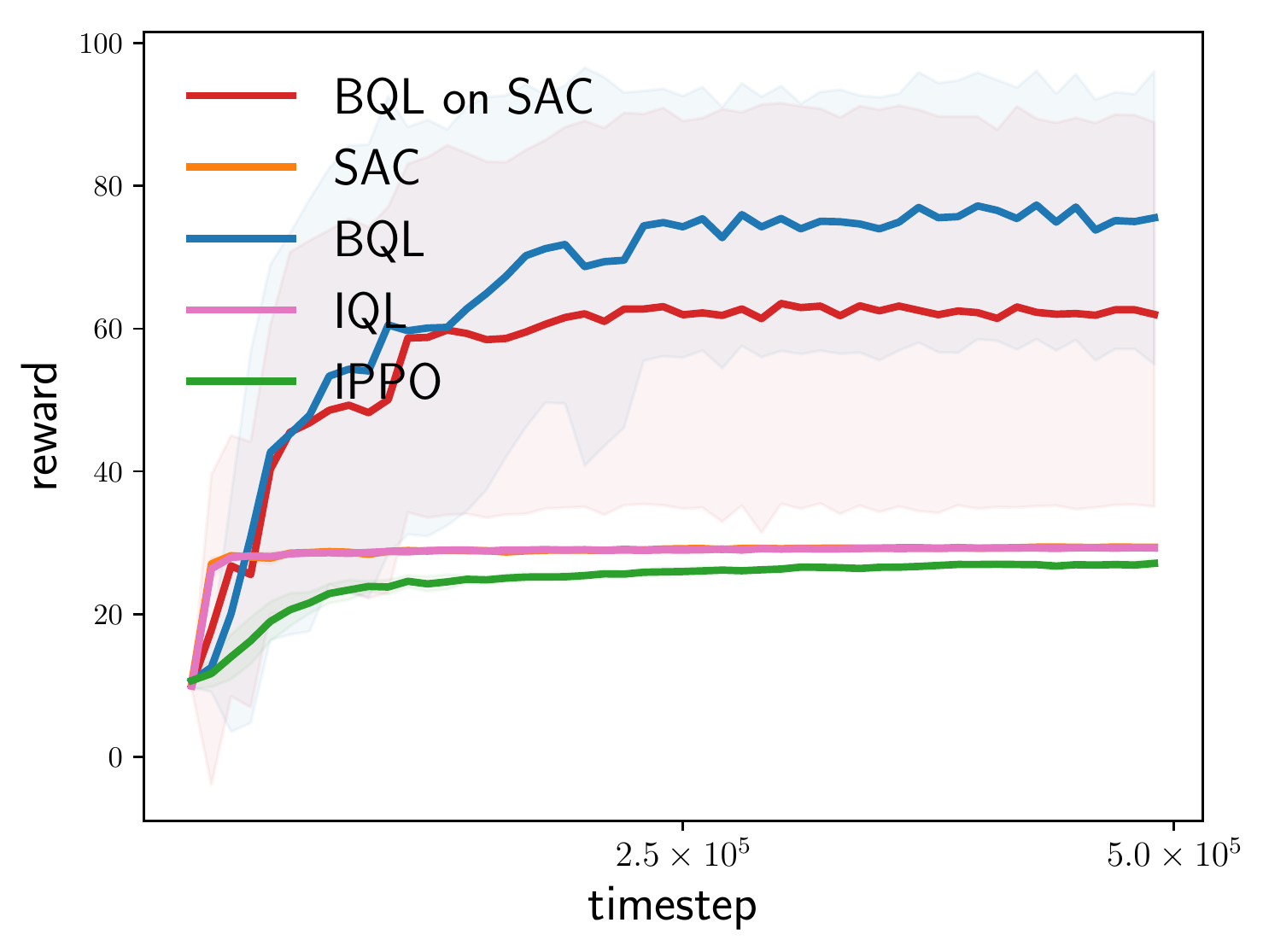}
		\caption{MPE, $\beta = 0.4$}	
	\end{subfigure}
	\begin{subfigure}[t]{0.3\linewidth}
		\setlength{\abovecaptionskip}{0pt}
		\includegraphics[width=1\linewidth]{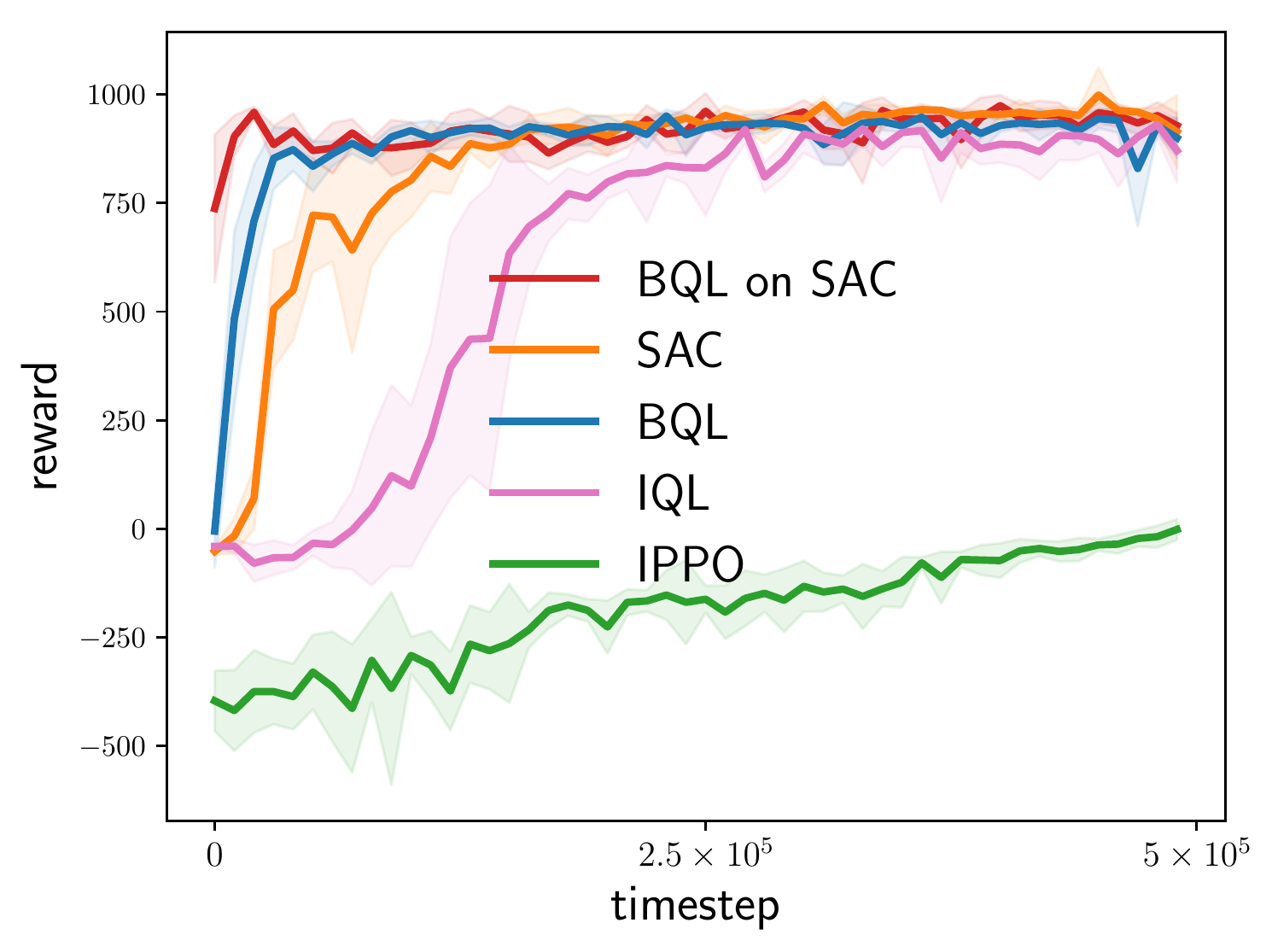}
		\caption{$2 \times 4$d Ant}	
	\end{subfigure}
	\caption{Learning curves of other base algorithms.}
	\label{fig:baseline}		
	\vspace{-0.50cm}
\end{figure*}

\section{Multiple Optimal Joint Policies}
\label{appendix:coordination}

\begin{figure*}[b]
	\centering
	\setlength{\abovecaptionskip}{3pt}
	\begin{subfigure}[t]{0.25\linewidth}
		\setlength{\abovecaptionskip}{0pt}
		\centering
		\raisebox{0.3\height}{\includegraphics[width=1\linewidth]{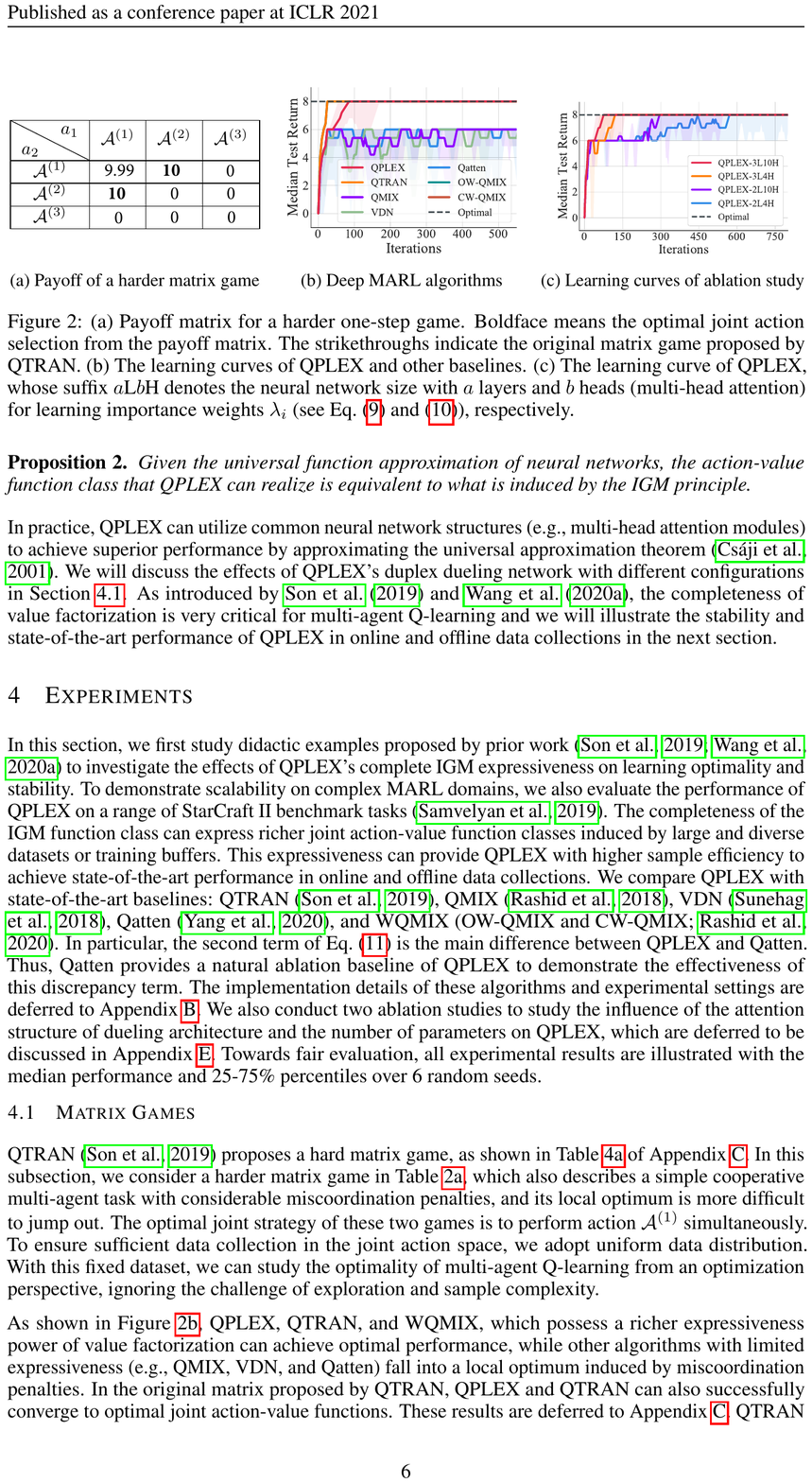}}
		\caption{matrix game}		
	\end{subfigure}
	\begin{subfigure}[t]{0.25\linewidth}
		\setlength{\abovecaptionskip}{0pt}
		\includegraphics[width=1\linewidth]{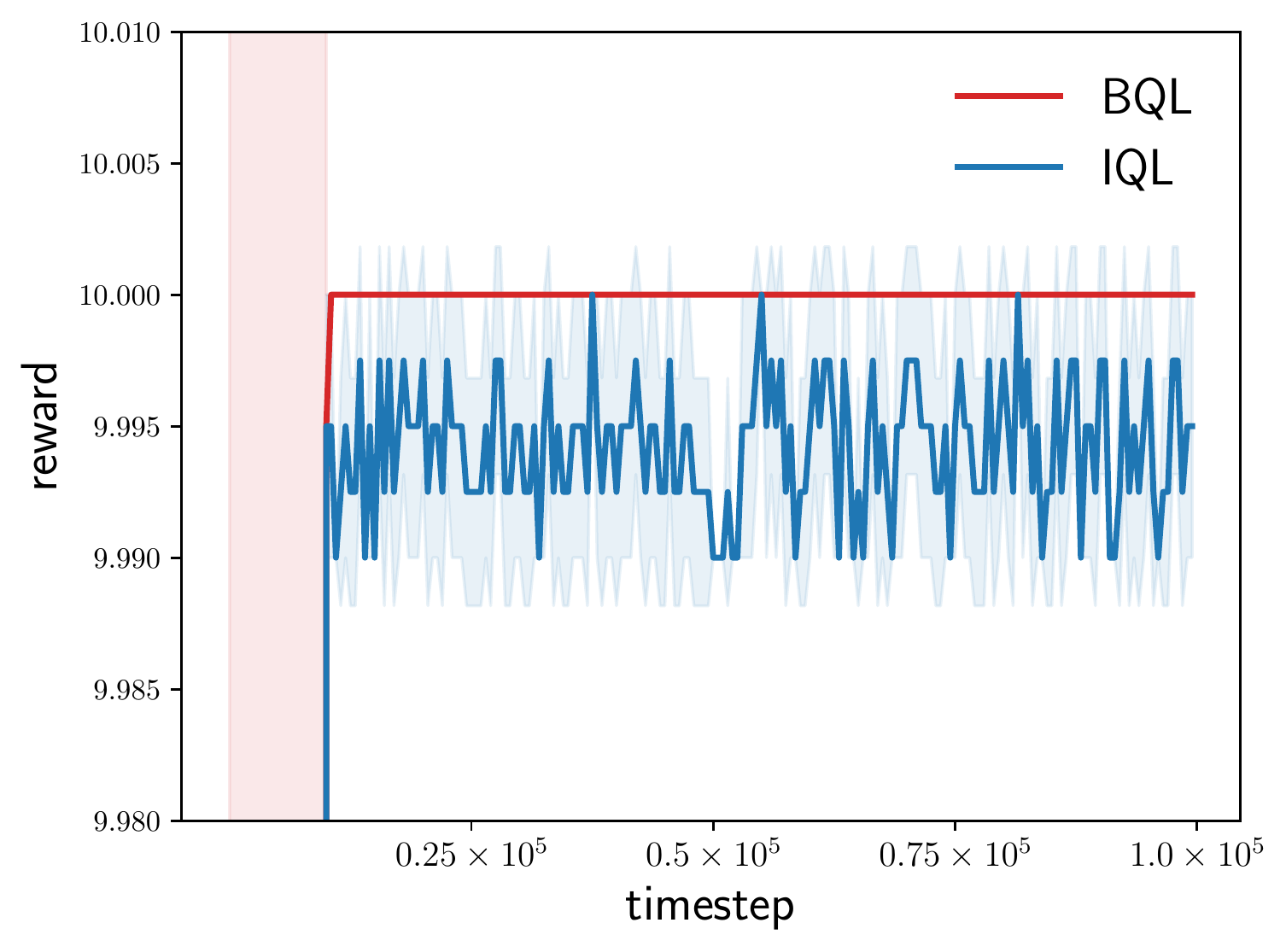}
		\caption{learning curves}		
	\end{subfigure}
	\caption{Learning curves on a one-stage matrix game with multiple optimal joint policies.}
	\label{fig:coordination}
	\vspace{-0.40cm}
\end{figure*}
We assume that there is only one optimal joint policy. With multiple optimal actions (with the max $Q_i(s,a_i)$), if each agent arbitrarily selects one of the optimal independent actions, the joint action might not be optimal. To address this, we use the simple technique proposed in I2Q \cite{jiangi2q}. Concretely, we set a performance tolerance $\varepsilon$ and introduce a fixed randomly initialized reward function $\hat{r}(s,s') \in (0,(1-\gamma)\varepsilon]$. Then all agents perform BQL to learn $\hat{Q}_i(s,a_i)$ of the shaped reward $r + \hat{r}$. Since $\hat{r}>0$, $\hat{Q}_i(s,a_i) > Q_i(s,a_i)$. In $\hat{Q}_i(s,a_i)$, the maximal contribution from $\hat{r}$ is $(1-\gamma)\varepsilon/(1-\gamma) = \varepsilon$, so the minimal contribution from $r$ is $\hat{Q}_i(s,a_i) - \varepsilon > Q_i(s,a_i) - \varepsilon$, which means that the maximal performance drop is $\varepsilon$ when selecting actions according to $\hat{Q}_i$. It is a small probability event to find multiple optimal joint policies on the reward function $r + \hat{r}$, because $\hat{r}(s,s')$ is randomly initialized. Thus, if $\varepsilon$ is set to be small enough, BQL can solve the task with multiple optimal joint policies. However, this technique is introduced to only remedy the assumption for theoretical results. Empirically, this is not required, because there is usually only one optimal joint policy in complex environments. In all experiments, we do not use the randomly initialized reward function for BQL and other baselines, so the comparison is fair.

We test the randomly initialized reward function on a one-stage matrix game with two optimal joint policies $(1,2)$ and $(2,1)$, as shown in Figure~\ref{fig:coordination}. If the agents independently select actions, they might choose the miscoordinated joint policies $(1,1)$ and $(2,2)$. IQL cannot converge, but BQL agents always select coordinated actions, though the value gap between the optimal policy and suboptimal policy is so small, which verifies the effectiveness of the randomly initialized reward.

\section{Hyperparameters}
\label{appendix:hyperparameters}

\begin{figure*}[t]
	\centering
	\setlength{\abovecaptionskip}{3pt}
	\includegraphics[width=0.3\linewidth]{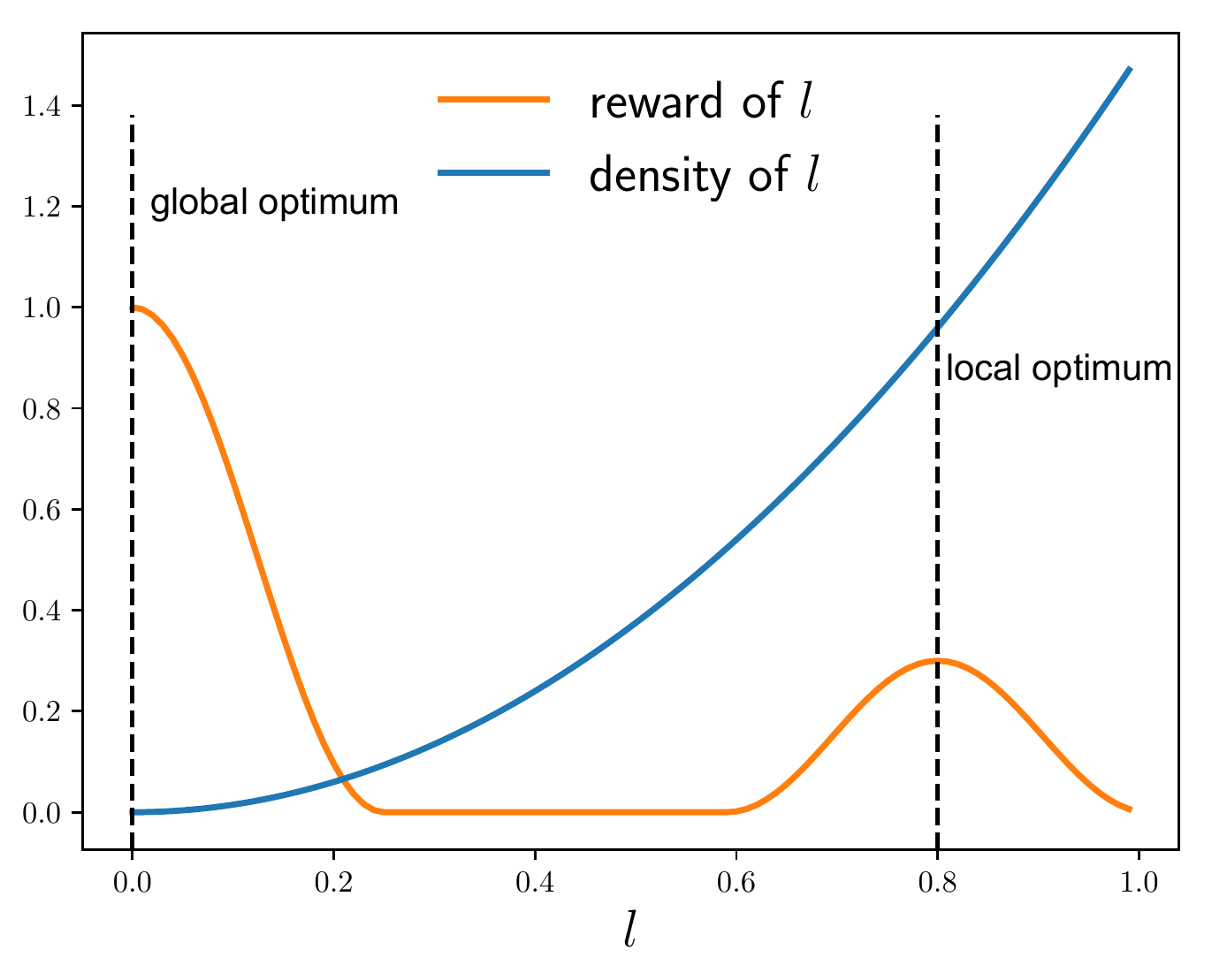}
	\caption{Curves of reward and density of $l=\sqrt{\frac{2}{3}\sum_{i=0}^{3}x_i^2}$ in MPE. We plot the density of uniform state distribution. There is only one global optimum, but the density of local optimum is high. So decentralized agents will easily learn the local optimal policies.}	
	\label{fig:density}	
	\vspace{-0.30cm}
\end{figure*}

In MPE-based differential games, the relationship between $r$ and $l$ is visualized in Figure~\ref{fig:density}.

In $2 \times 3$ Swimmer, there are two agents and each of them controls $3$ joints of ManyAgent Swimmer. In $6|2$ Ant, there are two agents. One of them controls $6$ joints, and one of them controls $2$ joints. And so on.

In MPE-based differential games and Multi-Agent MuJoCo, we adopt SpinningUp \citep{SpinningUp2018} implementation, the SOTA implementation of DDPG, and follow all hyperparameters in SpinningUp. The discount factor $\gamma = 0.99$, the learning rate is $0.001$ with Adam optimizer, the batch size is $100$, the replay buffer contains $5 \times 10^5$ transitions, the hidden units are $256$.

In SMAC, we adopt PyMARL \citep{samvelyan19smac} implementation and follow all hyperparameters in PyMARL. The discount factor $\gamma = 0.99$, the learning rate is $0.0005$ with RMSprop optimizer, the batch size is $32$ episodes, the replay buffer contains $5000$ episodes, the hidden units are $64$. We adopt the version SC2.4.10 of SMAC.

In GRF, we adopt PyMARL2 \citep{hu2021rethinking} implementation and follow all hyperparameters in PyMARL2. The discount factor $\gamma = 0.999$, the learning rate is $0.0005$ with Adam optimizer, the batch size is $128$ episodes, the replay buffer contains $2000$ episodes, the hidden units are $256$. We use simple115 feature (a 115-dimensional vector summarizing many aspects of the game) as observation instead of RGB image.

In MPE-based differential games, we set $\lambda=0.01$. In Multi-Agent MuJoCo, we set $\lambda=0.5$, and in SMAC, we set $\lambda=0.85$ for 2c\_vs\_64zg and $\lambda=0.8$ for other tasks. In GRF, we set $\lambda=0.1$ for 3\_vs\_1 with keeper and $\lambda=0.4$ for counterattack easy.

\end{document}